\documentclass{article} 
\usepackage{iclr2024_conference,times}


\usepackage{amsmath,amsfonts,bm}









\def\eqref#1{equation~\ref{#1}}









\def\1{\bm{1}}










\DeclareMathAlphabet{\mathsfit}{\encodingdefault}{\sfdefault}{m}{sl}
\SetMathAlphabet{\mathsfit}{bold}{\encodingdefault}{\sfdefault}{bx}{n}













\iclrfinalcopy

%

\oddsidemargin .5in    
\evensidemargin .5in
\marginparwidth 0.07 true in
\topmargin -0.625in
\addtolength{\headsep}{-0.15in}
\textheight 9.15 true in       
\textwidth 5.5 true in        
\widowpenalty=10000
\clubpenalty=10000

\usepackage[utf8]{inputenc} 
\usepackage[T1]{fontenc}    
\usepackage{hyperref}       
\usepackage{url}            
\usepackage{booktabs}       
\usepackage{amsfonts}       
\usepackage{nicefrac}       
\usepackage{microtype}      
\usepackage{xcolor}         
\usepackage{amsmath,amssymb}
\usepackage{amsthm,latexsym,paralist}
\usepackage{graphics}
\usepackage{algorithm}
\usepackage{algpseudocode}
\usepackage{multirow}
\usepackage{bm}

\usepackage{placeins}

\usepackage{xspace}
\urlstyle{same}
\usepackage{soul}
\usepackage{wrapfig}
\usepackage[english]{babel}

\usepackage{todonotes}

\setlength{\marginparwidth}{3cm}

\newtheorem{theorem}{Theorem}
\newtheorem{lemma}[theorem]{Lemma}
\newtheorem{proposition}[theorem]{Proposition}
\newtheorem{corollary}[theorem]{Corollary}
\theoremstyle{definition}
\newtheorem{definition}{Definition}

\theoremstyle{remark}

 \setlength{\abovecaptionskip}{0pt}
 \setlength{\belowcaptionskip}{1pt}
 \setlength{\floatsep}{-5pt}
\DeclareMathSizes{10}{9}{6}{5}




\makeatletter
\DeclareRobustCommand\onedot{\futurelet\@let@token\@onedot}
\def\@onedot{\ifx\@let@token.\else.\null\fi\xspace}

\def\eg{\emph{e.g}\onedot} 
\def\ie{\emph{i.e}\onedot}

\newcommand{\printfnsymbol}[1]{%
  \textsuperscript{\@fnsymbol{#1}}%
}
\makeatother

\title{Active Test-Time Adaptation: Theoretical Analyses and An Algorithm}

%

\author{%
  Shurui Gui\thanks{Equal contributions} \\
  Texas A\&M University\\
  College Station, TX 77843 \\
  \texttt{shurui.gui@tamu.edu} \\
  \And
  Xiner Li\printfnsymbol{1} \\
  Texas A\&M University\\
  College Station, TX 77843 \\
  \texttt{lxe@tamu.edu} \\
  \And
  Shuiwang Ji \\
  Texas A\&M University\\
  College Station, TX 77843 \\
  \texttt{sji@tamu.edu} \\
}

\begin{document}

\maketitle


\vspace{-0.5cm}

\begin{abstract}
  Test-time adaptation (TTA) addresses distribution shifts for streaming test data in unsupervised settings. Currently, most TTA methods can only deal with minor shifts and rely heavily on heuristic and empirical studies. 
  To advance TTA under domain shifts, we propose the novel problem setting of active test-time adaptation (ATTA) that integrates active learning within the fully TTA setting.
  We provide a learning theory analysis, demonstrating that incorporating limited labeled test instances enhances overall performances across test domains with a theoretical guarantee. We also present a sample entropy balancing for implementing ATTA while avoiding catastrophic forgetting (CF). We introduce a simple yet effective ATTA algorithm, known as SimATTA, using real-time sample selection techniques. Extensive experimental results confirm consistency with our theoretical analyses and show that the proposed ATTA method yields substantial performance improvements over TTA methods while maintaining efficiency and shares similar effectiveness to the more demanding active domain adaptation (ADA) methods. Our code is available at \url{https://github.com/divelab/ATTA}.
\end{abstract}
\vspace{-0.6cm}

\section{Introduction}
\vspace{-0.2cm}

Deep learning has achieved remarkable success across various fields, attaining high accuracy in numerous applications~\citep{krizhevsky2017imagenet, simonyan2014very}. Nonetheless, 
When training and test data follow distinct distributions, models often experience significant performance degradation during test. This phenomenon, known as the distribution shift or out-of-distribution (OOD) problem, is extensively studied within the context of both domain generalization (DG)~\citep{gulrajani2020search, koh2021wilds, gui2022good} and domain adaptation (DA)~\citep{ganin2016domain, sun2016deep}. 
While these studies involve intensive training of models with considerable generalization abilities towards target domains,
they overlook an important application property; namely, continuous adaptivity to real-time streaming data under privacy, resource, and efficiency constraints.
This gap leads to the emergence of test-time adaptation (TTA) tasks, targeting on-the-fly adaptation to continuous new domains during the test phase or application deployment. 
The study of TTA encompasses two main categories; namely test-time training (TTT) methods~\citep{sun2020test, liu2021ttt++} and fully test-time adaptation (FTTA)~\citep{niu2023towards, wang2021tent}.
The TTT pipeline incorporates retraining on the source data, whereas FTTA methods 
adapt arbitrary pre-trained models to the given test mini-batch by conducting entropy minimization, without access to the source data.
Nevertheless, most TTA methods can only handle corrupted distribution shifts~\citep{hendrycks2019benchmarking} (e.g., Gaussian noise,) and rely heavily on human intuition or empirical studies.
To bridge this gap, our paper focuses on tackling significant domain distribution shifts in real time with theoretical insights.

We investigate FTTA, which is more general and adaptable than TTT, particularly under data accessibility, privacy, and efficiency constraints.
Traditional FTTA aims at adapting a pre-trained model to streaming test-time data from diverse domains under unsupervised settings. 
However, recent works~\citep{lin2022zin, pearl2009causality} prove that \textit{it is theoretically infeasible to achieve OOD generalization without extra information} such as environment partitions. Since utilizing environment partitions requires heavy pretraining, contradicting the nature of TTA,
we are motivated to incorporate extra information in a different way, \ie, integrating a limited number of labeled test-time samples to alleviate distribution shifts, following the active learning (AL) paradigm~\citep{settles2009active}. 
To this end, we propose the novel problem setting of active test-time adaptation (ATTA) by incorporating AL within FTTA.
ATTA faces two major challenges; namely, catastrophic forgetting (CF)~\citep{kemker2018measuring, li2017learning} and real-time active sample selection. 
CF problem arises when a model continually trained on a sequence of domains experiences a significant performance drop on previously learned domains, due to the inaccessibility of the source data and previous test data.
Real-time active sample selection requires AL algorithms to select informative samples from a small buffer of streaming test data for annotation, without a complete view of the test distribution.

In this paper, we first formally define the ATTA setting. We then provide its foundational analysis under the learning theory's paradigm to guarantee the mitigation of distribution shifts and avoid CF. Aligned with our empirical validations, while the widely used entropy minimization~\citep{wang2021tent, grandvalet2004semi} can cause CF, it can conversely become the key to preventing CF problems with our sample selection and balancing techniques. Building on the analyses, we then introduce a simple yet effective ATTA algorithm, SimATTA, incorporating balanced sample selections and incremental clustering. Finally, we conducted a comprehensive experimental study to evaluate the proposed ATTA settings with three different settings in the order of low to high requirement restrictiveness, \ie, TTA, Enhanced TTA, and Active Domain Adaptation (ADA). Intensive experiments indicate that ATTA jointly equips with the efficiency of TTA and the effectiveness of ADA, rendering an uncompromising real-time distribution adaptation direction. 

\textbf{Comparison to related studies.} 
Compared to TTA methods, ATTA requires extra active labels, but the failure of TTA methods (Sec.~\ref{sec:the-failure-of-test-time-adaptation}) and the theoretical proof of \citet{lin2022zin, pearl2009causality} justify its necessity and rationality. Compared to active online learning, ATTA focuses on lightweight real-time fine-tuning without round-wise re-trainings as \citet{Saran2023} and emphasizes the importance of CF avoidance instead of resetting models and losing learned distributions. In fact, active online learning is partially similar to our enhanced TTA setting (Sec.~\ref{sec:efficiency-enhanced-tta-setting-comparisons}. Compared to ADA methods~\citep{prabhu2021active, ning2021multi}, ATTA does not presuppose access to source data, model parameters, or pre-collected target samples. Furthermore, without this information, ATTA can still perform on par with ADA methods (Sec.~\ref{sec:comparisons-to-active-domain-adaptation}).
The recent source-free active domain adaptation (SFADA) method SALAD~\citep{kothandaraman2023salad} still requires access to model parameter gradients, pre-collected target data, and training of additional networks. Our ATTA, in contrast, with non-regrettable active sample selection on streaming data, is a much lighter and more realistic approach distinct from ADA and SFADA. More related-work discussions are provided in Appx.~\ref{app:related-works}.



\setlength{\textfloatsep}{2pt}

\vspace{-0.2cm}
\section{The Active Test-Time Adaptation Formulation}
\vspace{-0.2cm}
TTA methods aim to solve distribution shifts by dynamically optimizing a pre-trained model based on streaming test data. 
We introduce the novel problem setting of Active Test-Time Adaptation (ATTA), which incorporates active learning during the test phase. In ATTA, the model continuously selects the most informative instances from the test batch to be labeled by an explicit or implicit oracle (\eg, human annotations, self-supervised signals) and subsequently learned by the model, aiming to improve future adaptations. Considering the labeling costs in real-world applications, a ``budget'' is established for labeled test instances. The model must effectively manage this budget distribution and ensure that the total number of label requests throughout the test phase does not surpass the budget.


We now present a formal definition of the ATTA problem.
Consider a pre-trained model $f(x;\phi)$ with parameters $\phi$ trained on the source dataset $D_S = {(x, y)}_{|D_S|}$, with each data sample $x \in \mathcal{X}$ and a label $y \in \mathcal{Y}$. We aim to adapt model parameters $\theta$, initialized as $\phi$, to an unlabeled test-time data stream. The streaming test data exhibit distribution shifts from the source data and varies continuously with time, forming multiple domains to which we must continuously adapt. The test phase commences at time step $t=1$ and the streaming test data is formulated in batches.
The samples are then actively selected, labeled (by the oracle) and collected as $D_{te}(t) = ActAlg(U_{te}(t))$, where $ActAlg(\cdot)$ denotes an active selection/labeling algorithm. The labeled samples $D_{te}(t)$ are subsequently incorporated into the ATTA training set $D_{tr}(t)$. Finally, we conclude time step $t$ by performing ATTA training, updating model parameters $\theta(t)$ using $D_{tr}(t)$, with $\theta(t)$ initialized as the previous final state $\theta(t-1)$.

\begin{definition}[The ATTA problem]
Given a model $f(x;\theta)$, with parameters $\theta$, initialized with parameters $\theta(0)=\phi$ obtained by pre-training on source domain data, and streaming test data batches $U_{te}(t)$ continually changing over time, the ATTA task aims to optimize the model at any time step $t$ (with test phase commencing at $t=1$) as
\vspace{-0.1cm}
\begin{equation}
    \theta(t)^* := \operatorname*{argmin}_{\theta(t)}(\mathbb{E}_{(x,y,t)\in D_{tr}(t)}[\ell_{CE}(f(x;\theta(t)), y)]+\mathbb{E}_{(x,t)\in U_{te}(t)}[\ell_{U}(f(x;\theta(t)))]),
\end{equation}
\vspace{-0.2cm}
\begin{equation}
    \mbox{where}\ \ \ \ \ \ D_{tr}(t)= 
    \begin{cases}
        \emptyset,& t=0\\
        D_{tr}(t-1) \cup D_{te}(t), & t\geq1,
    \end{cases}
    \quad\textit{ s.t. }\quad |D_{tr}(t)| \leq \mathcal{B},
\end{equation}
$D_{te}(t) = ActAlg(U_{te}(t))$ is actively selected and labeled, $\ell_{CE}$ is the cross entropy loss, $\ell_{U}$ is an unsupervised learning loss, and $\mathcal{B}$ is the budget. 
\end{definition}

\vspace{-0.3cm}
\section{Theoretical Studies}
\vspace{-0.2cm}
In this section, we conduct an in-depth theoretical analysis of TTA based on learning theories. We mainly explore two questions: How can significant distribution shifts be effectively addressed under the TTA setting? How can we simultaneously combat the issue of CF? Sec.~\ref{sec:attabound} provides a solution with theoretical guarantees to the first question, namely, active TTA (ATTA), along with the conditions under which distribution shifts can be well addressed. Sec.~\ref{sec:theorycata} answers the second question with an underexplored technique, \ie, selective entropy minimization, building upon the learning bounds established in Sec.~\ref{sec:attabound}.
We further validate these theoretical findings through experimental analysis. 
Collectively, we present a theoretically supported ATTA solution that effectively tackles both distribution shift and CF.
\vspace{-0.2cm}
\subsection{Alleviating Distribution Shifts through Active Test-Time Adaptation}\label{sec:attabound}
\vspace{-0.1cm}
Traditional TTA is performed in unsupervised or self-supervised context. In contrast, ATTA introduces supervision into the adaptation setting. 
In this subsection, we delve into learning bounds and establish generalization bounds to gauge the efficacy of ATTA in solving distribution shifts. 
We scrutinize the influence of active learning and 
evidence that the inclusion of labeled test instances markedly enhances overall performances across incremental test domains.

Following~\citet{kifer2004detecting},
we examine statistical guarantees for binary classification.
A hypothesis is a function $h: \mathcal{X}\rightarrow\{0, 1\}$, which can serve as the prediction function within this context. In the ATTA setting, the mapping of $h$ varies with time as $h(x,t)$.
We use $\mathcal{H}\Delta\mathcal{H}$-distance following~\citet{ben2010theory},
which essentially provides a measure to quantify the distribution shift between two distributions $\mathcal{D}_1$ and $\mathcal{D}_2$, and can also be applied between datasets.
The probability that an estimated hypothesis ${h}$ disagrees with the true labeling function $g: \mathcal{X}\rightarrow\{0, 1\}$ according to distribution $\mathcal{D}$ is defined as 
$\epsilon({h}(t),g)=\mathbb{E}_{({x})\sim\mathcal{D}}[|{h}({x},t)-g({x})|],$
which we also refer to as the error or risk $\epsilon({h}(t))$. 
While the source data is inaccessible under ATTA settings, we consider the existence of source dataset $D_S$ for accurate theoretical analysis. Thus, we initialize $D_{tr}$ as $D_{tr}(0)=D_S$.
For every time step $t$, 
the test and training data can be expressed as
$
U_{te}(t)\text{ and }D_{tr}(t)={D}_S\cup D_{te}(1)\cup D_{te}(2)\cup \cdots\cup D_{te}(t).
$
Building upon two lemmas (provided in Appx.~\ref{app:further-theoretical-studies}), we establish bounds on domain errors under the ATTA setting when minimizing the empirical weighted error using the hypothesis $h$ at time $t$.
\begin{theorem}\label{mainthm} Let $H$ be a hypothesis class of VC-dimension $d$. 
At time step $t$, for ATTA data domains ${D}_S,U_{te}(1),\allowbreak\cdots, U_{te}(t),\cdots$, $S_i$ are unlabeled samples of size $m$ sampled from each of the $t+1$ domains respectively.
The total number of samples in ${D}_{tr}(t)$ is $N$ and the ratio of sample numbers in each component is $\bm{\lambda}=(\lambda_0,\cdots,\lambda_{t})$.
If $\hat{h}(t)\in \mathcal{H}$ minimizes the empirical weighted error $\hat{\epsilon}_{\bm{w}}(h(t))$ with the weight vector $\bm{w}=(w_0,\cdots,w_{t})$ on ${D}_{tr}(t)$, and $h^*_j(t) = \arg\min_{h \in \mathcal{H}} \epsilon_j(h(t))$ is the optimal hypothesis on the $j$th domain,
then for any $\delta \in (0, 1)$, with probability of at least $1 - \delta$, we have
\begin{align*}
    {\epsilon}_j(\hat{h}(t)) &\le \epsilon_j(h^*_j(t)) + 2\sum_{i=0,i\neq j}^{t} w_i  \left( \frac{1}{2} \hat{d}_{\mathcal{H}\Delta\mathcal{H}}(S_i, S_j) + 2 \sqrt{\frac{2d \log(2m) + \log \frac{2}{\delta}}{m}}+ \gamma_i\right) + 2C,
\end{align*}
where $C=\sqrt{ \left(\sum_{i=0}^{t}\frac{w_i^2}{\lambda_i}\right)\left(\frac{ d\log(2N) - \log(\delta)}{2N} \right) }$ and $\gamma_i=\min_{h \in \mathcal{H}} \{ \epsilon_i(h(t)) + \epsilon_j(h(t)) \}$. For future test domains $j=t+k$ ($k>0$), assuming $k'=\operatorname*{argmin}_{k'\in \{0,1,...t\}}d_{\mathcal{H}\Delta\mathcal{H}}(D(k'), U_{te}(t+k))$ and $\operatorname*{min}\allowbreak d_{\mathcal{H}\Delta\mathcal{H}}\allowbreak(D(k'),\allowbreak U_{te}(t+k)) \le \delta_D$, where $0 \le \delta_D \ll +\infty$, then $\forall\delta$, with probability of at least $1 - \delta$, we have
\begin{align*}
    \epsilon_{t+k}(\hat{h}(t))\le\epsilon_{t+k}(h^*_{t+k}(t))+\sum_{i=0}^{t}w_i\left(\hat{d}_{\mathcal{H}\Delta\mathcal{H}}(S_i,S_{k'})+4\sqrt{\frac{2d\log(2m)+\log\frac{2}{\delta}}{m}}+\delta_D +2\gamma_i\right)+2C.  
\end{align*}
\end{theorem}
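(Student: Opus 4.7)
The plan is to adapt the classical multi-source domain-adaptation learning bound of \citet{ben2010theory} to the incremental ATTA data-collection protocol. The two technical ingredients are provided by the lemmas in Appx.~\ref{app:further-theoretical-studies}: (i) a uniform concentration of the weighted empirical risk $\hat{\epsilon}_{\bm w}(h)$ around the weighted true risk $\epsilon_{\bm w}(h)=\sum_i w_i\,\epsilon_i(h)$, which produces the quantity $C$; and (ii) the Kifer--Ben-David VC-type estimator bounding $|d_{\mathcal{H}\Delta\mathcal{H}}(D_i,D_j)-\hat{d}_{\mathcal{H}\Delta\mathcal{H}}(S_i,S_j)|$ by $4\sqrt{(2d\log(2m)+\log(2/\delta))/m}$. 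The third workhorse I would use is the standard identity $|\epsilon_i(h)-\epsilon_j(h)|\le \tfrac{1}{2}\,d_{\mathcal{H}\Delta\mathcal{H}}(D_i,D_j)+\gamma_i$, with $\gamma_i=\min_{h\in\mathcal{H}}\{\epsilon_i(h)+\epsilon_j(h)\}$ exactly as in the theorem statement.

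For the seen-domain case $j\in\{0,\dots,t\}$, I would insert $\epsilon_{\bm w}$ on both sides of the excess-risk decomposition, bounding $\epsilon_j(\hat h(t)) - \epsilon_j(h^*_j(t))$ by a telescoping triangle inequality that splits into (a) two gaps of the form $|\epsilon_j(h) - \epsilon_{\bm w}(h)|$ evaluated at $\hat h(t)$ and at $h^*_j(t)$; (b) the empirical ERM gap $\hat{\epsilon}_{\bm w}(\hat h(t)) - \hat{\epsilon}_{\bm w}(h^*_j(t))\le 0$; and (c) two weighted-risk concentration gaps controlled by lemma~(i), which jointly contribute $2C$. Each of the (a)-type terms is then bounded by $\sum_{i\ne j} w_i\bigl(\tfrac{1}{2}\,d_{\mathcal{H}\Delta\mathcal{H}}(D_i,D_j)+\gamma_i\bigr)$ via the standard identity, and lemma~(ii) replaces every true divergence by its empirical version $\hat{d}_{\mathcal{H}\Delta\mathcal{H}}(S_i,S_j)$ at the additive cost of $2\sqrt{(2d\log(2m)+\log(2/\delta))/m}$ per pair. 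A union bound over the $O(t)$ divergence estimates and the weighted-risk concentration event delivers the first display, with the factor $2$ in the summation absorbing the two occurrences of the (a)-type gap.

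For the future-domain bound I would route the argument through the proxy $D(k')$: apply the just-derived seen-domain bound with $j$ replaced by $k'$, and then relate $\epsilon_{t+k}(h)$ to $\epsilon_{k'}(h)$ through the $\mathcal{H}\Delta\mathcal{H}$ identity, where the true divergence is at most $\delta_D$ by assumption, paid once for $\hat h(t)$ and once for $h^*_{t+k}(t)$. Using the triangle inequality on the $\mathcal{H}\Delta\mathcal{H}$ pseudo-metric, the pairwise divergence $d_{\mathcal{H}\Delta\mathcal{H}}(D_i,U_{te}(t+k))$ is controlled by $d_{\mathcal{H}\Delta\mathcal{H}}(D_i,D(k'))+\delta_D$; lemma~(ii) then converts the former into the empirical $\hat{d}_{\mathcal{H}\Delta\mathcal{H}}(S_i,S_{k'})$, doubling the previous concentration term into $4\sqrt{(2d\log(2m)+\log(2/\delta))/m}$ because both (a)-type endpoints pay the full empirical-to-true gap. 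I expect the main obstacle to be constant bookkeeping: making sure the $\tfrac{1}{2}$ factors, the aggregated $\gamma_i$ terms, the proxy residual $\delta_D$ (which should appear once per summand with weight $w_i$, not as $2\delta_D$), and the $\delta/\ell$ splitting in the simultaneous union bound aggregate into exactly the displayed form rather than a slightly looser variant, and to verify that the argument remains valid despite $D_S$ being only a theoretical device captured by the initialization $D_{tr}(0)$.
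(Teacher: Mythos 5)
Your proposal follows essentially the same route as the paper's proof: the paper's Lemma~\ref{lemma1} is exactly your (a)-type gap with the empirical-divergence substitution already folded in, Lemma~\ref{lemma2} together with the Kifer--Ben-David theorem gives the concentration term $C$, the chain domain-gap / concentration / ERM-optimality / concentration / domain-gap produces the $2C$ and the factor $2$ on the summation, and the future-domain case is likewise handled by a triangle inequality through the proxy $k'$ paying $\delta_D$ once per summand (the paper applies it to the empirical distances, $\hat d_{\mathcal{H}\Delta\mathcal{H}}(S_i,S_{t+k})\le\hat d_{\mathcal{H}\Delta\mathcal{H}}(S_i,S_{k'})+\delta_D$, rather than at the population level as you suggest, but the resulting display is identical). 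Your bookkeeping worries resolve exactly as displayed; the only discrepancy is that the paper does not split $\delta$ across the several probabilistic events, so no union-bound inflation of the constants appears in its statement.
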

\vspace{-0.2cm}
The adaptation performance on a test domain is majorly bounded by the composition of (labeled) training data, estimated distribution shift, and ideal joint hypothesis performance, which correspond to $C$, $\hat{d}_{\mathcal{H}\Delta\mathcal{H}}(S_i, S_j)$, and $\gamma_i$, respectively. The ideal joint hypothesis error $\gamma_i$ gauges the inherent adaptability between domains.
Further theoretical analysis are in Appx.~\ref{app:further-theoretical-studies}.

If we consider the multiple test data distributions as a single test domain, \ie, $\bigcup_{i=1}^{t}{U}_{te}(i)$,
Thm.~\ref{mainthm} can be reduced into bounds for the source domain error $\epsilon_S$ and test domain error $\epsilon_T$. Given the optimal test/source hypothesis $h^*_T(t) = \arg\min_{h \in \mathcal{H}} \epsilon_T(h(t))$ and $h^*_S(t) = \arg\min_{h \in \mathcal{H}} \epsilon_S(h(t))$, we have
\vspace{-0.15cm}
\begin{subequations}\label{eq: 4.3}
\begin{align}
    \lvert{\epsilon}_T(\hat{h}(t))-\epsilon_T(h^*_T(t))\rvert &\le  w_0 A + \sqrt{ \frac{w_0^2}{\lambda_0}+\frac{(1-w_0)^2}{1-\lambda_0}}B,\\
    \lvert{\epsilon}_S(\hat{h}(t))-\epsilon_S(h^*_S(t))\rvert &\le  (1-w_0) A + 
    \sqrt{ \frac{w_0^2}{\lambda_0}+\frac{(1-w_0)^2}{1-\lambda_0}}B,
\end{align}
\end{subequations}
where the distribution divergence term $A=\hat{d}_{\mathcal{H}\Delta\mathcal{H}}(S_0, S_T) + 4 \sqrt{\frac{2d \log(2m) + \log \frac{2}{\delta}}{m}}+ 2\gamma$, the empirical gap term $B=2 \sqrt{ \frac{ d\log(2N) - \log(\delta)}{2N} }$, $S_T$ is sampled from $\bigcup_{i=1}^{t}{U}_{te}(i)$, and $\gamma=\min_{h \in \mathcal{H}} \{ \epsilon_0(h(t)) + \epsilon_T(h(t)) \}$.  
Our learning bounds demonstrates the trade-off between the small amount of budgeted test-time data and the large amount of less relevant source data. 
Next, we provide an approximation of the condition necessary to achieve optimal adaptation performance, which is calculable from finite samples and can be readily applied in practical ATTA scenarios.
Following Eq.~(\ref{eq: 4.3}.a), with approximately $B=c_1\sqrt{d/N}$, the optimal value $w_0^*$ to tighten the test error bound is a function of $\lambda_0$ and $A$:
\vspace{-0.2cm}
\begin{equation}\label{Eq:optimal_w}
\begin{aligned}
   w_0^*= \lambda_0 - \sqrt{\frac{A^2N}{c_1^2d-A^2N\lambda_0(1-\lambda_0)}},\quad for \quad \lambda_0\ge 1-\frac{d}{A^2N},
\end{aligned}
\end{equation}
where $c_1$ is a constant. Note that $\lambda_0\ge 1-\frac{d}{A^2N}$ should be the satisfied condition in practical ATTA settings, where the budget is not sufficiently big while the source data amount is relatively large. 
The following theorem offers a direct theoretical guarantee that ATTA reduces the error bound on test domains in comparison to TTA without the integration of active learning.
\begin{theorem}\label{thm2} Let $H$ be a hypothesis class of VC-dimension $d$.
For ATTA data domains ${D}_S,U_{te}(1),\allowbreak U_{te}(2),\allowbreak\cdots, U_{te}(t)$, considering the test-time data as a single test domain $\bigcup_{i=1}^{t}{U}_{te}(i)$, 
if $\hat{h}(t)\in \mathcal{H}$ minimizes the empirical weighted error $\hat{\epsilon}_{\bm{w}}(h(t))$ with the weight vector $\bm{w}$ on ${D}_{tr}(t)$,
let the test error be upper-bounded with $\lvert{\epsilon}_T(\hat{h}(t))-\epsilon_T(h^*_T(t))\rvert\le EB_T(\bm{w},\bm{\lambda},N,t)$. 
Let $\bm{w}'$ and $\bm{\lambda}'$ be the weight and sample ratio vectors when no active learning is included, \ie, $\bm{w}'$ and $\bm{\lambda}'$ s.t. $w'_0=\lambda'_0=1$ and $w'_i=\lambda'_i=0$ for $i\ge 1$, then for any $\bm{\lambda}\neq \bm{\lambda}'$, there exists $\bm{w}$ s.t. 
\begin{equation}
EB_T(\bm{w},\bm{\lambda},N,t)<EB_T(\bm{w'},\bm{\lambda'},N,t).
\end{equation}
\end{theorem}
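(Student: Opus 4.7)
\textbf{Proof proposal for Theorem~\ref{thm2}.} My plan is to reduce the claim to a pointwise comparison between two closed-form expressions, using the reduced upper bound already derived in Eq.~(\ref{eq: 4.3}.a). Since the theorem collapses all test data into a single test domain, I may take
\begin{equation*}
EB_T(\bm{w},\bm{\lambda},N,t) \;=\; w_0 A \;+\; \sqrt{\frac{w_0^2}{\lambda_0}+\frac{(1-w_0)^2}{1-\lambda_0}}\,B
\end{equation*}
with $A$ and $B$ exactly as in Eq.~(\ref{eq: 4.3}), and both quantities independent of the choice of $\bm{w}$ and $\bm{\lambda}$. So the whole theorem becomes an analytic statement about this single function of $w_0$ and $\lambda_0$.

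Next I would evaluate the baseline side. Under $\bm{w}'=\bm{\lambda}'=(1,0,\dots,0)$, the components $w_0=\lambda_0=1$ give $w_0 A=A$, while the second radicand contains the formally indeterminate term $(1-w_0)^2/(1-\lambda_0)=0/0$; I would resolve this by the standard convention that a component with zero sample mass contributes zero to the weighted-error variance (equivalently, take the limit along $w_i=\lambda_i\to 0$ for $i\ge 1$). This yields
\[
EB_T(\bm{w}',\bm{\lambda}',N,t)=A+B.
\]

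With the baseline in hand, it remains to exhibit, for any $\bm{\lambda}\neq\bm{\lambda}'$, a $\bm{w}$ that strictly beats $A+B$. The key observation is that $\bm{\lambda}\neq\bm{\lambda}'$ (together with $\sum_i\lambda_i=1$) forces $\lambda_0<1$, so the simple choice $w_0=\lambda_0$ (and $w_i=\lambda_i$ for $i\ge 1$) is admissible. Plugging in, the radical collapses:
\[
\sqrt{\frac{\lambda_0^2}{\lambda_0}+\frac{(1-\lambda_0)^2}{1-\lambda_0}}=\sqrt{\lambda_0+(1-\lambda_0)}=1,
\]
giving $EB_T(\bm{w},\bm{\lambda},N,t)=\lambda_0 A+B<A+B=EB_T(\bm{w}',\bm{\lambda}',N,t)$, where the strict inequality uses $A>0$ and $\lambda_0<1$. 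This already suffices; a tighter witness is obtained by instead taking the minimizer $w_0^\star$ from Eq.~(\ref{Eq:optimal_w}) whenever the condition $\lambda_0\ge 1-d/(A^2N)$ holds, which I would mention as a strengthening.

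The only delicate point is the boundary interpretation at $\bm{\lambda}'$, since the term $(1-w_0)^2/(1-\lambda_0)$ is literally $0/0$ when evaluated as written; I expect this to be the main obstacle to making the argument fully rigorous, and I would address it by defining $EB_T$ as the continuous extension obtained by treating empty components as contributing nothing to the variance term, which is consistent with how the bound in Eq.~(\ref{eq: 4.3}) is derived from Thm.~\ref{mainthm}. Everything else reduces to the one-line algebraic comparison above.
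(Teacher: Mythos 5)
Your proposal is correct and follows essentially the same route as the paper's own proof: both reduce to the closed-form bound $w_0A+\sqrt{w_0^2/\lambda_0+(1-w_0)^2/(1-\lambda_0)}\,B$, take the witness $w_0=\lambda_0<1$ so the radical collapses to $1$, and compare $\lambda_0A+B$ against the no-active-learning value $A+B$ using $A>0$. Your explicit handling of the $0/0$ term at $\bm{w}'=\bm{\lambda}'$ is a minor point of extra care that the paper glosses over, but it does not change the argument.
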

\vspace{-0.2cm}
Therefore, the incorporation of labeled test instances in ATTA theoretically enhances the overall performance across test domains, substantiating the significance of the ATTA setting in addressing distribution shifts.
All proofs are provided in Appx.~\ref{app:proofs}.
Finally, we support the theoretical findings with experimental analysis and show the numerical results of applying the principles on real-world datasets, as shown in Fig.~\ref{fig:ATTA1}. 
For rigorous analysis, note that our theoretical results rest on the underlying condition that $N$ should at least be of the same scale as $d$, according to the principles of VC-dimension theory. The empirical alignment of our experiments with the theoretical framework can be attributed to the assumption that fine-tuning a model is roughly equivalent to learning a model with a relatively small $d$.
Experiment details and other validations can be found in Appx.~\ref{app:empirical-validations-for-theory}.

\begin{figure}[t]
    \centering
    \vspace{-0.9cm}
    \scalebox{0.18}{
    \includegraphics{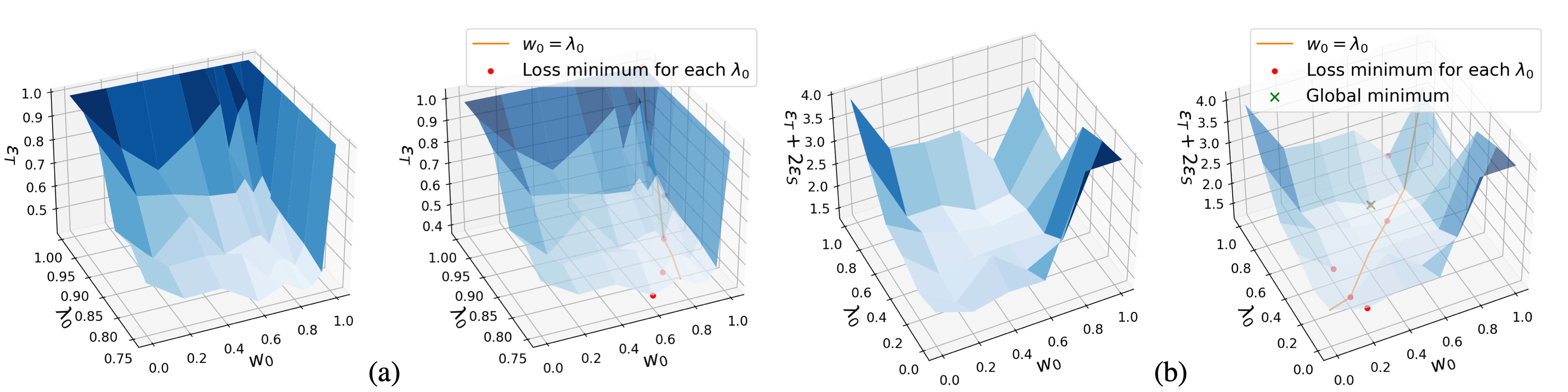}}\vspace{-0.3cm}
    \caption{\textbf{(a) Empirical validation of Thm.~\ref{mainthm}.} We train a series of models on $N=2000$ samples from the PACS~\citep{li2017deeper} dataset given different $\lambda_0$ and $w_0$ and display the test domain loss of each model. Red points are the test loss minimums given a fixed $\lambda_0$. The orange line is the reference where $w_0=\lambda_0$. We observe that $w_0$ with loss minimums are located closed to the orange line but slightly smaller than $\lambda_0$, which validates our findings in Eq.~(\ref{Eq:optimal_w}). \textbf{(b) Empirical analysis with an uncertainty balancing.} Given source pre-trained models, we fine-tune the models on 500 samples with different $\lambda_0$ and $w_0$, and display the combined error surface of test and source error. Although a small $\lambda_0$ is good for test domain error, it can lead to non-trivial source error exacerbation. Therefore, we can observe that the global loss minimum (green X) locates in a relatively high-$\lambda_0$ region.}
    \label{fig:ATTA1}
\end{figure}

\vspace{-0.2cm}
\subsection{Mitigating Catastrophic Forgetting with Balanced Entropy Minimization}\label{sec:theorycata}
\vspace{-0.2cm}

Catastrophic forgetting (CF), within the realm of Test-Time Adaptation (TTA), principally manifests as significant declines in overall performance, most notably in the source domain. Despite the lack of well-developed learning theories for analyzing training with series data, empirical studies have convincingly illustrated the crucial role of data sequential arrangement in model learning, thereby accounting for the phenomenon of CF.
Traditionally, the mitigation of CF in adaptation tasks involves intricate utilization of source domain data. However, under FTTA settings, access to the source dataset is unavailable, leaving the problem of CF largely unexplored in the data-centric view.

\begin{wraptable}[13]{r}{0.48\textwidth}
\vspace{-0.5cm}
\centering
\caption{\textbf{Correlation analysis of high/low entropy samples and domains.} We use a source pre-trained model to select samples with lowest/highest entropy, and 1.retrain the model on 2000 samples; 2.fine-tune the model on 300 samples. We report losses on source/test domains for each setting, showing that low-entropy samples form distributions close to the source domain.}\label{tab:entropy-domain}
\vspace{0.1cm}
\scalebox{0.85}{
\begin{tabular}{lcccc}\toprule
\multirow{2}{*}{Sample type} &\multicolumn{2}{c}{Retrain} &\multicolumn{2}{c}{Fine-tune} \\\cmidrule{2-5}
&$\epsilon_S$ &$\epsilon_T$ &$\epsilon_S$ &$\epsilon_T$ \\\midrule
Low entropy &0.5641 &0.8022 &0.0619 &1.8838 \\
High entropy &2.5117 &0.3414 &0.8539 &0.7725 \\
\bottomrule
\end{tabular}}
\vspace{-0.3cm}
\end{wraptable}

To overcome this challenge of source dataset absence, we explore the acquisition of ``source-like'' data. In TTA scenarios, it is generally assumed that the amount of source data is considerably large.
We also maintain this assumption in ATTA, practically assuming the volume of source data greatly surpasses the test-time budget.
As a result, we can safely assume that the pre-trained model is well-trained on abundant source domain data $D_S$. 
Given this adequately trained source model, we can treat it as a ``true'' source data labeling function $f(x; \phi)$. The model essentially describes a distribution,
$\mathcal{D}_{\phi,S}(\mathcal{X},\mathcal{Y}) = \{(x, \hat{y})\in(\mathcal{X},\mathcal{Y})\mid\hat{y} = f(x; \phi),x\in D_S\}.$
The entropy of the model prediction is defined as 
$H(\hat{y})=-\sum_c p(\hat{y}_c) \log{p(\hat{y}_c)}, \quad\hat{y} = f(x; \phi)$,
where $c$ denotes the class. 
Lower entropy indicates that the model assigns high probability to one of the classes, suggesting a high level of certainty or confidence in its prediction, which can be interpreted as the sample being well-aligned or fitting closely with the model's learned distribution. In other words, the model recognizes the sample as being similar to those it was trained on. Thus entropy can be used as an indicator of how closely a sample $x$ aligns with the model distribution $\mathcal{D}_{\phi,S}$.
Since the model distribution is approximately the source distribution, selecting (and labeling) low-entropy samples using $f(x; \phi)$ essentially provides an estimate of sampling from the source dataset.
Therefore, in place of the inaccessible $D_S$, we can feasibly include the source-like dataset into the ATTA training data at each time step $t$:
\begin{equation}
D_{\phi,S}(t)=\{(x, f(x; \phi))| x \in U_{te}(t), H(f(x; \phi)) < e_l\},
\end{equation}
where $e_l$ is the entropy threshold.
The assumption that $D_{\phi,S}(t)$ is an approximation of $D_S$ can be empirically validated, as shown by the numerical results on PACS in Tab.~\ref{tab:entropy-domain}. In contrast, high-entropy test samples typically deviate more from the source data, from which we select $D_{te}(t)$ for active labeling.
Following the notations in Thm.~\ref{mainthm}, we are practically minimizing the empirical weighted error of hypothesis $h(t)$ as
\vspace{-0.2cm}
\begin{equation}
\hat{\epsilon}_{\bm{w}}'(h(t)) = \sum_{j=0}^{t} w_j \hat{\epsilon}_j(h(t)) = \frac{w_0}{\lambda_0N} \sum_{x\in D_{\phi,S}(t)} \lvert h({x},t) - f(x; \phi) \rvert+\sum_{j=1}^{t} \frac{w_j}{\lambda_jN} \sum_{x,y\in D_{te}(j)} \lvert h({x},t) - y \rvert.
\end{equation}
By substituting $D_S$ with $D_{\phi,S}(t)$ in Thm.~\ref{mainthm}, the bounds of Thm.~\ref{mainthm} continue to hold for the test domains.
In the corollary below, we bound the source error for practical ATTA at each time step $t$.
\begin{corollary}\label{cor: SEM}
At time step $t$, for ATTA data domains $D_{\phi,S}(t),U_{te}(1),\allowbreak U_{te}(2),\allowbreak\cdots, U_{te}(t)$, $S_i$ are unlabeled samples of size $m$ sampled from each of the $t+1$ domains respectively, and $S_S$ is unlabeled samples of size $m$ sampled from ${D}_{S}$.
If $\hat{h}(t)\in \mathcal{H}$ minimizes $\hat{\epsilon}_{\bm{w}}'(h(t))$ while other conditions remain identical to Thm.~\ref{mainthm}, then
\vspace{-0.2cm}
\begin{align*}
    {\epsilon}_S(\hat{h}(t)) &\le \epsilon_S(h^*_S(t)) + \sum_{i=0}^{t} w_i  \left( \hat{d}_{\mathcal{H}\Delta\mathcal{H}}(S_i, S_S) + 4 \sqrt{\frac{2d \log(2m) + \log \frac{2}{\delta}}{m}}+ 2\gamma_i\right) + 2C,
\end{align*}
with probability at least $1 - \delta$,
where $C$ follows Thm.~\ref{mainthm} and $\gamma_i=\min_{h \in \mathcal{H}} \{ \epsilon_i(h(t)) + \epsilon_S(h(t)) \}$.
\end{corollary}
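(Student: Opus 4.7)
The plan is to show that Corollary~\ref{cor: SEM} follows essentially by re-instantiating Theorem~\ref{mainthm} with the true source domain $D_S$ playing the role of the target. Specifically, I would apply Theorem~\ref{mainthm} to the collection of training components $D_{\phi,S}(t),U_{te}(1),\ldots,U_{te}(t)$ (indexed $i=0,1,\ldots,t$) and set the evaluation domain to $j=S$. Because $S$ itself is not one of the training indices, the constraint $i\neq j$ inside the summation becomes vacuous, and the sum $\sum_{i=0,i\neq j}^{t}$ collapses to $\sum_{i=0}^{t}$. Absorbing the leading factor of $2$ into the parenthesized terms recovers exactly the bound stated in the corollary, with the divergence $\hat{d}_{\mathcal{H}\Delta\mathcal{H}}(S_i,S_S)$, the VC complexity term $4\sqrt{(2d\log(2m)+\log(2/\delta))/m}$, the joint-hypothesis error $\gamma_i$, and the weighted concentration term $2C$.

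Concretely, I would invoke the two appendix lemmas underpinning Theorem~\ref{mainthm}: the first bounds $|\epsilon_{j}(h)-\epsilon_{\bm{w}}(h)|$ by a weighted sum of pairwise $\mathcal{H}\Delta\mathcal{H}$-divergences and joint errors, and the second provides a uniform-convergence bound relating $\epsilon_{\bm{w}}(h)$ to its empirical counterpart $\hat{\epsilon}_{\bm{w}}(h)$ through VC-dimension and the weighted effective sample size $(\sum_i w_i^2/\lambda_i)/N$. Chaining the two in the same order as in the proof of Theorem~\ref{mainthm}, with $j=S$, and combining with the ERM property $\hat{\epsilon}_{\bm{w}}'(\hat{h}(t))\le\hat{\epsilon}_{\bm{w}}'(h_S^{\ast}(t))$, yields the inequality for ${\epsilon}_S(\hat{h}(t))$.

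The main obstacle is that $D_{\phi,S}(t)$ is pseudo-labeled by $f(\cdot;\phi)$ rather than by the true labeling function $g$, so the empirical objective $\hat{\epsilon}_{\bm{w}}'$ is not literally the same empirical risk used in Theorem~\ref{mainthm}. I would handle this by treating the $i=0$ component as a domain whose ``true'' labeler on $D_{\phi,S}(t)$ is $f(\cdot;\phi)$, which is a fixed element of $\mathcal{H}$; then the uniform-convergence step applies verbatim to the pseudo-labeled empirical risk. The resulting gap between the pseudo-label expectation $\mathbb{E}_{x\sim D_{\phi,S}}[|h(x)-f(x;\phi)|]$ and the true-label expectation $\epsilon_0(h)$ is, by the triangle inequality, at most $\mathbb{E}_{x\sim D_{\phi,S}}[|f(x;\phi)-g(x)|]$, and since $f(\cdot;\phi)\in\mathcal{H}$ this discrepancy is automatically absorbed into $\gamma_0=\min_{h\in\mathcal{H}}\{\epsilon_0(h(t))+\epsilon_S(h(t))\}$ (because the minimum is witnessed by $h=f(\cdot;\phi)$, which is near-optimal on both the source-like and true source distributions by the low-entropy selection rule and the assumption that $\phi$ is well-trained on $D_S$).

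Finally, I would verify that the concentration constant $C$ is unchanged: it depends only on the weight vector $\bm{w}$, sample ratios $\bm{\lambda}$, total sample count $N$, VC-dimension $d$, and $\delta$, none of which are altered by the substitution of $D_{\phi,S}(t)$ for $D_S$. Thus, the bound in Corollary~\ref{cor: SEM} follows by the same derivation pipeline as Theorem~\ref{mainthm}, with the only nontrivial bookkeeping occurring in the treatment of the pseudo-labeled component, which contributes the $i=0$ summand on the right-hand side.
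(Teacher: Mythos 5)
Your proposal is correct and follows essentially the same route as the paper's own proof: the paper likewise applies Lemma~\ref{lemma1} with the evaluation domain set to the true source $D_S$ (so the $i\neq j$ restriction is vacuous and the sum runs over all $i=0,\dots,t$), then chains the uniform-convergence bound giving $C$ with the ERM property $\hat{\epsilon}'_{\bm{w}}(\hat{h}(t))\le\hat{\epsilon}'_{\bm{w}}(h^*_S(t))$, arriving at exactly the stated inequality. Your explicit bookkeeping for the pseudo-labeled component, treating $f(\cdot;\phi)$ as the ground-truth labeler of $D_{\phi,S}(t)$ so that any label discrepancy is carried by $\gamma_0$, is handled only implicitly in the paper but is consistent with its definition of $\hat{\epsilon}'_{\bm{w}}$ and does not constitute a different argument.
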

Further analysis and proofs are in Appx.~\ref{app:further-theoretical-studies} and~\ref{app:proofs}.
The following corollary provides direct theoretical support that our strategy conditionally reduces the error bound on the source domain.
\begin{corollary}\label{cor: cata}
At time step $t$, for ATTA data domains $D_{\phi,S}(t),U_{te}(1),\allowbreak U_{te}(2),\allowbreak\cdots, U_{te}(t)$,
suppose that $\hat{h}(t)\in \mathcal{H}$ minimizes $\hat{\epsilon}{\bm{w}}'(h(t))$ under identical conditions to Thm.~\ref{thm2}.
Let's denote the source error upper bound with $\lvert{\epsilon}_S(\hat{h}(t))-\epsilon_S(h^*_S(t))\rvert\le EB_S(\bm{w},\bm{\lambda},N,t)$. 
Let $\bm{w}'$ and $\bm{\lambda}'$ be the weight and sample ratio vectors when $D_{\phi,S}(t)$ is not included, \ie, $\bm{w}'$ and $\bm{\lambda}'$ s.t. $w'_0=\lambda'_0=0$. If $\hat{d}_{\mathcal{H}\Delta\mathcal{H}}(D_S, D_{\phi,S}(t))<\hat{d}_{\mathcal{H}\Delta\mathcal{H}}(D_S, \bigcup_{i=1}^{t}{U}_{te}(i))$, then for any $\bm{\lambda}\neq\bm{\lambda}'$, there exists $\bm{w}$ s.t. 
\begin{equation}
EB_S(\bm{w},\bm{\lambda},N,t)<EB_S(\bm{w'},\bm{\lambda'},N,t).
\end{equation}
\vspace{-0.8cm}
\end{corollary}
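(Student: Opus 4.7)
}

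The plan is to mirror the reduction-and-minimization argument used to pass from Thm.~\ref{mainthm} to Eq.~(\ref{eq: 4.3}) and Thm.~\ref{thm2}, but applied to the source-error bound furnished by Cor.~\ref{cor: SEM}. First, I would collapse the $t$ test-time domains into a single effective test domain $U_T := \bigcup_{i=1}^{t} U_{te}(i)$, so that the training set of practical ATTA splits into two components: the source-like dataset $D_{\phi,S}(t)$ (index $0$, weight $w_0$, ratio $\lambda_0$) and $U_T$ (aggregate weight $1-w_0$, aggregate ratio $1-\lambda_0$). Applying Cor.~\ref{cor: SEM} in this two-component form yields a closed expression of the same shape as Eq.~(\ref{eq: 4.3}b),
\begin{equation*}
EB_S(\bm{w},\bm{\lambda},N,t) \;=\; w_0\, A_\phi \;+\; (1-w_0)\, A_T \;+\; \sqrt{\tfrac{w_0^2}{\lambda_0}+\tfrac{(1-w_0)^2}{1-\lambda_0}}\,B,
\end{equation*}
where $A_\phi$ packages $\hat{d}_{\mathcal{H}\Delta\mathcal{H}}(D_{\phi,S}(t),D_S)$ together with the constant VC term and the ideal joint hypothesis error, $A_T$ packages the analogous quantities for $U_T$ vs.\ $D_S$, and $B$ is the empirical-gap term from Thm.~\ref{mainthm}.

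Next, I would evaluate this expression at the baseline $(\bm{w}',\bm{\lambda}')$ with $w_0'=\lambda_0'=0$, giving $EB_S(\bm{w}',\bm{\lambda}',N,t) = A_T + B$. For the claimed $\bm{\lambda}\neq \bm{\lambda}'$ (so $\lambda_0\in(0,1)$), I would treat $EB_S$ as a smooth function of $w_0$ on $[0,1]$ and compute its derivative at $w_0=0$. A short calculation gives
\begin{equation*}
\left.\tfrac{\partial}{\partial w_0}EB_S\right|_{w_0=0} \;=\; A_\phi - A_T \;-\; \tfrac{B}{\sqrt{1-\lambda_0}}\cdot\tfrac{1}{\sqrt{1+\tfrac{\lambda_0}{(1-\lambda_0)}\cdot 0}},
\end{equation*}
which is strictly negative under the hypothesis $\hat{d}_{\mathcal{H}\Delta\mathcal{H}}(D_S,D_{\phi,S}(t))<\hat{d}_{\mathcal{H}\Delta\mathcal{H}}(D_S,U_T)$ (so that $A_\phi<A_T$ up to the $\gamma$ terms). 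Continuity then guarantees a nonempty interval $(0,w_0^\star]$ on which $EB_S(\bm{w},\bm{\lambda},N,t)<EB_S(\bm{w}',\bm{\lambda}',N,t)$; picking any $\bm{w}$ with $w_0$ in this interval (and the remaining coordinates summing to $1-w_0$, matching $\bm{\lambda}$ in proportion) exhibits the required witness. An explicit minimizer $w_0^\star(\lambda_0)$ can be obtained by mimicking Eq.~(\ref{Eq:optimal_w}) with $A$ replaced by $A_T-A_\phi$, which also provides the tightest improvement.

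The main obstacle is that the divergence hypothesis controls only the $\hat{d}_{\mathcal{H}\Delta\mathcal{H}}$ summands, whereas $A_\phi$ and $A_T$ also contain the ideal joint hypothesis errors $\gamma$ and a common VC constant; a strict inequality $A_\phi<A_T$ does not follow automatically. I would address this in the same spirit as the proofs of Thm.~\ref{mainthm} and Thm.~\ref{thm2}: absorb the VC constants (identical on both sides, so they cancel), and assume the $\gamma$ terms are either small or bounded uniformly, so that the divergence gap dominates. A second subtlety is that $(\bm{w}',\bm{\lambda}')$ sets $\lambda_0=0$, making the $\sqrt{w_0^2/\lambda_0}$ term formally singular; the right interpretation, as in Thm.~\ref{thm2}, is to view $EB_S(\bm{w}',\bm{\lambda}',N,t)$ as the bound of the reduced problem without the $D_{\phi,S}(t)$ component (simply $A_T+B$), and to analyze the perturbation only at points with $\lambda_0>0$, which is guaranteed by $\bm{\lambda}\neq\bm{\lambda}'$. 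Once these two points are handled, the existence of the desired $\bm{w}$ follows by the strictly negative directional derivative argument above.
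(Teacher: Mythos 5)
Your setup matches the paper's: collapse the test-time domains into a single $U_T$, write the source bound in the two-component form $EB_S(w_0,\lambda_0)=w_0A_\phi+(1-w_0)A_T+\sqrt{\tfrac{w_0^2}{\lambda_0}+\tfrac{(1-w_0)^2}{1-\lambda_0}}\,B$, interpret the baseline $w_0'=\lambda_0'=0$ as simply $A_T+B$, and handle the $\gamma$ versus $\gamma'$ mismatch by assuming the ideal joint errors are comparable so the divergence gap dominates --- all of this is exactly what the paper does. The divergence, and the problem, is in your existence step. The derivative of $EB_S$ in $w_0$ at $w_0=0$ is indeed $A_\phi-A_T-\tfrac{B}{\sqrt{1-\lambda_0}}<0$, but the value there is $EB_S(0,\lambda_0)=A_T+\tfrac{B}{\sqrt{1-\lambda_0}}$, which for any fixed $\lambda_0\in(0,1)$ is \emph{strictly above} the reference $EB_S(\bm{w}',\bm{\lambda}')=A_T+B$. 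A negative derivative at a point that already exceeds the baseline does not, by continuity, give a neighborhood where the bound drops \emph{below} the baseline; in fact for sufficiently small $w_0>0$ the inequality $EB_S(w_0,\lambda_0)<A_T+B$ fails, so your claimed interval $(0,w_0^\star]$ is not correct as stated. The local-perturbation argument compares against the wrong quantity.

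The fix is short and is precisely the paper's argument: instead of perturbing near $w_0=0$, choose $w_0=\lambda_0$. Then $\sqrt{\tfrac{w_0^2}{\lambda_0}+\tfrac{(1-w_0)^2}{1-\lambda_0}}=\sqrt{\tfrac{(w_0-\lambda_0)^2}{\lambda_0(1-\lambda_0)}+1}$ attains its minimum value $1$, so $EB_S(\lambda_0,\lambda_0)=\lambda_0A_\phi+(1-\lambda_0)A_T+B$, and since $\lambda_0>0$ and $A_\phi<A_T$ (under the divergence hypothesis with $\gamma\approx\gamma'$), the strict convex combination gives $\lambda_0A_\phi+(1-\lambda_0)A_T<A_T$, hence $EB_S(\bm{w},\bm{\lambda},N,t)<A_T+B=EB_S(\bm{w}',\bm{\lambda}',N,t)$. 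Your remark about mimicking Eq.~(\ref{Eq:optimal_w}) to get the exact minimizer is fine as a refinement, but on its own it does not establish the comparison with the baseline either; the explicit witness $w_0=\lambda_0$ is what closes the argument.
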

Corollary~\ref{cor: cata} validates that the selected low-entropy samples can mitigate the CF problem under the assumption that these samples are source-like, which is also empirically validated in Fig.~\ref{fig:ATTA1}. Note that our strategy employs entropy minimization in a selective manner, aiming to solve CF rather than the main adaptation issue. While many FTTA works use entropy minimization to adapt across domains without guarantees, our use is more theoretically-sound.

\begin{figure}[t]
    \centering
    \vspace{-0.9cm}
    \resizebox{0.8\textwidth}{!}{
    \includegraphics{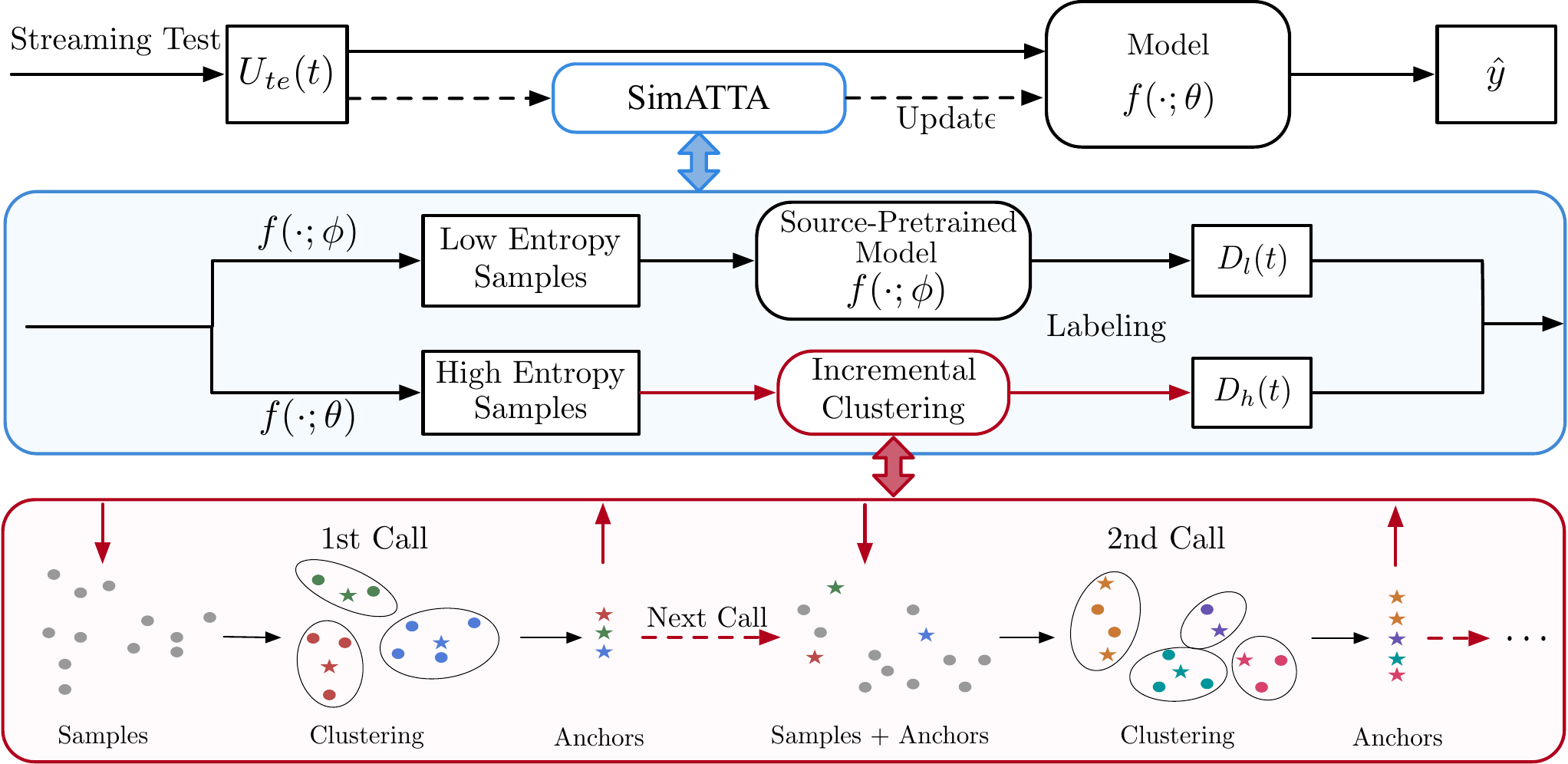}}
    \caption{Overview of the SimATTA framework.}
    \label{fig:ATTA}
\end{figure}

\vspace{-0.3cm}
\section{An ATTA Algorithm}
\vspace{-0.2cm}
Building on our theoretical findings, we introduce a simple yet effective ATTA method, known as SimATTA, that innovatively integrates incremental clustering and selective entropy minimization techniques, as illustrated in Fig.~\ref{fig:ATTA}. We start with an overview of our methodology, including the learning framework and the comprehensive sample selection strategies. We then proceed to discuss the details of the incremental clustering technique designed for real-time sample selections.

\vspace{-0.2cm}
\subsection{Algorithm Overview}
\vspace{-0.2cm}

Let $(x, y)$ be a labeled sample and $f(\cdot; \theta)$ be our neural network, where $\hat{y} = f(x; \theta)$ and $\theta$ represents the parameters. We have a model pre-trained on source domains with the pre-trained parameters $\phi$.
We initialize model parameters as $\theta(0) = \phi$ and aim to adapt the model $f(\cdot; \theta)$ in real-time. During the test phase, the model continuously predicts labels for streaming-in test data and concurrently gets fine-tuned. 
We perform sample selection to enable active learning.
As discussed in Sec.~\ref{sec:theorycata},
we empirically consider informative high-entropy samples for addressing distribution shifts and source-like low-entropy samples to mitigate CF.
As shown in Alg.~\ref{al:1}, at each time step $t$, we first partition unlabeled test samples $U_{te}(t)$ into high entropy and low entropy datasets, $U_h(t)$ and $U_l(t)$, using an entropy threshold.
The source-pretrained model $f(\cdot; \phi)$ is frozen to predict pseudo labels for low entropy data. We obtain labeled low-entropy data $D_l(t)$ by labeling $U_l(t)$ with $f(\cdot; \phi)$ and combining it with $D_l(t-1)$. In contrast, the selection of high-entropy samples for active labeling is less straightforward. Since the complete test dataset is inaccessible for analyzing the target domain distribution, real-time sample selection is required. We design an incremental clustering sample selection technique to reduce sample redundancy and increase distribution coverage, detailed in Sec.~\ref{sec:incremental_clustering}. The incremental clustering algorithm outputs the labeled test samples $D_h(t)$, also referred to as anchors, given $D_h(t-1)$ and $U_h(t)$.
After sample selection, 
the model undergoes test-time training using the labeled test anchors $D_h(t)$ and pseudo-labeled source-like anchors $D_l(t)$. Following the analyses in Sec.~\ref{sec:attabound}, the training weights and sample numbers should satisfy $\bm{w}(t) \approx \bm{\lambda}(t)$ for $D_h(t)$ and $D_l(t)$ for optimal results. The analyses and results in Sec.~\ref{sec:theorycata} further indicate that \textit{balancing the source and target ratio is the key to mitigating CF}. However, when source-like samples significantly outnumber test samples, the optimal $\bm{w}(t)$ for test domains can deviate from $\bm{\lambda}(t)$ according to Eq.~(\ref{Eq:optimal_w}). 



\vspace{-0.2cm}
\subsection{Incremental Clustering}\label{sec:incremental_clustering}
\vspace{-0.2cm}

We propose incremental clustering, a novel continual clustering technique designed to select informative samples in unsupervised settings under the ATTA framework. The primary goal of this strategy is to store representative samples for distributions seen so far. Intuitively, we apply clusters to cover all seen distributions while adding new clusters to cover newly seen distributions. During this process with new clusters added, old clusters may be merged due to the limit of the cluster budget. Since clusters cannot be stored efficiently, we store the representative samples of clusters, named anchors, instead. In this work, we adopt weighted $K$-means~\citep{krishna1999genetic} as our base clustering method due to its popularity and suitability for new setting explorations. 


When we apply clustering with new samples, a previously selected anchor should not weigh the same as new samples since the anchor is a representation of a cluster, \ie, a representation of many samples. Instead, the anchor should be considered as a barycenter with a weight of the sum of its cluster's sample weights. For a newly added cluster, its new anchor has the weight of the whole cluster. For clusters containing multiple old anchors, \ie, old clusters, the increased weights are distributed equally among these anchors. These increased weights are contributed by new samples that are close to these old anchors. Intuitively, this process of clustering is analogous to the process of planet formation. Where there are no planets, new planets (anchors) will be formed by the aggregation of the surrounding material (samples). Where there are planets, the matter is absorbed by the surrounding planets. This example is only for better understanding without specific technical meanings.
\setlength{\textfloatsep}{2pt}
\begin{algorithm}[!t]
    \caption{ \textsc{SimATTA: A simple ATTA algorithm}}
    \label{al:1}
    \begin{algorithmic}[1]
        \Require A fixed source pre-trained model $f(\cdot; \phi)$ and a real-time adapting model $f(\cdot; \theta(t))$ with $\theta(0)=\phi$. Streaming test data $U_{te}(t)$ at time step $t$. Entropy of predictions $H(\hat{y})=-\sum_c p(\hat{y}_c) \log{p(\hat{y}_c)}$. Low entropy and high entropy thresholds $e_l$ and $e_h$. The number of cluster centroid budget $NC(t)$ at time step $t$. Centroid increase number $k$. Learning step size $\eta$. 

        \For{$t=1,\ldots,T$}
            \State Model inference on $U_{te}(t)$ using $f(\cdot;\theta(t-1))$.
            \State $D_l(t) \gets D_l(t-1) \cup \{(x, f(x; \phi))| x \in U_{te}(t), H(f(x; \phi)) < e_l\}$
            \State $U_h(t) \gets \{x | x \in U_{te}(t), H(f(x; \theta)) > e_h\}$
            \State $D_h(t) \gets D_h(t-1) \cup \{(x, y) | \forall x\in \mbox{IC}(D_h(t-1), U_h(t), NC(t)), y = \mbox{Oracle}(x)\}$
            \State $\bm{\lambda}(t) \gets |D_l(t)|/ (|D_l(t)| + |D_h(t)|), |D_h(t)| / (|D_l(t)| + |D_h(t)|)$
            \State $\bm{w}(t) \gets \mbox{GetW}(\bm{\lambda}(t))$    \Comment{Generally, $\mbox{GetW}(\bm{\lambda}(t)) = \bm{\lambda}(t)$ is a fair choice.}
            \State $\theta(t) \gets \theta(t-1)$
            \For{$(x_l, y_l)$ in $D_l$ and $(x_h, y_h)$ in $D_h$}
                \State $\theta(t) \gets \theta(t) - \eta w_0\nabla\ell_{CE}(f(x_l; \theta(t)), y_l) - \eta (1-w_0) \nabla\ell_{CE}(f(x_h; \theta(t)), y_h)$
            \EndFor
            \State $NC(t+1) \gets \mbox{UpdateCentroidNum}(NC(t))$         \Comment{Naive choice: $NC(t+1) \gets NC(t) + k$.}
        \EndFor
    \end{algorithmic}
\end{algorithm}

Specifically, we provide the detailed Alg.~\ref{al:2} for incremental clustering. In each iteration, we apply weighted K-Means for previously selected anchors $D_\text{anc}$ and the new streaming-in unlabeled data $U_\text{new}$. We first extract all sample features using the model from the previous step $f(\cdot;\theta(t-1))$, and then cluster these weighted features. The initial weights of the new unlabeled samples are 1, while anchors inherit weights from previous iterations. After clustering, clusters including old anchors are old clusters, while clusters only containing new samples are newly formed ones. For each new cluster, we select the centroid-closest sample as the new anchor to store. As shown in line 10 of Alg.~\ref{al:2}, for both old and new clusters, we distribute the sample weights in this cluster as its anchors' weights.
With incremental clustering, although we can control the number of clusters in each iteration, we cannot control the number of new clusters/new anchors. This indirect control makes the increase of new anchors adaptive to the change of distributions, but it also leads to indirect budget control. Therefore, in experimental studies, we set the budget limit, but the actual anchor budget will not reach this limit. The overall extra storage requirement is $O(\mathcal{B})$ since the number of saved unlabeled samples is proportional to the number of saved labeled samples (anchors). 
\vspace{-0.3cm}
\section{Experimental Studies}\label{sec: exp}
\vspace{-0.2cm}

In this study, we aim to validate the effectiveness of our proposed method, as well as explore the various facets of the ATTA setting. Specifically, we design experiments around the following research questions: \textbf{RQ1:} Can TTA methods address domain distribution shifts? \textbf{RQ2:} Is ATTA as efficient as TTA? \textbf{RQ3:} How do the components of SimATTA perform? \textbf{RQ4:} Can ATTA perform on par with stronger Active Domain Adaptation (ADA) methods? We compare ATTA with three settings, \textit{TTA} (Tab.~\ref{tab:tta}), \textit{enhanced TTA} (Tab.~\ref{tab: tiny-imagenet-C} and \ref{tab:PACS-enhanced-tta}), and \textit{ADA} (Tab.~\ref{tab:ada}).

\noindent\textbf{Datasets.} To assess the OOD performance of the TTA methods, we benchmark them using datasets from DomainBed~\citep{gulrajani2020search} and \citet{Hendrycks2019a}. We employ PACS~\citep{li2017deeper}, VLCS~\citep{fang2013unbiased}, Office-Home~\citep{venkateswara2017deep}, and Tiny-ImageNet-C datasets for our evaluations. For each dataset, we designate one domain as the source domain and arrange the samples from the other domains to form the test data stream. For DomainBed datasets, we adopt two stream order strategies. The first order uses a domain-wise data stream, \ie, we finish streaming samples from one domain before starting streaming another domain. The second order is random, where we shuffle samples from all target domains and partition them into four splits 1, 2, 3, and 4, as shown in Tab.~\ref{tab:tta}. More dataset details are provided in Appx.~\ref{app:datasets}.

\noindent\textbf{Baselines.} For baseline models, we start with the common source-only models, which either utilize pre-calculated batch statistics (BN w/o adapt) or test batch statistics (BN w/ adapt). For comparison with other TTA methods, we consider four state-of-the-art TTA methods: Tent~\citep{wang2021tent}, EATA~\citep{niu2022efficient}, CoTTA~\citep{wang2022continual}, and SAR~\citep{niu2023towards}. The three of them except Tent provide extra design to avoid CF. To compare with ADA methods, we select algorithms that are partially comparable with our method, \ie, they should be efficient (\eg, uncertainty-based) without the requirements of additional networks. Therefore, we adopt random, entropy~\citep{wang2014new}, k-means~\citep{krishna1999genetic}, and CLUE~\citep{prabhu2021active} for comparisons.

\noindent\textbf{Settings.} 
For TTA, we compare with general TTA baselines in streaming adaptation using the two aforementioned data streaming orders, domain-wise and random. We choose P in PACS and C in VLCS as source domains. For domain-wise data stream, we use order $\mbox{A}\rightarrow\mbox{C}\rightarrow\mbox{S}$ for PACS and $\mbox{L}\rightarrow\mbox{S}\rightarrow\mbox{V}$ for VLCS. We report the real-time adaptation accuracy results for each split of the data stream, as well as the accuracy on each domain after all adaptations through the data stream (under ``post-adaptation'' columns).
Enhanced TTA is built on TTA with access to extra random sample labels. TTA baselines are further fine-tuned with these random samples. To further improve enhanced TTA, we use long-term label storage and larger unlabeled sample pools. To its extreme where the model can access the whole test set samples, the setting becomes similar to ADA, thus we also use ADA methods for comparisons. ADA baselines have access to all samples in the pre-collected target datasets but not source domain data, whereas our method can only access the streaming test data.


\vspace{-0.3cm}
\subsection{The failure of Test-Time Adaptation}\label{sec:the-failure-of-test-time-adaptation}
\vspace{-0.2cm}

\begin{table}[!tp]\centering
\vspace{-0.6cm}
\caption{\textbf{TTA comparisons on PACS and VLCS.} This table includes the two data stream mentioned in the dataset setup and reports performances in accuracy. Results that outperform all TTA baselines are highlighted in \textbf{bold} font. N/A denotes the adaptations are not applied on the source domain.}\label{tab:tta}
\resizebox{1\textwidth}{!}{
\begin{tabular}{lccccccccccccccccc}\toprule
\multirow{2}{*}{PACS} &\multicolumn{4}{c}{Domain-wise data stream} &\multicolumn{4}{c}{Post-adaptation} &\multicolumn{4}{c}{Random data stream} &\multicolumn{4}{c}{Post-adaptation} \\\cmidrule(r){2-5} \cmidrule(r){6-9} \cmidrule(r){10-13} \cmidrule(r){14-17}
&P &$\rightarrow$A$\rightarrow$ &$\rightarrow$C$\rightarrow$ &$\rightarrow$S &P &A &C &S &$\rightarrow$1$\rightarrow$ &$\rightarrow$2$\rightarrow$ &$\rightarrow$3$\rightarrow$ &$\rightarrow$4 &P &A &C &S \\\midrule
BN w/o adapt &99.70 &59.38 &28.03 &42.91 &99.70 &59.38 &28.03 &42.91 &43.44 &43.44 &43.44 &43.44 &99.70 &59.38 &28.03 &42.91 \\
BN w/ adapt &98.74 &68.07 &64.85 &54.57 &98.74 &68.07 &64.85 &54.57 &62.50 &62.50 &62.50 &62.50 &98.74 &68.07 &64.85 &54.57 \\
\midrule
Tent (steps=1) &N/A &67.29 &64.59 &44.67 &97.60 &66.85 &64.08 &42.58 &56.35 &54.09 &51.83 &48.58 &97.19 &63.53 &60.75 &41.56 \\
Tent (steps=10) &N/A &67.38 &57.85 &20.23 &62.63 &34.52 &40.57 &13.59 &47.36 &31.01 &22.84 &20.33 &50.78 &23.68 &20.95 &19.62 \\
EATA &N/A &67.04 &64.72 &50.27 &98.62 &66.50 &62.46 &48.18 &57.31 &56.06 &58.17 &59.78 &98.62 &69.63 &65.70 &54.26 \\
CoTTA &N/A &65.48 &62.12 &53.17 &98.62 &65.48 &63.10 &53.78 &56.06 &54.33 &57.16 &57.42 &98.62 &65.97 &62.97 &54.62 \\
SAR (steps=1) &N/A &66.75 &63.82 &49.58 &98.32 &66.94 &62.93 &45.74 &56.78 &56.35 &56.68 &56.70 &98.44 &68.16 &64.38 &52.53 \\
SAR (steps=10) &N/A &69.38 &68.26 &49.02 &96.47 &62.16 &56.19 &54.62 &53.51 &51.15 &51.78 &45.60 &94.13 &56.64 &56.02 &36.37 \\
\midrule
SimATTA ($\mathcal{B} \le 300$) &N/A &\textbf{76.86} &\textbf{70.90} &\textbf{75.39} &\textbf{98.80} &\textbf{84.47} &\textbf{82.25} &\textbf{81.52} &\textbf{69.47} &\textbf{76.49} &\textbf{82.45} &\textbf{82.22} &\textbf{98.98} &\textbf{84.91} &\textbf{83.92} &\textbf{86.00} \\
SimATTA ($\mathcal{B} \le 500$) &N/A &\textbf{77.93} &\textbf{76.02} &\textbf{76.30} &\textbf{98.62} &\textbf{88.33} &\textbf{83.49} &\textbf{83.74} &\textbf{68.46} &\textbf{78.22} &\textbf{80.91} &\textbf{85.49} &\textbf{99.16} &\textbf{86.67} &\textbf{84.77} &\textbf{87.71} \\
\bottomrule
\toprule
\multirow{2}{*}{VLCS}&\multicolumn{4}{c}{Domain-wise data stream} &\multicolumn{4}{c}{Post-adaptation} &\multicolumn{4}{c}{Random data stream} &\multicolumn{4}{c}{Post-adaptation} \\\cmidrule(r){2-5} \cmidrule(r){6-9} \cmidrule(r){10-13} \cmidrule(r){14-17}
&C &$\rightarrow$L$\rightarrow$ &$\rightarrow$S$\rightarrow$ &$\rightarrow$V &C &L &S &V &$\rightarrow$1$\rightarrow$ &$\rightarrow$2$\rightarrow$ &$\rightarrow$3$\rightarrow$ &$\rightarrow$4 &C &L &S &V \\\midrule
BN w/o adapt &100.00 &33.55 &41.10 &49.05 &100.00 &33.55 &41.10 &49.05 &41.23 &41.23 &41.23 &41.23 &100.00 &33.55 &41.10 &49.05 \\
BN w/ adapt &85.16 &37.31 &33.27 &52.16 &85.16 &37.31 &33.27 &52.16 &40.91 &40.91 &40.91 &40.91 &85.16 &37.31 &33.27 &52.16 \\
\midrule
Tent (steps=1) &N/A &38.55 &34.40 &53.88 &84.73 &43.86 &33.61 &53.11 &44.85 &44.29 &47.38 &44.98 &85.30 &43.49 &37.81 &53.35 \\
Tent (steps=10) &N/A &45.41 &31.44 &32.32 &42.54 &37.65 &27.79 &33.12 &46.13 &42.31 &43.51 &39.48 &52.01 &40.32 &33.64 &40.37 \\
EATA &N/A &37.24 &33.15 &52.58 &84.10 &37.69 &32.39 &52.49 &43.77 &42.48 &43.34 &41.55 &83.32 &36.67 &31.47 &52.55 \\
CoTTA &N/A &37.39 &32.54 &52.25 &82.12 &37.65 &33.12 &52.90 &43.69 &42.14 &43.21 &42.32 &81.98 &37.99 &33.52 &53.23 \\
SAR (steps=1) &N/A &36.18 &34.43 &52.46 &83.96 &39.72 &36.53 &52.37 &43.64 &43.04 &44.20 &41.93 &85.09 &40.70 &36.44 &53.02 \\
SAR (steps=10) &N/A &35.32 &34.10 &51.66 &82.12 &41.49 &33.94 &53.08 &43.56 &42.05 &42.53 &41.16 &85.09 &37.58 &33.12 &52.01 \\
\midrule
SimATTA ($\mathcal{B} \le 300$) &N/A &\textbf{62.61} &\textbf{65.08} &\textbf{74.38} &\textbf{99.93} &\textbf{69.50} &\textbf{66.67} &\textbf{77.34} &\textbf{62.33} &\textbf{69.33} &\textbf{73.20} &\textbf{71.93} &\textbf{99.93} &\textbf{69.43} &\textbf{72.46} &\textbf{80.39} \\
SimATTA ($\mathcal{B} \le 500$) &N/A &\textbf{63.52} &\textbf{68.01} &\textbf{76.13} &\textbf{99.51} &\textbf{70.56} &\textbf{73.10} &\textbf{78.35} &\textbf{62.29} &\textbf{70.45} &\textbf{73.50} &\textbf{72.02} &\textbf{99.43} &\textbf{70.29} &\textbf{72.55} &\textbf{80.18} \\
\bottomrule
\end{tabular}}
\end{table}

The failure of TTA methods on domain distribution shifts is one of the main motivations of the ATTA setting. As shown in Tab.~\ref{tab:tta}, 
TTA methods cannot consistently outperform even \textit{the simplest baseline "BN w/ adapt"} which uses test time batch statistics to make predictions, evidencing that current TTA methods cannot solve domain distribution shifts (RQ1). Additionally, Tent (step=10) exhibits significant CF issues, where "step=10" indicates 10 test-time training updates, \ie, 10 gradient backpropagation iterations. This failure of TTA methods necessitates the position of ATTA. In contrast, SimATTA, with a budget $\mathcal{B}$ less than 300, outperforms all TTA methods on both source and target domains by substantial margins. Moreover, compared to the source-only baselines, our method improves the target domain performances significantly with negligible source performance loss, showing that ATTA is a more practically effective setting for real-world distribution shifts.

\vspace{-0.2cm}
\subsection{Efficiency \& Enhanced TTA Setting Comparisons}\label{sec:efficiency-enhanced-tta-setting-comparisons}
\vspace{-0.2cm}

\begin{table}[!htp]\centering
\vspace{-0.6cm}
\caption{\textbf{Comparisons with Enhanced TTA on Tiny-ImageNet-C (severity level 5).}}\label{tab: tiny-imagenet-C}
\resizebox{\textwidth}{!}{
\begin{tabular}{lccccccccccccccccc}\toprule
\multirow{2}{*}{Tiny-ImageNet-C} &\multirow{2}{*}{Time (sec)} &\multicolumn{3}{c}{Noise} &\multicolumn{4}{c}{Blur} &\multicolumn{3}{c}{Weather} &\multicolumn{4}{c}{Digital} & \\\cmidrule{3-16}
& &Gauss. &Shot &Impul. &Defoc. &Glass &Motion &Zoom &Snow &Frost &Fog &Contr. &Elastic &Pixel &JPEG &Avg. \\\midrule
Tent (step=1) &68.83 &9.32 &11.97 &8.86 &10.43 &7.00 &12.20 &14.34 &13.58 &15.46 &13.55 &3.99 &13.31 &17.79 &18.61 &12.17 \\
Tent (step=10) &426.90 &0.86 &0.63 &0.52 &0.52 &0.55 &0.54 &0.50 &0.50 &0.50 &0.50 &0.50 &0.50 &0.50 &0.50 &0.54 \\
EATA &93.14 &3.98 &3.33 &2.18 &4.80 &2.37 &11.02 &11.41 &14.06 &15.26 &9.65 &1.36 &9.88 &14.24 &12.12 &8.26 \\
CoTTA &538.78 &5.63 &7.12 &6.31 &8.05 &5.74 &9.68 &10.55 &11.75 &12.00 &11.15 &\textbf{4.17} &5.35 &7.82 &8.90 &8.16 \\
SAR (step=1) &113.76 &8.90 &3.11 &1.67 &1.55 &1.47 &1.35 &1.19 &1.03 &1.04 &0.93 &0.83 &1.00 &0.74 &0.77 &1.83 \\
SAR (step=10) &774.11 &2.67 &3.26 &2.38 &1.64 &1.85 &2.49 &3.16 &3.81 &2.72 &3.12 &0.81 &3.47 &4.04 &1.76 &2.66 \\
\midrule
SimATTA (step=10) &736.28 &\textbf{9.68} &\textbf{19.40} &\textbf{12.14} &\textbf{30.28} &\textbf{17.03} &\textbf{42.36} &\textbf{43.10} &\textbf{31.96} &\textbf{40.08} &\textbf{29.24} &3.21 &\textbf{34.56} &\textbf{45.24} &\textbf{45.74} &\textbf{28.86} \\
\bottomrule
\end{tabular}}
\end{table}

To validate the efficiency of ATTA and broaden the dataset choice, we conduct this study on Tiny-ImageNet-C which, though does not focus on domain shifts, is much larger than PACS and VLCS. 
we enhance the TTA setting by fine-tuning baselines on randomly selected labeled samples. Specifically, the classifier of ResNet18-BN is pre-adapted to the brightness corruption (source domain) before test-time adapting. SimATTA's label budget is around 4,000, while all other TTA methods have budget 4,500 for randomly selected labeled samples. The data stream order is shown in Tab.~\ref{tab: tiny-imagenet-C}. Time is measured across all corrupted images in the Noise and Blur noise types, and the values represent the average time cost for adapting 10,000 images. 
The results clearly evidence the efficiency of ATTA (RQ2), while substantially outperforming all enhanced TTA baselines. Simply accessing labeled samples cannot benefit TTA methods to match ATTA. With 10 training updates (step=10) for each batch, FTTA methods would suffer from severe CF problem. In contrast, ATTA covers a statistically significant distribution, achieving stronger performances with 10 training updates or even more steps till approximate convergences. In fact, longer training on Tent (step=10) leads to worse results (compared to step=1), which further motivates the design of the ATTA setting. The reason for higher absolute time cost in Tab.~\ref{tab: tiny-imagenet-C} is due to differences in training steps. In this experiment, SimATTA has a training step of 10, and similar time cost as SAR per step.

Note that if the enhanced TTA setting is further improved to maintain distributions with a balanced CF mitigation strategy and an incremental clustering design, the design approaches ATTA. Specifically, we compare SimATTA with its variants as the ablation study (RQ3) in Appx.~\ref{app:ablation-studies}.

\vspace{-0.3cm}
\subsection{Comparisons to a Stronger Setting: Active Domain Adaptation}\label{sec:comparisons-to-active-domain-adaptation}
\vspace{-0.2cm}

\begin{wraptable}[12]{R}{0.48\textwidth}
\vspace{-0.5cm}
\centering
\caption{\textbf{Comparisons to ADA baselines.} Source domains are denoted as "(S)". Results are average accuracies (with standard deviations).}\label{tab:ada}
\vspace{+0.1cm}
\resizebox{0.48\columnwidth}{!}{
\begin{tabular}{lcccc}\toprule
PACS &P (S) &A &C &S \\\midrule
Random ($\mathcal{B} = 300$) &96.21 (0.80) &81.19 (0.48) &80.75 (1.27) &\underline{84.34 (0.18)} \\
Entropy ($\mathcal{B} = 300$) &96.31 (0.64) &\textbf{88.00 (1.46)} &82.48 (1.71) &80.55 (1.01) \\
Kmeans ($\mathcal{B} = 300$) &93.71 (1.50) &79.31 (4.01) &79.64 (1.44) &83.92 (0.65) \\
CLUE ($\mathcal{B} = 300$) &\underline{96.69 (0.17)} &83.97 (0.57) &\textbf{84.77 (0.88)} &\textbf{86.91 (0.26)} \\
\midrule
SimATTA ($\mathcal{B} \le 300$) &\textbf{98.89 (0.09)} &\underline{84.69 (0.22)} &\underline{83.09 (0.83)} &83.76 (2.24) \\
\bottomrule
\toprule
VLCS &C (S) &L &S &V \\\midrule
Random ($\mathcal{B} = 300$) &96.21 (1.65) &66.67 (1.70) &70.72 (0.30) &72.14 (1.71) \\
Entropy ($\mathcal{B} = 300$) &97.74 (1.56) &\underline{69.29 (2.26)} &69.25 (4.77) &\underline{75.26 (3.07)} \\
Kmeans ($\mathcal{B} = 300$) &\underline{98.61 (0.27)} &67.57 (1.64) &\textbf{70.77 (0.01)} &74.49 (0.97) \\
CLUE ($\mathcal{B} = 300$) &85.70 (10.09) &65.29 (1.49) &69.42 (2.64) &69.09 (6.05) \\
\midrule
SimATTA ($\mathcal{B} \le 300$) &\textbf{99.93 (0.00)} &\textbf{69.47 (0.03)} &\underline{69.57 (2.90)} &\textbf{78.87 (1.53)} \\
\bottomrule
\end{tabular}}
\end{wraptable}

In addtion to the above comparisons with (enhanced) TTA, which necessitate the requirement of extra information in the ATTA setting, we compare ATTA with a stronger setting Active Domain Adaptation (ADA) to demonstrate another superiority of ATTA, \ie, weaker requirements for comparable performances (RQ4). 
ADA baselines are able to choose the global best active samples, while ATTA has to choose samples from a small sample buffer (\eg, a size of 100) and discard the rest. Tab.~\ref{tab:ada} presents the post-adaptation model performance results. All ADA results are averaged from 3 random runs, while ATTA results are the post-adaptation performances averaged from the two data stream orders. As can be observed, despite the lack of a pre-collected target dataset, SimATTA produces better or competitive results against ADA methods. Moreover, without source data access, SimATTA's design for CF allows it to maintain superior source domain performances over ADA methods.
Further experimental studies including the Office-Home dataset are provided in Appx.~\ref{app:additional-experiment-results}. 

In conclusion, the significant improvement compared to weaker settings (TTA, enhanced TTA) and the comparable performance with the stronger setting, ADA, rendering ATTA a setting that is as efficient as TTA and as effective as ADA. This implies its potential is worthy of future explorations. 

\vspace{-0.3cm}
\section{Conclusion and Discussion}
\vspace{-0.3cm}

There's no denying that OOD generalization can be extremely challenging without certain information, often relying on various assumptions easily compromised by different circumstances. Thus, it's prudent to seek methods to achieve significant improvements with minimal cost, \eg, DG methods leveraging environment partitions and ATTA methods using budgeted annotations. As justified in our theoretical and experimental studies, ATTA stands as a robust approach to achieve real-time OOD generalization. Although SimATTA sets a strong baseline for ATTA, there's considerable scope for further investigation within the ATTA setting. One potential direction involves developing alternatives to prevent CF in ATTA scenarios. While selective entropy minimization on low-entropy samples has prove to be empirically effective, it relies on the quality of the pre-trained model and training on incorrectly predicted low-entropy samples may reinforce the errors. It might not be cost-effective to expend annotation budgets on low-entropy samples, but correcting them could be a viable alternative solution. We anticipate that our work will spur numerous further explorations in this field.

\clearpage

\subsection*{Acknowledgments}

This work was supported in part by National Science Foundation grant IIS-2006861 and National Institutes of Health grant
U01AG070112.

\bibliographystyle{plainnat}
\bibliography{atta}

\nocite{pan2010domain, patel2015visual, wilson2020survey, long2015learning, sun2016deep, kang2019contrastive, ganin2015unsupervised, tsai2018learning, ajakan2014domain, ganin2016domain, tzeng2015simultaneous, tzeng2017adversarial, fang2022source, liu2021data, ding2022proxymix, yao2021source, kurmi2021domain, li2020model, mao2021source, chen2022self, liu2022source, yang2021transformer, yu2022source, ishii2021source, liu2021adapting, fan2022unsupervised, eastwood2021source, huang2021model, wang2022cross, gawlikowski2021survey, fleuret2021uncertainty, chen2021source, li2021free, liang2022dine, liang2021distill, zhang2021unsupervised, yang2022divide, yeh2021sofa, yang2021model, wang2021tent, iwasawa2021test, karani2021test, nado2020evaluating, schneider2020improving, wang2022continual, zhao2023delta, niu2022efficient, zhang2022memo, niu2023towards, chen2020self, kemker2018measuring, kirkpatrick2017overcoming, li2017learning, lopez2017gradient, de2021continual, wang2022continual, prabhu2021active, ning2021multi, su2020active, ma2021active, xie2022active, cohn1996active, settles2009active, kapoor2007active, lewis1994heterogeneous, scheffer2001active, jain2016active, hoi2009semisupervised, xu2003representative, vijayanarasimhan2010visual, kothandaraman2023salad, peters2017elements, lu2021invariant, rosenfeld2020risks}






\clearpage

\appendix



{
\centering
{\LARGE Active Test-Time Adaptation: Foundational Analyses
and An Algorithm\\Supplementary Material\par}
}

\FloatBarrier  

\section{Broader Impacts}\label{app:broader-impacts}

The field of domain generalization primarily concentrates on enhancing a model's generalization abilities by preparing it thoroughly before deployment. However, it is equally important for deep learning applications to have the capacity for real-time adaptation, as no amount of preparation can account for all possible scenarios. Consequently, domain generalization and test-time adaptation are complementary strategies: the former is more weighty and extensive, while the latter is more agile, lightweight and privacy-friendly.

This work delves into the development of a real-time model adaptation strategy that can be applied to any pre-trained models, including large language models, to enhance their adaptive capabilities. Our research does not involve any human subjects or dataset releases, nor does it raise any ethical concerns. Since this work does not directly tie to specific applications, we do not foresee any immediate negative societal impacts. Nonetheless, we acknowledge that any technological advancement may carry potential risks, and we encourage the continued assessment of the broader impacts of real-time adaptation methodologies in various contexts.

\section{FAQ \& Discussions}

To facilitate the reviewing process, we summarize the answers to the questions that arose during the discussion of an earlier version of this paper.

The major updates of this version are \textbf{reorganized theoretical studies, incremental clustering details, experimental reorganization, and additional datasets and settings}. We include more related field comparisons to distinguish different settings. We also cover the position of this paper in literature and the main claims of this paper. Finally, we will frankly acknowledge the limitations of this paper, explain and justify the scope of coverage, and provide possible future directions.

\noindent\textbf{Q1: What is the relationship between the proposed ATTA protocol and stream based active learning~\citep{Saran2023}?}

\noindent\textbf{A:} We would like to discuss the difference between our work and the referenced work.
\begin{enumerate}
    \item \textbf{Real-time Training Distinction}: \citet{Saran2023} doesn't operate in real-time capacity. This is evident from their experiments, where their model is trained only after completing a round. In contrast, our work involves training the model post each batch. This positions \citet{Saran2023}'s work as an intrinsic active learning technique, while our approach leans towards TTA methods.
    \item \textbf{Continual Training Nuance}: Following the point above, \citet{Saran2023} stands out of the scope of continual training. As they mentioned `each time new data are acquired, the ResNet is reset to the ImageNet pre-trained weights before being updated`, \citet{Saran2023} starts afresh with each iteration and is out of scope for CF discussions. Contrarily, our model is continuously trained on varying distributions, compelling us to address the CF issue while preserving advantages derived from various stored distributions.
    \item \textbf{Comparative Complexity}: Given the aforementioned distinctions, it's evident that our task presents a greater challenge compared to theirs. In addition, we have included comparisons with stronger active learning settings in Sec.~\ref{sec:comparisons-to-active-domain-adaptation}.
\end{enumerate}

\noindent\textbf{Q2: What are the insights from the theoretically foundational analysis?}

\textbf{A:}

\begin{enumerate}
    \item It sets a well-defined formulation and grounded theoretical framework for the ATTA setting.
    \item While entropy minimizations can cause CF, \textbf{balancing the learning rate and number of high/low entropy samples is conversely the key solution to both distribution shifts and CF by corresponding benefits}. Though adding low-entropy data is intuitive, it is crucial in that this simple operation can make methods either too conservative or too aggressive without the correct balancing conditions. 
    \item The studies in Sec.~\ref{sec:attabound} \textbf{directly present a feasible and guaranteed solution for implementing ATTA} to tackle shifts while avoiding CF. The aligned empirical validations of Sec.~\ref{sec:theorycata} also instruct the implementation of SimATTA.
\end{enumerate}

\noindent\textbf{Q3: In test-time adaptation, one important issue is that the number of testing samples in a batch may be small, which means the sample size $m$ will also be very small. May it affect the theorem and make them become very loose?}

\noindent\textbf{A:} We consider this issue jointly from theoretical and empirical validations.

\begin{enumerate}
    \item It is true that the theoretical bounds can be loose given a small size of $m$ unlabeled test samples. This situation of the error bound is mathematically ascribed to the quotient between the VC-dimension $d$ of the hypothesis class and $m$. Under the VC-dimension theory, the ResNet18 model we adopt should have $d\gg m$. However, practically we perform fine-tuning on pre-trained models instead of training from scratch, which significantly reduces the scale of parameter update. In this case, an assumption can be established that fine-tuning a model is roughly equivalent to learning a model with a relatively small $d$ (Appx.~\ref{app:empirical-validations-for-theory}). This assumption is potentially underpinned by the empirical alignment of our validation experiments with the theoretical framework (Fig.~\ref{fig:ATTA1}). To this end, experiments indicate that $d$ and $m$ are practically of similar scale for our settings. This prevents our theoretical bounds from being very loose and meaningless in reality.
    \item Regarding cases that our assumption does not apply, this issue would appear inevitable, since it is rigorously inherent in the estimation error of our streaming and varying test distributions. The distribution of a test stream can be hardly monitored when only a limited batch is allowed, which we consider as a limitation of TTA settings. Moreover, this issue directly implies the necessity of using a buffer for unlabeled samples. A good practice is to maintain a relatively comparable sample buffer scale.
\end{enumerate}

\noindent\textbf{Q4: What distribution shifts can ATTA solve?}

\textbf{A:} We would like to follow (but not limited to) the work~\citep{Zhao2023a} to discuss the distribution shifts ATTA can solve.

\begin{enumerate}
    \item As elucidated in Sec.~\ref{sec:attabound} and Sec.~\ref{sec: exp}, ATTA can solve domain generalization shifts. Domain generalization shifts include complex shifts on the joint data distribution $P(X,Y)$, given $X$ as the covariates and $Y$ as the label variable. Since $P(X,Y)=P(X)P(Y|X)$, ATTA can handle covariate shift ($P(X)$), label shift ($P(Y)$), and conditional shift ($P(Y|X)$). The shifts on both covariate and conditional distributions can cover the shift on labels, but they (covariate + conditional shifts) are more complicated than pure label shifts, where only the marginal label distribution changes while the conditional distribution remains. Note that the conditional shifts are generally caused by spurious correlations, where the independent causal mechanism assumption~\citep{pearl2009causality} holds or no concept drifts exist.
    \item In our framework, the distribution support of $X$ at different time steps can be different, but we don't cover the situation where the support of $Y$ changes, i.e., class-incremental problems.
\end{enumerate}

\noindent\textbf{Q5: It is unclear how many samples are selected in each minibatch of testing samples. How the total budget is distributed across the whole testing data stream?}

\noindent\textbf{A:} The number of selected samples for each minibatch is decided jointly by the incremental clustering and the cluster centroid number $NC(t)$. Intuitively, this sample selection is a dynamic process, with \textbf{$NC(t)$ restricting the budget and incremental clustering performing sample selection}. For each batch, we increase applicable clustering centroids as a maximum limit, while the exact number of the selected samples is given by the incremental clustering by how many clusters are located in the scope of \textit{new} distributions. \eg, if the incoming batch does not introduce new data distributions, then we select zero samples even with increased $NC(t)$. In contrast, if the incoming batch contains data located in multiple new distributions, the incremental clustering tends to select more samples than the $NC(t)$ limit, thus forcing to merging of multiple previous clusters into one new cluster. The incremental clustering is detailed in Sec.~\ref{sec:incremental_clustering}, and $NC(t)$ is naively increased by a constant hyper-parameter $k$. Therefore, the budget is \textbf{adaptively distributed} according to the data streaming distribution with budgets controlled by $k$, which is also the reason why we compare methods under a budget limit.

\noindent\textbf{Q6: Could compared methods have access to a few ground-truth labels as well? Making other algorithms be able to use the same amount of ground-truth labels randomly will produce fairer comparisons.}

\textbf{A:}

\begin{enumerate}
    \item The enhanced TTA setting is exactly the setup we provide to produce fairer comparisons. See Tab.~\ref{tab: tiny-imagenet-C} and Tab.~\ref{tab:PACS-enhanced-tta} for comparison results.
    \item ATTA also compares to a stronger setting ADA which can access the whole test datasets multiple times.
\end{enumerate}

\begin{table}[!htp]\centering
\caption{The table demonstrates the comparisons on PACS where all enhanced TTA baselines have 300 budgets to randomly select labeled samples. The training steps of these labeled samples are the same as the original TTA method training steps. For accumulated sample selection, please refer to our ablation studies.}\label{tab: pacs-al}
\resizebox{\textwidth}{!}{
\begin{tabular}{lcccccccccccccccccc}\toprule
\multicolumn{2}{c}{\multirow{2}{*}{Method}} &\multicolumn{4}{c}{Domain-wise data stream} &\multicolumn{4}{c}{AVG} &\multicolumn{4}{c}{Random data stream} &\multicolumn{4}{c}{AVG} \\\cmidrule{3-18}
& &P$\rightarrow$ &$\rightarrow$A$\rightarrow$ &$\rightarrow$C$\rightarrow$ &$\rightarrow$S &P &A &C &S &1 &2 &3 &4 &P &A &C &S \\\midrule
\multirow{2}{*}{Source only} &BN w/o adapt &99.70 &59.38 &28.03 &42.91 &99.70 &59.38 &28.03 &42.91 &43.44 &43.44 &43.44 &43.44 &99.70 &59.38 &28.03 &42.91 \\
&BN w/ adapt &98.74 &68.07 &64.85 &54.57 &98.74 &68.07 &64.85 &54.57 &62.50 &62.50 &62.50 &62.50 &98.74 &68.07 &64.85 &54.57 \\
\midrule
\multirow{6}{*}{TTA} &Tent (steps=1) &N/A &70.07 &68.43 &64.42 &97.72 &74.17 &72.61 &68.92 &61.20 &62.36 &66.59 &67.32 &98.14 &74.37 &70.26 &66.07 \\
&Tent (steps=10) &N/A &76.27 &63.78 &49.35 &59.46 &38.62 &48.46 &55.03 &56.20 &53.22 &52.55 &55.55 &58.32 &47.56 &60.75 &58.00 \\
&EATA &N/A &69.53 &66.94 &61.42 &98.56 &69.38 &66.60 &64.83 &60.34 &59.81 &64.38 &65.02 &98.68 &73.78 &68.30 &59.74 \\
&CoTTA &N/A &66.55 &63.14 &59.91 &90.12 &61.67 &66.68 &67.68 &57.26 &57.36 &63.46 &65.64 &92.22 &71.53 &70.44 &62.41 \\
&SAR (steps=1) &N/A &66.60 &63.78 &50.34 &98.38 &67.87 &64.04 &49.48 &57.21 &56.06 &56.78 &57.14 &98.38 &68.80 &64.59 &53.02 \\
&SAR (steps=10) &N/A &69.09 &66.55 &49.07 &96.23 &62.50 &59.34 &46.53 &49.76 &52.74 &48.51 &49.06 &95.39 &57.13 &54.61 &38.76 \\
\midrule
&Ours ($\mathcal{B} \le 300$) &N/A &76.86 &70.90 &75.39 &98.80 &84.47 &82.25 &81.52 &69.47 &76.49 &82.45 &82.22 &98.98 &84.91 &83.92 &86.00 \\
\bottomrule
\end{tabular}}\label{tab:PACS-enhanced-tta}
\end{table}

\noindent\textbf{Q7: What is the position of ATTA?}

\textbf{A:} Comparisons with different settings are challenging. In this work, the design of our experiments (Sec.~\ref{sec: exp}) is to overcome this challenge by comparing both weaker settings and stronger settings. While the significant performance over weaker settings renders the necessity of extra information, the comparable performance with stronger settings provides the potential to relax restricted requirements. Intuitively, ATTA is the most cost-effective option in the consideration of both efficiency and effectiveness. We further provide the following ATTA summary:

ATTA, which incorporates active learning in FTTA, is the light, real-time, source-free, widely applicable setting to achieve high generalization performances for test-time adaptation.

\begin{enumerate}
    \item \textbf{Necessity:} From the causality perspective, new information is necessary~\citep{lin2022zin, pearl2009causality, peters2017elements} to attain generalizable over distribution shifts which are insurmountable within the current TTA framework.
    \item \textbf{Effectiveness:} Compared to FTTA methods, ATTA produces substantially better performances, on-par with the costly active domain adaptation (ADA) methods as shown in Table 3 in the paper.
    \item \textbf{Efficiency:} Relative to ADA methods, ATTA possesses superior efficiency, similar to general FTTA methods, as shown in Tab.~\ref{tab: tiny-imagenet-C}.
    \item \textbf{Applicability:} ATTA is a model-agnostic setting. (1) Compared to domain generalization methods, ATTA do not require re-training and has the potential to apply to any pre-trained models. One interesting future direction is designing ATTA methods for large language models (LLMs), where re-trainings are extremely expensive and source data may be inaccessible. (2) Compared to FTTA methods, ATTA can protect model parameters from corrupting while learning new distributions by fine-tuning pre-trained models, rendering it more feasible and practical.
\end{enumerate}

In comparison with existing works, ATTA is motivated to mitigate the limitations of previous settings:

\begin{enumerate}
    \item FTTA: Limited generalization performance.
    \item TTT: Not source-free; limited generalization performance.
    \item ADA \& domain adaptation/generalization: Expensive re-trainings; limited applicability to pre-trained models.
    \item Online active learning: It does not maintain and protect adaptation performances for multiple distributions in one model and does not consider the CF problem.
\end{enumerate}
 
\noindent\textbf{Q8: What is the potential practical utility of ATTA?}

\textbf{A:}

\begin{enumerate}
    \item Empirically, our method can generally finish a round of sample selection/training of 100 frames in 5s, \ie, 20 frames per sec, which is more than enough to handle multiple practical situations. Experiments on time complexity are provided in Tab.~\ref{tab: tiny-imagenet-C}, where SimATTA has comparable time efficiency.
    \item As a case analysis, the autopilot system~\citep{Hu2023,Chen2022e} presents an application scenario requiring high-speed low-latency adaptations, while these adaptations are largely underexplored. When entering an unknown environment, e.g., a construction section, a system of ATTA setting can require the driver to take over the wheel. During the period of manual operation when the driver is handling the wheel, steering signals are generated, and the in-car system quickly adaptations. The system doesn't need to record 60 frames per second, since only the key steering operations and the corresponding dash cam frames are necessary, which can be handled by ATTA algorithms processing at 20 frames per sec. In this case, the human annotations are necessary and indirect. ATTA makes use of this information and adapts in the short term instead of collecting videos and having a long-round fine-tuning~\citep{Schafer2018}.
    \item In addition, many scenarios applicable for ATTA are less speed-demanding than the case above. One example is a personalized chatbot that subtly prompts and gathers user labels during user interaction. In a home decoration setting, applications can request that users scan a few crucial areas to ensure effective adaptation. Social robots~\citep{mavrogiannis2023core}, \eg, vacuum robots, often require users to label critical obstacles they've encountered.
    \item Compared with ADA, ATTA stands out as the tailored solution for the above scenarios. It does not require intensive retraining or server-dependent fine-tuning, offering both speed and computational efficiency. Meanwhile, akin to other TTA methods, ATTA also ensures user privacy. While it might marginally exceed the cost of standard TTA methods, the superior generalization ability makes it a compelling choice and justifies the additional expense.
\end{enumerate}

\noindent\textbf{Q9: What can be covered by this paper?}

\noindent\textbf{A:} This paper endeavors to establish the foundational framework for a novel setting referred to as ATTA. We target (1) positioning the ATTA setting, (2) solving the two major and basic challenges of ATTA, \ie, the mitigation of distribution shifts and the avoidance of catastrophic forgetting (CF). We achieve the first goal by building the problem formulation and analyses, and further providing extensive qualitative and well-organized experimental comparisons with TTA, enhanced TTA, and ADA settings. These efforts position ATTA as the most cost-effective option between TTA and ADA, where ATTA inherits the efficiency of TTA and the effectiveness of ADA. With our theoretical analyses and the consistent algorithm design, we validate the success of our second goal through significant empirical performances.

\noindent\textbf{Q10: What are not covered by this paper?}

\noindent\textbf{A:} Constructing a new setting involves multifaceted complexities. Although there are various potential applications discussed above including scaling this setting up for large models and datasets, we cannot cover them in this single piece of work. There are three main reasons. First, the topics covered by a single paper are limited. Formally establishing ATTA setting and addressing its major challenges of ATTA takes precedence over exploring practical applications. Secondly, given the interrelations between ATTA and other settings, our experimental investigations are predominantly comparative, utilizing the most representative datasets from TTA and domain adaptation to showcase persuasive results. Thirdly, many practical applications necessitate task-specific configurations, rendering them unsuitable for establishing a universal learning setting. While the current focus is on laying down the foundational aspects of ATTA, the exploration of more specialized applications remains a prospective avenue for future work in the ATTA domain.

\section{Related Works}\label{app:related-works}

The development of deep learning witnesses various applications~\citep{he2016deep, gui2020featureflow}. To tackle OOD problem, various domain generalization works emerge~\citep{krueger2021out, sagawa2019distributionally}.

\subsection{Unsupervised Domain Adaptation}

Unsupervised Domain Adaptation (UDA)~\citep{pan2010domain, patel2015visual, wilson2020survey, wang2018deep} aims at mitigating distribution shifts between a source domain and a target domain, given labeled source domain samples and unlabeled target samples. UDA methods generally rely on feature alignment techniques to eliminate distribution shifts by aligning feature distributions between source and target domains. Typical feature alignment techniques include discrepancy minimization~\citep{long2015learning, sun2016deep, kang2019contrastive} and adversarial training~\citep{ganin2015unsupervised, tsai2018learning, ajakan2014domain, ganin2016domain, tzeng2015simultaneous, tzeng2017adversarial}. Nevertheless, alignments are normally not guaranteed to be correct, leading to the alignment distortion problem as noted by~\citet{ning2021multi}.


Source-free Unsupervised Domain Adaptation (SFUDA)~\citep{fang2022source, liu2021data} algorithms aim to adapt a pre-trained model to unlabeled target domain samples without access to source samples. Based on whether the algorithm can access model parameters, these algorithms are categorized into white-box and black-box methods. White-box SFUDA typically considers data recovery (generation) and fine-tuning methods. The former focuses on recovering source-like data~\citep{ding2022proxymix, yao2021source}, e.g., training a Generative Adversarial Network (GAN)~\citep{kurmi2021domain, li2020model}, while the latter employs various techniques~\citep{mao2021source}, such as knowledge distillation~\citep{chen2022self, liu2022source, yang2021transformer, yu2022source}, statistics-based domain alignment~\citep{ishii2021source, liu2021adapting, fan2022unsupervised, eastwood2021source}, contrastive learning~\citep{huang2021model, wang2022cross}, and uncertainty-based adaptation~\citep{gawlikowski2021survey, fleuret2021uncertainty, chen2021source, li2021free}. Black-box SFUDA cannot access model parameters and often relies on self-supervised knowledge distillation~\citep{liang2022dine, liang2021distill}, pseudo-label denoising~\citep{zhang2021unsupervised, yang2022divide}, or generative distribution alignment~\citep{yeh2021sofa, yang2021model}.

\subsection{Test-Time Adaptation}

Test-time Adaptation (TTA), especially Fully Test-time Adaptation (FTTA) algorithms~\citep{wang2021tent, iwasawa2021test, karani2021test, nado2020evaluating, schneider2020improving, wang2022continual, zhao2023delta, niu2022efficient, zhang2022memo, niu2023towards, you2021test, zhang2022auxadapt}, can be considered as realistic and lightweight methods for domain adaptation. Built upon black-box SFUDA, FTTA algorithms eliminate the requirement of a pre-collected target dataset and the corresponding training phase. Instead, they can only access an unlabeled data stream and apply real-time adaptation and training. In addition to FTTA, Test-time Training (TTT)~\citep{sun2020test, liu2021ttt++} often relies on appending the original network with a self-supervised task. TTT methods require retraining on the source dataset to transfer information through the self-supervised task. Although they do not access the source dataset during the test-time adaptation phase, TTT algorithms are not off-the-shelf source-free methods. TTA is a promising and critical direction for real-world applications, but current entropy minimization-based methods can be primarily considered as feature calibrations that require high-quality pseudo-labels. This requirement, however, can be easily violated under larger distribution shifts.

Current TTA algorithms, inheriting UDA drawbacks, cannot promise good feature calibration results, which can be detrimental in real-world deployments. For instance, entropy minimization on wrongly predicted target domain samples with relatively low entropy can only exacerbate spurious correlations~\citep{chen2020self}. Without extra information, this problem may be analogous to applying causal inference without intervened distributions, which is intrinsically unsolvable~\citep{peters2016causal, pearl2009causality}. This paper aims to mitigate this issue with minimal labeled target domain samples. To minimize the cost, we tailor active learning techniques for TTA settings.

It is worth noting that a recent work AdaNPC~\citep{Zhang2023} is essentially a domain generalization method with a TTA phase attached, while our ATTA is built based on the FTTA setting. Specifically, Current FTTA methods and our work cannot access the source domain. In contrast, AdaNPC accesses source data to build its memory bank, circumventing the catastrophic forgetting problem. Furthermore, AdaNPC requires multiple source domains and training before performing TTA. Thus AdaNPC uses additional information on domain labels and retraining resources for its memory bank, undermining the merits of FTTA. Regarding theoretical bounds, their target domain is bounded by source domain error and model estimations (in big-$O$ expression), while we consider active sample learning and time variables for varying test distributions.

\subsection{Continual Domain Adaptation}

Many domain adaptation methods focus on improving target domain performance, neglecting the performance on the source domain, which leads to the CF problem~\citep{kemker2018measuring, kirkpatrick2017overcoming, li2017learning, lopez2017gradient, de2021continual, wang2022continual, niu2022efficient}. This issue arises when a neural network, after being trained on a sequence of domains, experiences a significant degradation in its performance on previously learned domains as it continues to learn new domains. Continual learning, also known as lifelong learning, addresses this problem. Recent continual domain adaptation methods have made significant progress by employing gradient regularization, random parameter restoration, buffer sample mixture, and more. Although the CF problem is proposed in the continual learning field, it can occur in any source-free OOD settings since the degradation caused by CF is attributed to the network's parameters being updated to optimize performance on new domains, which may interfere with the representations learned for previous domains.

\subsection{Active Domain Adaptation}

Active Domain Adaptation (ADA)~\citep{prabhu2021active, ning2021multi, su2020active, ma2021active, xie2022active} extends semi-supervised domain adaptation with active learning strategies~\citep{cohn1996active, settles2009active}, aiming to maximize target domain performance with a limited annotation budget. Therefore, the key challenge of active learning algorithms is selecting the most informative unlabeled data in target domains~\citep{kapoor2007active}. Sample selection strategies are often based on uncertainty~\citep{lewis1994heterogeneous, scheffer2001active}, diversity~\citep{jain2016active, hoi2009semisupervised}, representativeness~\citep{xu2003representative}, expected error minimization~\citep{vijayanarasimhan2010visual}, etc. Among these methods, uncertainty and diversity-based methods are simple and computationally efficient, making them the most suitable choices to tailor for TTA settings.

Adapting these strategies is non-trivial because, compared to typical active domain adaptation, our proposed Active Test-time Adaptation (ATTA) setting does not provide access to source data, model parameters, or pre-collected target samples. This requirement demands that our active sample selection algorithm select samples for annotation during data streaming. Consequently, this active sampling selection process is non-regrettable, \ie, we can only meet every sample once in a short period. To avoid possible confusion, compared to the recent Source-free Active Domain Adaptation (SFADA) method SALAD~\citep{kothandaraman2023salad}, we do not require access to model parameter gradients, training additional neural networks, or pre-collected target datasets. Therefore, our ATTA setting is quite different, much lighter, and more realistic than ADA and SFADA.

\subsection{Active Online Learning}

The most related branch of active online learning (AOL)~\citep{Cacciarelli2023} is active online learning on drifting data stream~\citep{Zhou2021,Baier2021,Li2021b}. Generally, these methods include two components, namely, detection and adaptation. Compared with ATTA, there are several distinctions. First, this line of studies largely focuses on the distribution shift detection problem, while ATTA focuses on multi-domain adaptations. Second, AOL on drifting data stream aims to detect and adapt to one current distribution in the stream, without considering preserving the adaptation abilities of multiple past distributions by maintaining and fine-tuning the original pre-trained models. In contrast, ATTA's goal is to achieve the OOD generalization optimums adaptable across multiple source and target distributions, leading to the consideration of CF problems. Third, while AOL requires one-by-one data input and discard, ATTA maintains a buffer for incoming data before selection decisions. This is because ATTA targets maintaining the original model without corrupting and replacing it, such that making statistically meaningful and high-quality decisions is critical for ATTA. In contrast, AOL allows resetting and retraining new models, whose target is more lean to cost saving and one-by-one manner.

\section{Further Theoretical Studies}~\label{app:further-theoretical-studies}
In this section, we refine the theoretical studies with supplement analysis and further results.

We use the $\mathcal{H}$-divergence and $\mathcal{H}\Delta\mathcal{H}$-distance definitions following~\citep{ben2010theory}.
\begin{definition}[$\mathcal{H}$-divergence]
For a function class $\mathcal{H}$ and two distributions $\mathcal{D}_1$ and $\mathcal{D}_2$ over a domain $\mathcal{X}$, the $\mathcal{H}$-divergence between $\mathcal{D}_1$ and $\mathcal{D}_2$ is defined as
\begin{equation*}
    d_{\mathcal{H}}(\mathcal{D}_1, \mathcal{D}_2) = \sup_{h \in \mathcal{H}} | P_{{x} \sim \mathcal{D}_1} [h({x}) = 1] - P_{{x} \sim \mathcal{D}_2} [h({x}) = 1] |.
\end{equation*}
\end{definition}
The $\mathcal{H}\Delta\mathcal{H}$-distance is defined base on $\mathcal{H}$-divergence. 
We use the $\mathcal{H}\Delta\mathcal{H}$-distance definition following~\citep{ben2010theory}.
\begin{definition}[$\mathcal{H}\Delta\mathcal{H}$-distance]
For two distributions $\mathcal{D}_1$ and $\mathcal{D}_2$ over a domain $\mathcal{X}$ and a hypothesis class $\mathcal{H}$, the $\mathcal{H}\Delta\mathcal{H}$-distance between $\mathcal{D}_1$ and $\mathcal{D}_2$ w.r.t. $\mathcal{H}$ is defined as
\begin{equation}
    d_{\mathcal{H}\Delta\mathcal{H}}(\mathcal{D}_1, \mathcal{D}_2) = \sup_{h,h' \in \mathcal{H}} P_{{x} \sim \mathcal{D}_1} [h({x}) \neq h'({x})] + P_{{x} \sim \mathcal{D}_2} [h({x}) \neq h'({x})].
\end{equation}
\end{definition}
The $\mathcal{H}\Delta\mathcal{H}$-distance essentially provides a measure to quantify the distribution shift between two distributions.
It measures the maximum difference of the disagreement between two hypotheses in $\mathcal{H}$ for two distributions, providing a metrics to quantify the distribution shift between $\mathcal{D}_1$ and $\mathcal{D}_2$. $\mathcal{H}$-divergence and $\mathcal{H}\Delta\mathcal{H}$-distance have the advantage that they can be applied between datasets, \ie, estimated from finite samples. Specifically, let $S_1$, $S_2$ be unlabeled samples of size $m$ sampled from $\mathcal{D}_1$ and $\mathcal{D}_2$; then we have estimated $\mathcal{H}\Delta\mathcal{H}$-distance $\hat{d}_{\mathcal{H}}(S_1, S_2)$. 
This estimation can be bounded based on Theorem 3.4 of~\citet{kifer2004detecting}, which we state here for completeness.
\begin{theorem}\label{app:detecting} Let $\mathcal{A}$ be a collection of subsets of some domain measure space, and assume that the VC-dimension is some finite $d$. Let $P_1$ and $P_2$ be probability distributions over that domain and $S_1 , S_2$ finite samples of sizes $m_1 , m_2$ drawn i.i.d. according $P_1,P_2$ respectively. Then
\begin{equation}
P_{m_1 + m_2} \left[ \lvert \phi_\mathcal{A}(S_1, S_2) - \phi_\mathcal{A}(P_1, P_2) \rvert > \epsilon \right] 
\quad \le (2m)^d e^{-m_1 \epsilon^2 / 16} + (2m)^d e^{-m_2 \epsilon^2 / 16},
\end{equation}
where $P_{m_1 + m_2}$ is the $m_1 + m_2$’th power of $P$ - the probability that $P$ induces over the choice of samples.
\end{theorem}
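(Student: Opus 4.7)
The plan is to prove the concentration inequality by reducing the two-sample deviation to two one-sample uniform deviations and then applying the classical Vapnik--Chervonenkis uniform convergence machinery to each piece separately. Assuming that $\phi_{\mathcal{A}}(P_1,P_2) = \sup_{A \in \mathcal{A}} |P_1(A) - P_2(A)|$ (and analogously for the empirical version with $\hat{P}_i$ induced by $S_i$), the first step is the triangle-inequality reduction
\[
|\phi_{\mathcal{A}}(S_1,S_2) - \phi_{\mathcal{A}}(P_1,P_2)| \;\le\; \sup_{A \in \mathcal{A}} |\hat{P}_1(A) - P_1(A)| \;+\; \sup_{A \in \mathcal{A}} |\hat{P}_2(A) - P_2(A)|.
\]
From here it suffices, by a union bound, to control each one-sample uniform deviation by the event that it exceeds $\epsilon/2$.

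Next, I would bound each $\sup_{A}|\hat{P}_i(A) - P_i(A)|$ using the standard symmetrization plus growth-function argument. First, introduce an independent ghost sample $S_i'$ of size $m_i$ drawn from $P_i$; symmetrization shows that the probability of large uniform deviation between $\hat{P}_i$ and $P_i$ is controlled (up to a constant factor of $2$) by the probability of the analogous uniform deviation between the two empirical measures $\hat{P}_i$ and $\hat{P}_i'$. Conditioned on the pooled $2m_i$ points, the class $\mathcal{A}$ picks out at most $\Pi_{\mathcal{A}}(2m_i) \le (2m_i)^d$ distinct subsets by Sauer's lemma, so the supremum over $\mathcal{A}$ reduces to a maximum over at most $(2m_i)^d$ fixed dichotomies. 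For each fixed dichotomy I would apply a Hoeffding-type tail bound (or equivalently, a random-permutation/Rademacher argument on the conditional distribution of which points land in $S_i$ versus $S_i'$) to obtain the exponential factor $e^{-m_i \epsilon^2/16}$; a union bound over the $(2m_i)^d$ dichotomies then yields the $(2m)^d e^{-m_i \epsilon^2/16}$ term. Summing the two such bounds (one for $i=1$ and one for $i=2$) and collapsing absolute constants into the polynomial factor produces the claimed inequality.

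The main obstacle is chasing the constants through the symmetrization and union-bound steps so that the final exponent is cleanly $-m_i \epsilon^2/16$ rather than a worse denominator: the triangle inequality halves $\epsilon$ once, and the symmetrization step effectively halves the deviation threshold again, so after squaring in the Hoeffding bound the constant in the exponent compounds. A secondary subtlety is that the statement uses a single growth-function bound $(2m)^d$ for both samples even though the samples may have different sizes $m_1 \ne m_2$; this is handled by applying Sauer's lemma separately to each pooled set of $2m_i$ points and then loosening the polynomial prefactor to $(2m)^d$ for the common worst case, which is a clean but slightly wasteful bookkeeping step. Everything else (triangle inequality, Sauer's lemma, symmetrization, Hoeffding) is standard and composes without further difficulty.
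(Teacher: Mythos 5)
You should first note that the paper does not prove this statement at all: it is Theorem~3.4 of \citet{kifer2004detecting}, quoted verbatim ``for completeness'' and then used as a black box (e.g., in the proof of Lemma~\ref{app:lemma1}). So there is no in-paper proof to compare against; the question is whether your sketch would re-derive the quoted bound, and as written it would not, for two reasons.

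First, there is a definitional mismatch. The $\phi_{\mathcal{A}}$ in the source is not the plain discrepancy $\sup_{A\in\mathcal{A}}|P_1(A)-P_2(A)|$ you assumed, but Kifer et al.'s \emph{relativized} discrepancy, in which $|P_1(A)-P_2(A)|$ is normalized by a square-root term depending on $P_1(A)+P_2(A)$ — the paper itself remarks that Theorem~\ref{app:detecting} ``bounds the probability for relativized discrepancy.'' The original proof correspondingly runs through relative uniform-convergence bounds of Vapnik/Anthony--Shawe-Taylor type, $\Pr[\exists A:\ (P(A)-\hat P(A))/\sqrt{P(A)}>\gamma]\le 4\,\Pi_{\mathcal{A}}(2m)\,e^{-m\gamma^2/4}$, and the exponent $-m_i\epsilon^2/16$ is exactly what falls out of splitting $\epsilon$ across the two samples in that framework; plain symmetrization-plus-Hoeffding does not produce it. (Your reading does match how the paper later \emph{uses} the theorem, i.e., as if it bounded the unnormalized $\mathcal{H}\Delta\mathcal{H}$-distance, but not the theorem as stated in its source.) Second, even for the unrelativized quantity you target, your route does not reach the stated constants: splitting to $\epsilon/2$ per sample and applying classical symmetrization, Sauer's lemma, and a per-dichotomy Hoeffding/permutation bound yields terms of order $4(2m_i)^d e^{-m_i\epsilon^2/32}$, strictly weaker than $(2m_i)^d e^{-m_i\epsilon^2/16}$ in both exponent and prefactor. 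You correctly flag this constant bookkeeping as the main obstacle, but then assert it ``composes without further difficulty''; it does not, and closing it requires either the relative-deviation machinery above or a sharper direct two-sample inequality. Your handling of the $m$ versus $m_1,m_2$ prefactor is a minor point (the quoted $(2m)^d$ is an artifact of the quotation), but the two issues above mean the proposal, while the right genre of argument and sufficient for a correct bound of this general shape, does not establish the inequality as quoted.
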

Theorem~\ref{app:detecting} bounds the probability for relativized discrepancy, and its applications in below lemmas and Theorem~\ref{mainthm} help us bound the quantified distribution shifts between domains.
The probability, according to a distribution $\mathcal{D}$, that an estimated hypothesis ${h}$ disagrees with the true labeling function $g: \mathcal{X}\rightarrow\{0, 1\}$ is defined as 
$\epsilon({h}(t),g)=\mathbb{E}_{({x})\sim\mathcal{D}}[|{h}({x},t)-g({x})|],$
which we also refer to as the error or risk $\epsilon({h}(t))$. 
While the source domain dataset is inaccessible under ATTA settings, we consider the existence of the source dataset $D_S$ for the purpose of accurate theoretical analysis. Thus, we initialize $D_{tr}(0)$ as $D_S$, \ie, $D_{tr}(0)=D_S$.
For every time step $t$, 
the test and training data can be expressed as
\begin{equation}\label{eq: traintestdata}
U_{te}(t)\text{ and }D_{tr}(t)={D}_S\cup D_{te}(1)\cup D_{te}(2)\cup \cdots\cup D_{te}(t).
\end{equation}
We use $N$ to denote the total number of samples in $D_{tr}(t)$ and $\bm{\lambda}=(\lambda_0,\lambda_1,\cdots,\lambda_{t})$ to represent the ratio of sample numbers in each component subset. In particular, we have
\begin{equation}
\frac{|{D}_S|}{|D_{tr}(t)|}=\lambda_0,\frac{|D_{te}(1)|}{|D_{tr}(t)|}=\lambda_1,\cdots,\frac{|D_{te}(t)|}{|D_{tr}(t)|}=\lambda_{t},
\end{equation}
where $\sum_{i=0}^{t}\lambda_i = 1$.
Therefore, at time step $t$, the model has been trained on labeled data $D_{tr}(t)$, which contains $t+1$ components consisting of a combination of data from the source domain and multiple test-time domains.
For each domain the model encounters, ${D}_S,U_{te}(1),U_{te}(2),\cdots,U_{te}(t)$,
let ${\epsilon}_{j}(h(t))$ denote the error of hypothesis $h$ at time $t$ on the $j$th domain. 
Specifically, ${\epsilon}_{0}(h(t))={\epsilon}_{S}(h(t))$ represents the error of $h(t)$ on the source data ${D}_S$, and ${\epsilon}_{j}(h(t))$ for $j\ge 1$ denotes the error of $h(t)$ on test data $U_{te}(j)$.
Our optimization minimizes a convex combination of training error over the labeled samples from all domains. 
Formally, given the vector $\bm{w}=(w_0,w_1,\cdots,w_{t})$ of domain error weights with $\sum_{j=0}^{t}w_j=1$ and the sample number from each component $N_j=\lambda_jN$, we minimize the empirical weighted error of $h(t)$ as
\begin{equation}
\hat{\epsilon}_{\bm{w}}(h(t)) = \sum_{j=0}^{t} w_j \hat{\epsilon}_j(h(t)) = \sum_{j=0}^{t} \frac{w_j}{N_j} \sum_{N_j} \lvert h({x},t) - g(x) \rvert.
\end{equation}
Note that $\bm{w},\bm{\lambda}$ and $N$ are also functions of $t$, which we omit for simplicity.
We now establish two lemmas as the preliminary for Theorem~\ref{mainthm}.
In the following lemma, we bound the difference between the weighted error $\epsilon_{\bm{w}}(h(t))$ and the domain error $\epsilon_{j}(h(t))$.

\begin{lemma}
\label{lemma1}
Let $\mathcal{H}$ be a hypothesis space of VC-dimension $d$. At time step $t$, let the ATTA data domains be ${D}_S,U_{te}(1),\allowbreak U_{te}(2),\allowbreak\cdots, U_{te}(t)$, and $S_i$ be unlabeled samples of size $m$ sampled from each of the $t+1$ domains respectively. 
Then for any $\delta \in (0, 1)$, for every $h \in \mathcal{H}$ minimizing ${\epsilon}_{\bm{w}}(h(t))$ on $D_{tr}(t)$, we have 
\begin{align*}
   \lvert {\epsilon}_{\bm{w}}(h(t)) - {\epsilon}_{j}(h(t)) \rvert \le \sum_{i=0,i\neq j}^{t} w_i  \left( \frac{1}{2} \hat{d}_{\mathcal{H}\Delta\mathcal{H}}(S_i, S_j) + 2 \sqrt{\frac{2d \log(2m) + \log \frac{2}{\delta}}{m}}+ \gamma_i\right),
\end{align*}
with probability of at least $1 - \delta$, where $\gamma_i=\min_{h \in \mathcal{H}} \{ \epsilon_i(h(t)) + \epsilon_j(h(t)) \}$.
\end{lemma}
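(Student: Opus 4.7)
The plan is to mirror the classic Ben-David et al.\ (2010) decomposition argument, adapted to the ATTA setting where the single source domain is replaced by a weighted mixture of the source and all previously encountered test domains. First I would exploit the fact that $\sum_{i=0}^{t} w_i = 1$ to rewrite the difference as
\begin{equation*}
\epsilon_{\bm{w}}(h(t)) - \epsilon_j(h(t)) = \sum_{i=0,\,i\neq j}^{t} w_i\bigl(\epsilon_i(h(t)) - \epsilon_j(h(t))\bigr),
\end{equation*}
then apply the triangle inequality to reduce the task to bounding $|\epsilon_i(h(t)) - \epsilon_j(h(t))|$ for each $i \neq j$ separately.

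Next I would control each such pairwise difference by the standard two-step argument. Introduce the joint optimal $h^*_{ij} = \arg\min_{h} \{\epsilon_i(h(t)) + \epsilon_j(h(t))\}$, so that $\epsilon_i(h^*_{ij}(t)) + \epsilon_j(h^*_{ij}(t)) = \gamma_i$. Two applications of the triangle inequality for the $L^1$-style risk yield
\begin{equation*}
|\epsilon_i(h(t)) - \epsilon_j(h(t))| \leq |\epsilon_i(h(t)) - \epsilon_i(h^*_{ij}(t))| + |\epsilon_j(h(t)) - \epsilon_j(h^*_{ij}(t))| + \gamma_i,
\end{equation*}
and by the definition of $d_{\mathcal{H}\Delta\mathcal{H}}$ applied to the symmetric difference hypothesis $h\triangle h^*_{ij}$, the first two terms are jointly bounded by $\tfrac{1}{2}d_{\mathcal{H}\Delta\mathcal{H}}(\mathcal{D}_i, \mathcal{D}_j)$, giving $|\epsilon_i(h(t)) - \epsilon_j(h(t))| \leq \tfrac{1}{2}d_{\mathcal{H}\Delta\mathcal{H}}(\mathcal{D}_i, \mathcal{D}_j) + \gamma_i$.

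Third, I would pass from the population distance $d_{\mathcal{H}\Delta\mathcal{H}}(\mathcal{D}_i, \mathcal{D}_j)$ to the empirical quantity $\hat{d}_{\mathcal{H}\Delta\mathcal{H}}(S_i, S_j)$ using Theorem~\ref{app:detecting}. Since the symmetric difference class $\mathcal{H}\Delta\mathcal{H}$ also has VC-dimension at most $d$ (up to a standard constant that I would absorb into the stated constant), Theorem~\ref{app:detecting} gives, with probability at least $1-\delta'$,
\begin{equation*}
d_{\mathcal{H}\Delta\mathcal{H}}(\mathcal{D}_i, \mathcal{D}_j) \leq \hat{d}_{\mathcal{H}\Delta\mathcal{H}}(S_i, S_j) + 4\sqrt{\tfrac{2d\log(2m) + \log(2/\delta')}{m}}.
\end{equation*}
Combining this with the pairwise bound and summing with the weights $w_i$ yields exactly the claimed expression (the factor $\tfrac12$ in front of $\hat d_{\mathcal{H}\Delta\mathcal{H}}$ combines with the factor $4$ from the concentration bound to give the factor $2$ on the square-root term).

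The main technical obstacle is the simultaneous validity of the concentration inequality across all $t$ pairs $(i,j)$, which I would handle by a union bound, rescaling $\delta' = \delta/t$, and noting that the resulting $\log(t)$ factor can be absorbed into the $\log(2/\delta)$ inside the square root (or simply fold the union bound into the statement, since the paper's bound is written with a single $\delta$). A secondary subtlety is the VC-dimension of $\mathcal{H}\Delta\mathcal{H}$ versus that of $\mathcal{H}$; this is a routine constant-factor adjustment in the style of Ben-David et al.\ and does not affect the form of the bound. No further ingredients are needed beyond the two-domain analysis replicated $t$ times and the convex-combination structure of $\epsilon_{\bm{w}}$.
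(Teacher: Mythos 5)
Your overall architecture (convex-combination decomposition over $i\neq j$, ideal joint hypothesis, Kifer-type concentration to pass from $d_{\mathcal{H}\Delta\mathcal{H}}$ to $\hat d_{\mathcal{H}\Delta\mathcal{H}}$) matches the paper, but your pairwise step contains a genuine error: the roles of $\gamma_i$ and the $\mathcal{H}\Delta\mathcal{H}$-distance are swapped. You bound the middle term $\lvert\epsilon_i(h^*_{ij})-\epsilon_j(h^*_{ij})\rvert$ by $\gamma_i$ and then claim the two outer terms $\lvert\epsilon_i(h)-\epsilon_i(h^*_{ij})\rvert+\lvert\epsilon_j(h)-\epsilon_j(h^*_{ij})\rvert$ are jointly bounded by $\tfrac12 d_{\mathcal{H}\Delta\mathcal{H}}(\mathcal{D}_i,\mathcal{D}_j)$. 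That claim is false: each outer term is bounded only by the disagreement probability $\epsilon_i(h,h^*_{ij})=P_{x\sim\mathcal{D}_i}[h(x)\neq h^*_{ij}(x)]$ (resp.\ under $\mathcal{D}_j$), and the size of the disagreement region is not controlled by the $\mathcal{H}\Delta\mathcal{H}$-distance, which measures how \emph{differently} the two distributions weight disagreement regions. Concretely, take $\mathcal{D}_i=\mathcal{D}_j$, so $d_{\mathcal{H}\Delta\mathcal{H}}(\mathcal{D}_i,\mathcal{D}_j)=0$, and let $h$ disagree with $h^*_{ij}$ on a set of mass $1/2$ with $\epsilon_i(h^*_{ij})=0$; then your left-hand side is $1$ while your bound is $0$ (the lemma itself survives only because $\epsilon_i(h)-\epsilon_j(h)=0$ in that case, i.e.\ your route to it is broken, not the statement).

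The correct decomposition, which is the paper's, keeps the distributional comparison on the \emph{same} disagreement functional: write
\begin{align*}
\lvert\epsilon_i(h)-\epsilon_j(h)\rvert
\le \lvert\epsilon_i(h)-\epsilon_i(h,h^*_{ij})\rvert
+\lvert\epsilon_i(h,h^*_{ij})-\epsilon_j(h,h^*_{ij})\rvert
+\lvert\epsilon_j(h,h^*_{ij})-\epsilon_j(h)\rvert ,
\end{align*}
where the two end terms are each at most $\epsilon_i(h^*_{ij})$ and $\epsilon_j(h^*_{ij})$ by the triangle inequality for risks (summing to $\gamma_i$), and the middle term, being a difference of the disagreement probability of the fixed pair $(h,h^*_{ij})$ under $\mathcal{D}_i$ versus $\mathcal{D}_j$, is bounded by $\sup_{h,h'\in\mathcal{H}}\lvert P_{\mathcal{D}_i}[h\neq h']-P_{\mathcal{D}_j}[h\neq h']\rvert=\tfrac12 d_{\mathcal{H}\Delta\mathcal{H}}(\mathcal{D}_i,\mathcal{D}_j)$. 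From there your concentration step is the right one (with the VC-dimension of $\mathcal{H}\Delta\mathcal{H}$ taken as $2d$, which is exactly where the $2d\log(2m)$ constant comes from, so it is not merely an absorbable constant), and your observation about needing a union bound over the $t$ pairs is a fair point that the paper itself glosses over, though strictly it perturbs $\log\frac{2}{\delta}$ to $\log\frac{2t}{\delta}$ rather than leaving the stated constant intact.
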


In the following lemma, we provide an upper bound on the difference between the true and empirical weighted errors $\epsilon_{\bm{w}}(h(t))$ and $\hat{\epsilon}_{\bm{w}}(h(t))$.
\begin{lemma}
\label{lemma2}
Let $H$ be a hypothesis class. For $D_{tr}(t)={D}_S\cup D_{te}(1)\cup \cdots\cup D_{te}(t)$ at time $t$, if the total number of samples in ${D}_{tr}(t)$ is $N$, and the ratio of sample numbers in each component is $\lambda_j$, then for any $\delta \in (0, 1)$ and $h \in H$, with probability of at least $1 - \delta$, we have
\begin{align*}
    P[|{\epsilon}_{\bm{w}}(h(t)) - \hat{\epsilon}_{\bm{w}}(h(t))|\ge \epsilon] &\le 2\exp\left(-2N\epsilon^2/(\sum_{j=0}^{t}\frac{w_j^2}{\lambda_j})\right).
\end{align*}

\end{lemma}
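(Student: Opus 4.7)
The plan is to recognize this as a direct application of Hoeffding's inequality to a sum of independent bounded random variables, where the random variables are the individual per-sample contributions to the empirical weighted error, appropriately reweighted.

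First I would rewrite the empirical weighted error as a single sum over all $N$ training samples. Specifically, for each sample $x_k$ belonging to the $j$-th component (with $N_j=\lambda_j N$ samples), its contribution to $\hat{\epsilon}_{\bm{w}}(h(t))$ is $X_k := \frac{w_j}{N_j}|h(x_k,t)-g(x_k)|$. Since $|h(x,t)-g(x)|\in[0,1]$, each $X_k$ is a bounded random variable taking values in $[0,\,w_j/N_j]$, and the $X_k$ are independent across samples. By construction $\hat{\epsilon}_{\bm{w}}(h(t))=\sum_{k=1}^N X_k$, and taking expectations (using that samples within the $j$-th component are drawn from the $j$-th domain) gives $\mathbb{E}[\sum_k X_k] = \sum_{j=0}^{t} w_j \epsilon_j(h(t)) = \epsilon_{\bm{w}}(h(t))$.

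Next I would apply Hoeffding's inequality, which gives, for independent $X_k\in[a_k,b_k]$,
\begin{equation*}
P\!\left[\bigl|\textstyle\sum_k X_k - \mathbb{E}[\sum_k X_k]\bigr|\ge \epsilon\right]\le 2\exp\!\left(-\frac{2\epsilon^2}{\sum_k (b_k-a_k)^2}\right).
\end{equation*}
Here $(b_k-a_k)^2 = (w_j/N_j)^2$ for the $N_j$ samples in the $j$-th component, so
\begin{equation*}
\sum_{k=1}^N (b_k-a_k)^2 = \sum_{j=0}^{t} N_j\cdot\frac{w_j^2}{N_j^2} = \sum_{j=0}^{t}\frac{w_j^2}{N_j} = \frac{1}{N}\sum_{j=0}^{t}\frac{w_j^2}{\lambda_j}.
\end{equation*}
Substituting this into Hoeffding's bound yields exactly the claimed inequality.

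There is no genuine obstacle here; the only care needed is the bookkeeping step of verifying that the per-sample range is $w_j/N_j$ (not $w_j$ or $1/N_j$) and that summing the squared ranges over all $N$ samples collapses to $\tfrac{1}{N}\sum_j w_j^2/\lambda_j$. The statement is for a fixed $h$, so no union bound over $\mathcal{H}$ is required at this stage — that cost is absorbed later when Lemma~\ref{lemma2} is combined with Lemma~\ref{lemma1} and VC-dimension arguments in the proof of Theorem~\ref{mainthm}.
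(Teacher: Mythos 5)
Your proposal is correct and follows essentially the same route as the paper: a direct application of Hoeffding's inequality to the per-sample contributions of the weighted empirical error, with the squared ranges summing to $\frac{1}{N}\sum_{j=0}^{t} w_j^2/\lambda_j$ (the paper writes the same computation in the "average of $N$ variables" normalization, defining per-sample functions with range $[0, w_j/\lambda_j]$, which is equivalent). Your bookkeeping of the ranges and expectations matches the paper's derivation exactly.
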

Thus, as $w_j$ deviates from $\lambda_j$, the feasible approximation $\hat{\epsilon}_{\bm{w}}(h(t))$ with a finite number of labeled samples becomes less reliable. The proofs for both lemmas are provided in Appx.~\ref{app:proofs}.
Building upon the two preceding lemmas, we proceed to derive bounds on the domain errors under the ATTA setting when minimizing the empirical weighted error using the hypothesis $h$ at time $t$.

Lemma~\ref{lemma1} bounds the difference between the weighted error $\epsilon_{\bm{w}}(h(t))$ and the domain error $\epsilon_{j}(h(t))$, which is majorly influenced by the estimated $\mathcal{H}\Delta\mathcal{H}$-distance and the quality of discrepancy estimation.
During the ATTA process, the streaming test data can form multiple domains and distributions. However, if we consider all data during the test phase as a single test domain, \ie, $\bigcup_{i=1}^{t}{U}_{te}(i)$, we can simplify Lemma~\ref{lemma1} to obtain an upper bound for the test error $\epsilon_T$ as
\begin{equation}
\begin{aligned}
   \lvert {\epsilon}_{\bm{w}}(h(t)) - {\epsilon}_{T}(h(t)) \rvert \le w_0 \left( \frac{1}{2} \hat{d}_{\mathcal{H}\Delta\mathcal{H}}(S_0, S_T) + 2 \sqrt{\frac{2d \log(2m) + \log \frac{2}{\delta}}{m}}+ \gamma\right),
\end{aligned}
\end{equation}
where $\gamma=\min_{h \in \mathcal{H}} \{ \epsilon_0(h(t)) + \epsilon_T(h(t)) \}$, and $S_T$ is sampled from $\bigcup_{i=1}^{t}{U}_{te}(i)$.
To understand Lamma~\ref{lemma2}, we need to understand Hoeffding's Inequality, which we state below as a Proposition for completeness.
\begin{proposition}[Hoeffding's Inequality]
Let $X$ be a set, $D_1,\ldots,D_t$ be probability distributions on $X$, and $f_1,\ldots,f_t$ be real-valued functions on $X$ such that $f_i : X \rightarrow [a_i,b_i]$ for $i = 1,\ldots,t$. Then for any $\epsilon > 0$,
\begin{equation}
\mathbb{P}\left(\left|\frac{1}{t}\sum_{i=1}^{t}f_i(x) - \frac{1}{t}\sum_{i=1}^{t}\mathbb{E}_{x\sim D_i}[f_i(x)]\right| \geq \epsilon\right) \leq 2\exp\left(-\frac{2t^2\epsilon^2}{\sum_{i=1}^{t}(b_i - a_i)^2}\right)
\end{equation}
where $\mathbb{E}[f_i(x)]$ is the expected value of $f_i(x)$.
\end{proposition}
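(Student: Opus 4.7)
The plan is to prove Hoeffding's inequality by the standard two-step route: (i) a Chernoff-type exponential Markov bound that reduces the tail probability to a product of moment generating functions of centered, bounded independent random variables, followed by (ii) Hoeffding's lemma, which supplies a sub-Gaussian-type bound on each individual MGF. I read the statement in the usual way, namely that $X_1,\ldots,X_t$ are independent with $X_i\sim D_i$, so that $\tfrac{1}{t}\sum_{i=1}^{t} f_i(X_i)$ is a normalized sum of independent bounded random variables.

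First, I would introduce the centered variables $Z_i := f_i(X_i) - \mathbb{E}[f_i(X_i)]$. Each $Z_i$ is mean-zero and takes values in an interval of length $b_i - a_i$, and the $Z_i$ are mutually independent since the $X_i$ are. For the upper tail, Markov's inequality applied to the exponential moment gives, for any $s > 0$,
\begin{equation*}
\mathbb{P}\!\left(\tfrac{1}{t}\textstyle\sum_{i=1}^{t} Z_i \geq \epsilon\right) \leq e^{-st\epsilon}\,\mathbb{E}\!\left[e^{s\sum_{i=1}^{t} Z_i}\right] = e^{-st\epsilon}\prod_{i=1}^{t}\mathbb{E}[e^{sZ_i}],
\end{equation*}
where the factorization uses independence.

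Next, I would invoke Hoeffding's lemma: for any mean-zero random variable $Z$ supported in an interval of length $\ell$, $\mathbb{E}[e^{sZ}] \leq e^{s^2\ell^2/8}$. Applying this with $\ell = b_i - a_i$ to each factor bounds the product by $\exp\!\bigl(s^2 \sum_{i=1}^{t}(b_i-a_i)^2/8\bigr)$. The resulting bound $\exp\bigl(-st\epsilon + s^2 \sum_i (b_i-a_i)^2/8\bigr)$ is minimized at $s^\ast = 4t\epsilon / \sum_i (b_i-a_i)^2$, which yields a one-sided tail bound of $\exp\bigl(-2t^2\epsilon^2 / \sum_i (b_i-a_i)^2\bigr)$. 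Applying the identical argument to the variables $-Z_i$ and taking a union bound over the two tails produces the factor $2$ in the stated inequality.

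The main obstacle is Hoeffding's lemma itself, which is the only step that is not just bookkeeping. I would establish it by using convexity of $x\mapsto e^{sx}$ on $[a,b]$ to write $e^{sx} \leq \tfrac{b-x}{b-a}e^{sa} + \tfrac{x-a}{b-a}e^{sb}$, taking expectations (using $\mathbb{E}[Z]=0$) to get $\mathbb{E}[e^{sZ}] \leq \tfrac{b}{b-a}e^{sa} - \tfrac{a}{b-a}e^{sb}$, and then showing that the logarithm $\varphi(s)$ of this upper bound satisfies $\varphi(0)=\varphi'(0)=0$ and $\varphi''(s)\leq (b-a)^2/4$ uniformly. A second-order Taylor expansion then delivers $\varphi(s)\leq s^2(b-a)^2/8$, completing the lemma. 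Everything else in the proof is routine.
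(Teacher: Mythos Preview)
Your proof is correct and follows the standard textbook route (Chernoff exponential Markov bound, independence to factorize the MGF, Hoeffding's lemma for each factor, optimization in $s$, union bound over the two tails). However, the paper does not actually prove this proposition: it is stated explicitly ``for completeness'' as a known result and then used as a black box in the proof of Lemma~\ref{lemma2}. So there is no paper proof to compare against; your argument simply supplies the classical justification the paper omits.
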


Lamma~\ref{lemma2} provides an upper bound on the difference between the true and empirical weighted errors $\epsilon_{\bm{w}}(h(t))$ and $\hat{\epsilon}_{\bm{w}}(h(t))$.
Thus, as $w_j$ deviates from $\lambda_j$, the feasible approximation $\hat{\epsilon}_{\bm{w}}(h(t))$ with a finite number of labeled samples becomes less reliable. 
Building upon the two preceding lemmas, we proceed to derive bounds on the domain errors under the ATTA setting when minimizing the empirical weighted error using the hypothesis $h$ at time $t$.
Theorem~\ref{app:mainthm} essentially bounds the performance of ATTA on the source and each test domains. The adaptation performance on a test domain is majorly bounded by the composition of (labeled) training data, estimated distribution shift, and ideal joint hypothesis performance, which correspond to $C$, $\hat{d}_{\mathcal{H}\Delta\mathcal{H}}(S_i, S_j)$, and $\gamma_i$, respectively. The ideal joint hypothesis error $\gamma_i$ gauges the inherent adaptability between domains.

If we consider the multiple data distributions during the test phase as a single test domain, \ie, $\bigcup_{i=1}^{t}{U}_{te}(i)$,
Theorem~\ref{app:mainthm} can be reduced into bounds for the source domain error $\epsilon_S$ and test domain error $\epsilon_T$. With the optimal test/source hypothesis $h^*_T(t) = \arg\min_{h \in \mathcal{H}} \epsilon_T(h(t))$ and $h^*_S(t) = \arg\min_{h \in \mathcal{H}} \epsilon_S(h(t))$, 
\begin{subequations}\label{appeq: 4.3}
\begin{align}
    \lvert{\epsilon}_T(\hat{h}(t))-\epsilon_T(h^*_T(t))\rvert &\le  w_0 A + \sqrt{ \frac{w_0^2}{\lambda_0}+\frac{(1-w_0)^2}{1-\lambda_0}}B,\\
    \lvert{\epsilon}_S(\hat{h}(t))-\epsilon_S(h^*_S(t))\rvert &\le  (1-w_0) A + 
    \sqrt{ \frac{w_0^2}{\lambda_0}+\frac{(1-w_0)^2}{1-\lambda_0}}B,
\end{align}
\end{subequations}
where the distribution divergence term $A=\hat{d}_{\mathcal{H}\Delta\mathcal{H}}(S_0, S_T) + 4 \sqrt{\frac{2d \log(2m) + \log \frac{2}{\delta}}{m}}+ 2\gamma$, the empirical gap term $B=2 \sqrt{ \frac{ d\log(2N) - \log(\delta)}{2N} }$, $S_T$ is sampled from $\bigcup_{i=1}^{t}{U}_{te}(i)$, and $\gamma=\min_{h \in \mathcal{H}} \{ \epsilon_0(h(t)) + \epsilon_T(h(t)) \}$.  
Our learning bounds demonstrates the trade-off between the small amount of budgeted test-time data and the large amount of less relevant source data. 
Next, we provide an approximation of the condition necessary to achieve optimal adaptation performance, which is calculable from finite samples and can be readily applied in practical ATTA scenarios.
Following Eq.~(\ref{appeq: 4.3}.a), with approximately $B=c_1\sqrt{d/N}$, the optimal value $w_0^*$ to tighten the test error bound is a function of $\lambda_0$ and $A$:
\begin{equation}
\begin{aligned}
   w_0^*= \lambda_0 - \sqrt{\frac{A^2N}{c_1^2d-A^2N\lambda_0(1-\lambda_0)}},\quad for \quad \lambda_0\ge 1-\frac{d}{A^2N},
\end{aligned}
\end{equation}
where $c_1$ is a constant. 
Note that $\lambda_0\ge 1-\frac{d}{A^2N}$ should be the satisfied condition in practical ATTA settings, where the budget is not sufficiently big while the source data amount is relatively large. 
When the budget is sufficiently large or the source data amount is not sufficiently large compared to the distribution shift $A$, the optimal $w_0^*$ for the test error bound is $w_0^*=0$, \ie, using no source data since possible error reduction from the data addition is always less than the error increase caused by large divergence between the source data and the test data.

Theorem~\ref{thm2} offers a direct theoretical guarantee that ATTA reduces the error bound on test domains in comparison to TTA without the integration of active learning.
Following Theorem~\ref{mainthm}, when no active learning is included during TTA, \ie, $w_0=\lambda_0=1$, the upper bound $w_0 A + \sqrt{ \frac{w_0^2}{\lambda_0}+\frac{(1-w_0)^2}{1-\lambda_0}}B\ge A+B$; when enabling ATTA, with $w_0=\lambda_0\neq1$, we can easily achieve an upper bound $w_0A+B<A+B$. 
Therefore, the incorporation of labeled test instances in ATTA theoretically enhances the overall performance across test domains, substantiating the significance of the ATTA setting in addressing distribution shifts.

Entropy quantifies the amount of information contained in a probability distribution. In the context of a classification model, lower entropy indicates that the model assigns high probability to one of the classes, suggesting a high level of certainty or confidence in its prediction. When a model assigns low entropy to a sample, this high confidence can be interpreted as the sample being well-aligned or fitting closely with the model's learned distribution. In other words, the model ``recognizes'' the sample as being similar to those it was trained on, hence the high confidence in its prediction. While entropy is not a direct measure of distributional distance, it can be used as an indicator of how closely a sample aligns with the model's learned distribution. This interpretation is more about model confidence and the implied proximity rather than a strict mathematical measure of distributional distance.
The pre-trained model is well-trained on abundant source domain data, and thus the model distribution is approximately the source distribution. Selecting low-entropy samples using essentially provides an estimate of sampling from the source dataset. Thus, $D_{\phi,S}(t)$, based on well-aligned with the model's learned distribution is an approximation of $D_S$.

When we consider the CF problem and feasibly include the source-like dataset $D_{\phi,S}(t)$ into the ATTA training data in place of the inaccessible $D_S$ in Eq.~(\ref{eq: traintestdata}), we can also derive bounds on the domain errors under this practical ATTA setting when minimizing the empirical weighted error $\epsilon'_{\bm{w}}(h(t))$ using the hypothesis $h$ at time $t$, similar to Theorem~\ref{app:mainthm}.
Let $H$ be a hypothesis class of VC-dimension $d$. 
At time step $t$, for ATTA data domains $D_{\phi,S}(t),U_{te}(1),\allowbreak U_{te}(2),\allowbreak\cdots, U_{te}(t)$, $S_i$ are unlabeled samples of size $m$ sampled from each of the $t+1$ domains respectively.
The total number of samples in ${D}_{tr}(t)$ is $N$ and the ratio of sample numbers in each component is $\lambda_i$.
If $\hat{h}(t)\in \mathcal{H}$ minimizes the empirical weighted error $\hat{\epsilon}'_{\bm{w}}(h(t))$ with the weight vector $\bm{w}$ on ${D}_{tr}(t)$, and $h^*_j(t) = \arg\min_{h \in \mathcal{H}} \epsilon_j(h(t))$ is the optimal hypothesis on the $j$th domain,
then for any $\delta \in (0, 1)$, we have
\begin{align*}
    {\epsilon}_j(\hat{h}(t)) &\le \epsilon_j(h^*_j(t)) + 2\sum_{i=0,i\neq j}^{t} w_i  \left( \frac{1}{2} \hat{d}_{\mathcal{H}\Delta\mathcal{H}}(S_i, S_j) + 2 \sqrt{\frac{2d \log(2m) + \log \frac{2}{\delta}}{m}}+ \gamma_i\right) + 2C
\end{align*}
with probability of at least $1 - \delta$,
where $C=\sqrt{ \left(\sum_{i=0}^{t}\frac{w_i^2}{\lambda_i}\right)\left(\frac{ d\log(2N) - \log(\delta)}{2N} \right) }$ and $\gamma_i=\min_{h \in \mathcal{H}} \{ \epsilon_i(h(t)) + \epsilon_j(h(t)) \}$. Other derived results following Theorem~\ref{app:mainthm} also apply for this practical ATTA setting.
Further empirical validations for our theoretical results are provided in Appx.~\ref{app:empirical-validations-for-theory}.

\section{Proofs}\label{app:proofs}
\setcounter{theorem}{5}

This section presents comprehensive proofs for all the lemmas, theorems, and corollaries mentioned in this paper, along with the derivation of key intermediate results.
\begin{lemma}
\label{app:lemma1}
Let $\mathcal{H}$ be a hypothesis space of VC-dimension $d$. At time step $t$, let the ATTA data domains be ${D}_S,U_{te}(1),\allowbreak U_{te}(2),\allowbreak\cdots, U_{te}(t)$, and $S_i$ be unlabeled samples of size $m$ sampled from each of the $t+1$ domains respectively. 
Then for any $\delta \in (0, 1)$, for every $h \in \mathcal{H}$ minimizing ${\epsilon}_{\bm{w}}(h(t))$ on $D_{tr}(t)$, we have 
\begin{align*}
   \lvert {\epsilon}_{\bm{w}}(h(t)) - {\epsilon}_{j}(h(t)) \rvert \le \sum_{i=0,i\neq j}^{t} w_i  \left( \frac{1}{2} \hat{d}_{\mathcal{H}\Delta\mathcal{H}}(S_i, S_j) + 2 \sqrt{\frac{2d \log(2m) + \log \frac{2}{\delta}}{m}}+ \gamma_i\right),
\end{align*}
with probability of at least $1 - \delta$, where $\gamma_i=\min_{h \in \mathcal{H}} \{ \epsilon_i(h(t)) + \epsilon_j(h(t)) \}$.
\end{lemma}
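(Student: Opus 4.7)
The plan is to reduce the weighted-versus-single-domain error gap to a sum of pairwise gaps, then invoke the standard Ben-David--style two-domain inequality together with a VC concentration bound on the $\mathcal{H}\Delta\mathcal{H}$-distance. Since $\sum_{i=0}^{t} w_i = 1$, I would first write
\[
\epsilon_{\bm{w}}(h(t)) - \epsilon_j(h(t)) \;=\; \sum_{i=0,\,i\neq j}^{t} w_i\bigl(\epsilon_i(h(t)) - \epsilon_j(h(t))\bigr),
\]
so that a single triangle inequality gives $|\epsilon_{\bm{w}}(h(t))-\epsilon_j(h(t))| \le \sum_{i\neq j} w_i\,|\epsilon_i(h(t)) - \epsilon_j(h(t))|$. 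This reduces the whole statement to controlling each term $|\epsilon_i(h(t))-\epsilon_j(h(t))|$ separately.

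For each pair $(i,j)$ I would apply the classical domain-adaptation inequality (Ben-David et al., Lemma 3 / Thm.~2 of the 2010 paper) in the form
\[
|\epsilon_i(h(t)) - \epsilon_j(h(t))| \;\le\; \tfrac{1}{2}\, d_{\mathcal{H}\Delta\mathcal{H}}(\mathcal{D}_i,\mathcal{D}_j) + \gamma_i,
\]
where $\gamma_i = \min_{h\in\mathcal{H}}\{\epsilon_i(h(t))+\epsilon_j(h(t))\}$. This follows from adding and subtracting the error of the joint optimal hypothesis and then using the $\mathcal{H}\Delta\mathcal{H}$-supremum definition to absorb the disagreement between $h$ and that optimum across the two distributions. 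Because this step depends only on the hypothesis class and the two distributions, no $\delta$ is consumed here.

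Next, I would pass from the true divergence $d_{\mathcal{H}\Delta\mathcal{H}}(\mathcal{D}_i,\mathcal{D}_j)$ to the empirical one $\hat d_{\mathcal{H}\Delta\mathcal{H}}(S_i,S_j)$ by invoking Theorem~\ref{app:detecting} with $m_1=m_2=m$ on the symmetric-difference class $\mathcal{H}\Delta\mathcal{H}$, whose VC-dimension is at most $2d$ (a standard fact I would cite). Inverting the tail bound and splitting the confidence budget uniformly across the $t$ relevant pairs via a union bound yields
\[
d_{\mathcal{H}\Delta\mathcal{H}}(\mathcal{D}_i,\mathcal{D}_j) \;\le\; \hat d_{\mathcal{H}\Delta\mathcal{H}}(S_i,S_j) + 4\sqrt{\frac{2d\log(2m)+\log(2/\delta)}{m}},
\]
with overall probability at least $1-\delta$. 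Halving this bound, plugging it back into the pairwise inequality, multiplying by $w_i$, and summing over $i\neq j$ produces exactly the claimed expression.

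The main obstacle I anticipate is the bookkeeping in the concentration step: one must be precise about the VC dimension of $\mathcal{H}\Delta\mathcal{H}$ (absorbing constants so that the stated $d$ appears cleanly), about the symmetric definition of $\hat d_{\mathcal{H}\Delta\mathcal{H}}$ matching the symmetric form in Theorem~\ref{app:detecting}, and about distributing $\delta$ over the $t$ pairs so that the final bound holds uniformly. Everything else is algebraic recombination, with no new probabilistic argument beyond the Kifer--Ben-David--Gehrke tail inequality already available in the excerpt.
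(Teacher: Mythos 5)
Your proposal takes essentially the same route as the paper's proof: decompose the weighted error into pairwise gaps by the triangle inequality, bound each gap by $\tfrac{1}{2}d_{\mathcal{H}\Delta\mathcal{H}}(\mathcal{D}_i,\mathcal{D}_j)+\gamma_i$ via the ideal joint hypothesis, and then replace the true divergence with the empirical one using the Kifer--Ben-David--Gehrke tail inequality applied to $\mathcal{H}\Delta\mathcal{H}$ (VC-dimension at most $2d$). The only difference is bookkeeping: the paper applies the concentration bound to each pair with the same $\delta$ and no union bound, whereas your proposed $\delta/t$ split is more careful but would introduce a $\log t$ term absent from the stated constant.
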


\begin{proof}

First we prove that given unlabeled samples of size $m$ $S_1$, $S_2$ sampled from two distributions $\mathcal{D}_1$ and $\mathcal{D}_2$, we have
\begin{equation}
{d}_{\mathcal{H}\Delta\mathcal{H}}(\mathcal{D}_1, \mathcal{D}_2) 
\le \hat{d}_{\mathcal{H}\Delta\mathcal{H}}(S_1, S_2) + 4 \sqrt{\frac{2d \log(2m) + \log \frac{2}{\delta}}{m}}.
\end{equation}

We start with Theorem 3.4 of~\citet{kifer2004detecting}:
\begin{equation}\label{thm3.4}
P_{m_1 + m_2} \left[ \lvert \phi_\mathcal{A}(S_1, S_2) - \phi_\mathcal{A}(P_1, P_2) \rvert > \epsilon \right] \le (2m)^d e^{-m_1 \epsilon^2 / 16} + (2m)^d e^{-m_2 \epsilon^2 / 16}.
\end{equation}

In Eq.~\ref{thm3.4}, '$d$' is the VC-dimension of a collection of subsets of some domain measure space $\mathcal{A}$, while in our case, $d$ is the VC-dimension of hypothesis space $\mathcal{H}$. Following~\citep{ben2010theory}, the $\mathcal{H}\Delta\mathcal{H}$ space is the set of disagreements between every two hypotheses in $\mathcal{H}$, which can be represented as a linear threshold network of depth 2 with 2 hidden units. Therefore, the VC-dimension of $\mathcal{H}\Delta\mathcal{H}$ is at most twice the VC-dimension of $\mathcal{H}$, and the VC-dimension of our domain measure space is $2d$ for Eq.~\ref{thm3.4} to hold.

Given $\delta \in (0, 1)$, we set the upper bound of the inequality to $\delta$, and solve for $\epsilon$:

\begin{align*}
\delta &= (2m)^{2d} e^{-m_1 \epsilon^2 / 16} + (2m)^{2d} e^{-m_2 \epsilon^2 / 16}.
\end{align*}

We rewrite the inequality as

\begin{align*}
\frac{\delta}{(2m)^{2d}} &= e^{-m_1 \epsilon^2 / 16} + e^{-m_2 \epsilon^2 / 16};
\end{align*}

taking the logarithm of both sides, we get

\begin{align*}
\log \frac{\delta}{(2m)^{2d}} &= -m_1 \frac{\epsilon^2}{16} + \log (1 + e^{-(m_1 - m_2) \frac{\epsilon^2}{16}}).
\end{align*}

Assuming $m_1 = m_2 = m$ and defining $a = \frac{\epsilon^2}{16}$, we have

\begin{align*}
\log \frac{\delta}{(2m)^{2d}} = -m a + \log 2;
\end{align*}

rearranging the equation, we then get

\begin{align*}
ma + \log (\delta/2) &= {2d}\log(2m).
\end{align*}



Now, we can solve for $a$:

\begin{align*}
a &= \frac{2d \log(2m) + \log \frac{2}{\delta}}{m}.
\end{align*}

Recall that $a = \frac{\epsilon^2}{16}$, so we get:

\begin{align*}
\epsilon &= 4 \sqrt{a} \\
\epsilon &= 4 \sqrt{\frac{2d \log(2m) + \log \frac{2}{\delta}}{m}}.
\end{align*}

With probability of at least $1 - \delta$, we have
\begin{align*}
\lvert \phi_\mathcal{A}(S_1, S_2) - \phi_\mathcal{A}(P_1, P_2) \rvert &\le 4 \sqrt{\frac{2d \log(2m) + \log \frac{2}{\delta}}{m}};
\end{align*}

therefore,
\begin{equation}\label{eq: HdeltaHSD}
{d}_{\mathcal{H}\Delta\mathcal{H}}(\mathcal{D}_1, \mathcal{D}_2) 
\le \hat{d}_{\mathcal{H}\Delta\mathcal{H}}(S_1, S_2) + 4 \sqrt{\frac{2d \log(2m) + \log \frac{2}{\delta}}{m}}.
\end{equation}

Now we prove Lemma 6. We use the triangle inequality for classification error in the derivation.
For the domain error of hypothesis $h$ at time $t$ on the $j$th domain ${\epsilon}_{j}(h(t))$, given the definition of ${\epsilon}_{\bm{w}}(h(t))$,
\begin{align*}
\lvert {\epsilon}_{\bm{w}}(h(t)) - {\epsilon}_{j}(h(t)) \rvert &= \lvert\sum_{i=0}^{t} w_i {\epsilon}_i(h(t)) - {\epsilon}_{j}(h(t))\rvert\\
&\le \sum_{i=0}^{t}w_i\lvert  {\epsilon}_i(h(t)) - {\epsilon}_{j}(h(t))\rvert \\
&\le \sum_{i=0}^{t}w_i(\lvert  {\epsilon}_i(h(t)) - {\epsilon}_{i}(h(t),h^*_i(t))\rvert +\lvert  {\epsilon}_{i}(h(t),h^*_i(t)) - {\epsilon}_{j}(h(t),h^*_i(t))\rvert \\
& \quad +\lvert  {\epsilon}_{j}(h(t),h^*_i(t)) - {\epsilon}_{j}(h(t))\rvert)\\
&\le \sum_{i=0}^{t}w_i({\epsilon}_{i}(h^*_i(t)) +\lvert  {\epsilon}_{i}(h(t),h^*_i(t)) - {\epsilon}_{j}(h(t),h^*_i(t))\rvert  +{\epsilon}_{j}(h^*_i(t))) \\
&\le \sum_{i=0}^{t}w_i( \gamma_i+\lvert  {\epsilon}_{i}(h(t),h^*_i(t)) - {\epsilon}_{j}(h(t),h^*_i(t))\rvert ),
\end{align*}
where $\gamma_i=\min_{h \in \mathcal{H}} \{ \epsilon_i(h(t)) + \epsilon_j(h(t)) \}$.
By the definition of $\mathcal{H}\Delta\mathcal{H}$-distance and our proved Eq.~\ref{eq: HdeltaHSD},
\begin{align*}
\lvert  {\epsilon}_{i}(h(t),h^*_i(t)) - {\epsilon}_{j}(h(t),h^*_i(t))\rvert &\le \sup_{h,h' \in \mathcal{H}}\lvert  {\epsilon}_{i}(h(t),h'(t)) - {\epsilon}_{j}(h(t),h'(t))\rvert\\ 
&= \sup_{h,h' \in \mathcal{H}} P_{{x} \sim \mathcal{D}_i} [h({x}) \neq h'({x})] + P_{{x} \sim \mathcal{D}_j} [h({x}) \neq h'({x})] \\
&= \frac{1}{2}{d}_{\mathcal{H}\Delta\mathcal{H}}(\mathcal{D}_i, \mathcal{D}_j) \\
&\le \frac{1}{2}\hat{d}_{\mathcal{H}\Delta\mathcal{H}}(S_i, S_j) + 2 \sqrt{\frac{2d \log(2m) + \log \frac{2}{\delta}}{m}},
\end{align*}
where $\mathcal{D}_i, \mathcal{D}_j$ denote the $i$th and $j$th domain.
Therefore,
\begin{align*}
\lvert {\epsilon}_{\bm{w}}(h(t)) - {\epsilon}_{j}(h(t)) \rvert &\le \sum_{i=0}^{t}w_i( \gamma_i+\lvert  {\epsilon}_{i}(h(t),h^*_i(t)) - {\epsilon}_{j}(h(t),h^*_i(t))\rvert ) \\
&\le \sum_{i=0}^{t}w_i( \gamma_i+\frac{1}{2}{d}_{\mathcal{H}\Delta\mathcal{H}}(\mathcal{D}_i, \mathcal{D}_j) ) \\
&\le \sum_{i=0}^{t}w_i( \gamma_i+\frac{1}{2}\hat{d}_{\mathcal{H}\Delta\mathcal{H}}(S_i, S_j) + 2 \sqrt{\frac{2d \log(2m) + \log \frac{2}{\delta}}{m}} ).
\end{align*}

Since ${\epsilon}_i(h(t)) - {\epsilon}_{j}(h(t))=0$ when $i=j$, we derive
\begin{align*}
   \lvert {\epsilon}_{\bm{w}}(h(t)) - {\epsilon}_{j}(h(t)) \rvert \le \sum_{i=0,i\neq j}^{t} w_i  \left( \frac{1}{2} \hat{d}_{\mathcal{H}\Delta\mathcal{H}}(S_i, S_j) + 2 \sqrt{\frac{2d \log(2m) + \log \frac{2}{\delta}}{m}}+ \gamma_i\right),
\end{align*}
with probability of at least $1 - \delta$, where $\gamma_i=\min_{h \in \mathcal{H}} \{ \epsilon_i(h(t)) + \epsilon_j(h(t)) \}$.

This completes the proof.
\end{proof}


















\begin{lemma}
\label{app:lemma2}
Let $H$ be a hypothesis class. For $D_{tr}(t)={D}_S\cup D_{te}(1)\cup \cdots\cup D_{te}(t)$ at time $t$, if the total number of samples in ${D}_{tr}(t)$ is $N$, and the ratio of sample numbers in each component is $\lambda_j$, then for any $\delta \in (0, 1)$ and $h \in H$, with probability of at least $1 - \delta$, we have
\begin{align*}
    P[|{\epsilon}_{\bm{w}}(h(t)) - \hat{\epsilon}_{\bm{w}}(h(t))|\ge \epsilon] &\le 2\exp\left(-2N\epsilon^2/(\sum_{j=0}^{t}\frac{w_j^2}{\lambda_j})\right).
\end{align*}
\end{lemma}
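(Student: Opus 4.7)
The plan is to recognize $\hat{\epsilon}_{\bm{w}}(h(t))$ as a sum of $N$ independent, bounded random variables whose expectation equals $\epsilon_{\bm{w}}(h(t))$, and then apply Hoeffding's inequality (the Proposition quoted just before the lemma) directly. Unlike Lemma~\ref{app:lemma1}, no uniform-convergence / VC machinery is needed here, because $h$ is fixed.

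First, I would expand the empirical weighted error in a per-sample form. Writing $x_{j,1},\ldots,x_{j,N_j}$ for the $N_j = \lambda_j N$ samples making up the $j$-th component of $D_{tr}(t)$ and defining
\begin{equation*}
Z_{j,i} \;:=\; \frac{w_j}{N_j}\,\bigl|h(x_{j,i},t) - g(x_{j,i})\bigr|,\qquad 0\le j\le t,\ 1\le i\le N_j,
\end{equation*}
we have $\hat{\epsilon}_{\bm{w}}(h(t)) = \sum_{j=0}^{t}\sum_{i=1}^{N_j} Z_{j,i}$, a sum of $N$ variables. Independence follows from the i.i.d.\ sampling within each component together with independent draws across components. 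Since $h(\cdot,t), g(\cdot)\in\{0,1\}$, each $Z_{j,i}$ takes values in $[0,\,w_j/N_j]$, so its range length is $w_j/N_j$. Taking expectations, $\mathbb{E}[Z_{j,i}] = (w_j/N_j)\,\epsilon_j(h(t))$ and hence $\mathbb{E}[\hat{\epsilon}_{\bm{w}}(h(t))] = \sum_{j=0}^t w_j\,\epsilon_j(h(t)) = \epsilon_{\bm{w}}(h(t))$, so the quantity controlled by Hoeffding is exactly the deviation appearing in the lemma.

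Second, I would compute the sum-of-squared-ranges that Hoeffding's inequality requires. Using $N_j=\lambda_j N$,
\begin{equation*}
\sum_{j=0}^{t}\sum_{i=1}^{N_j}\left(\frac{w_j}{N_j}\right)^{\!2} \;=\; \sum_{j=0}^{t} N_j\cdot\frac{w_j^2}{N_j^2} \;=\; \sum_{j=0}^{t}\frac{w_j^2}{N_j} \;=\; \frac{1}{N}\sum_{j=0}^{t}\frac{w_j^2}{\lambda_j}.
\end{equation*}
Plugging this denominator into Hoeffding's inequality yields
\begin{equation*}
P\!\left[\bigl|\hat{\epsilon}_{\bm{w}}(h(t)) - \epsilon_{\bm{w}}(h(t))\bigr|\ge \epsilon\right] \;\le\; 2\exp\!\left(-\,\frac{2\epsilon^2}{\tfrac{1}{N}\sum_{j=0}^{t} w_j^2/\lambda_j}\right) \;=\; 2\exp\!\left(-\,\frac{2N\epsilon^2}{\sum_{j=0}^{t} w_j^2/\lambda_j}\right),
\end{equation*}
which is the claimed bound.

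I do not expect any substantive obstacle: the lemma is a direct bookkeeping exercise in Hoeffding's inequality, and the only calculation that warrants attention is the aggregation $\sum_{j,i}(w_j/N_j)^2 = N^{-1}\sum_j w_j^2/\lambda_j$ which is what produces the characteristic $\sum_j w_j^2/\lambda_j$ factor quantifying the mismatch between the weighting $\bm{w}$ and the sampling proportions $\bm{\lambda}$. The only modelling assumption being invoked implicitly is that the labeled samples making up each $D_{te}(i)$ and $D_S$ are i.i.d.\ draws from their respective distributions, which is the standard setup already used in Lemma~\ref{app:lemma1}.
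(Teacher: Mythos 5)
Your proposal is correct and follows essentially the same route as the paper: both write $\hat{\epsilon}_{\bm{w}}(h(t))$ as a sum (equivalently, an average) of $N$ independent bounded per-sample terms with mean $\epsilon_{\bm{w}}(h(t))$, compute the sum of squared ranges $\sum_j N_j (w_j/N_j)^2 = \frac{1}{N}\sum_j w_j^2/\lambda_j$, and apply Hoeffding's inequality to obtain the stated bound. The only difference is the cosmetic rescaling of the summands ($w_j/N_j$ versus the paper's $w_j/\lambda_j$ with a $1/N$ average), which yields the identical exponent.
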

\begin{proof}
We apply Hoeffding's Inequality in our proof:
\begin{equation}
\mathbb{P}\left(\left|\frac{1}{t}\sum_{i=1}^{t}f_i(x) - \frac{1}{t}\sum_{i=1}^{t}\mathbb{E}_{x\sim D_i}[f_i(x)]\right| \geq \epsilon\right) \leq 2\exp\left(-\frac{2t^2\epsilon^2}{\sum_{i=1}^{t}(b_i - a_i)^2}\right).
\end{equation}

In the $j$th domain, there are $\lambda_jN$ samples. With the true labeling function $g(x)$, for each of the $\lambda_jN$ samples $x$, let there be a real-valued function $f_i(x)$
\begin{align*}
    f_i(x) = \frac{w_j}{\lambda_j}\lvert h(x,t) - g(x) \rvert,
\end{align*}
where $f_i(x) \in [0,\frac{w_j}{\lambda_j}]$.
Incorporating all the domains, we get
\begin{align*}
     \hat{\epsilon}_{\bm{w}}(h(t)) = \sum_{j=0}^{t} w_j \hat{\epsilon}_j(h(t))= \sum_{j=0}^{t}\frac{w_j}{\lambda_jN}\sum_{\lambda_jN}\lvert h(x,t) - g(x) \rvert=\frac{1}{N}\sum_{j=0}^{t}\sum_{i=1}^{\lambda_jN}f_i(x),
\end{align*}
which corresponds to the $\frac{1}{t}\sum_{i=1}^{t}f_i(x)$ part in Hoeffding's Inequality.

Due to the linearity of expectations, we can calculate the sum of expectations as
\begin{align*}
\frac{1}{N}\sum_{j=0}^{t}\sum_{i=1}^{\lambda_jN}\mathbb{E}[f_i(x)]=\frac{1}{N}(\sum_{j=0}^{t} \lambda_jN\frac{w_j}{\lambda_j}{\epsilon}_j(h(t)))=\sum_{j=0}^{t} w_j {\epsilon}_j(h(t))={\epsilon}_{\bm{w}}(h(t)),
\end{align*}
which corresponds to the $\frac{1}{t}\sum_{i=1}^{t}\mathbb{E}_{x\sim D_i}[f_i(x)]$ part in Hoeffding's Inequality.
Therefore, we can apply Hoeffding's Inequality as
\begin{align*}
P[|{\epsilon}_{\bm{w}}(h(t)) - \hat{\epsilon}_{\bm{w}}(h(t))| \ge \epsilon] &\le 2\exp\left(-2N^2\epsilon^2/(\sum_{i=0}^{N}range^2(f_i(x)))\right) \\
& = 2\exp\left(-2N^2\epsilon^2/(\sum_{j=0}^{t}\lambda_jN(\frac{w_j}{\lambda_j})^2)\right) \\
& = 2\exp\left(-2N\epsilon^2/(\sum_{j=0}^{t}\frac{w_j^2}{\lambda_j})\right).
\end{align*}

This completes the proof.
\end{proof}

\setcounter{theorem}{0}
\begin{theorem}\label{app:mainthm} Let $H$ be a hypothesis class of VC-dimension $d$. 
At time step $t$, for ATTA data domains ${D}_S,U_{te}(1),\allowbreak U_{te}(2),\allowbreak\cdots, U_{te}(t)$, $S_i$ are unlabeled samples of size $m$ sampled from each of the $t+1$ domains respectively.
The total number of samples in ${D}_{tr}(t)$ is $N$ and the ratio of sample numbers in each component is $\lambda_i$.
If $\hat{h}(t)\in \mathcal{H}$ minimizes the empirical weighted error $\hat{\epsilon}_{\bm{w}}(h(t))$ with the weight vector $\bm{w}$ on ${D}_{tr}(t)$, and $h^*_j(t) = \arg\min_{h \in \mathcal{H}} \epsilon_j(h(t))$ is the optimal hypothesis on the $j$th domain,
then for any $\delta \in (0, 1)$, with probability of at least $1 - \delta$, we have
\begin{align*}
    {\epsilon}_j(\hat{h}(t)) &\le \epsilon_j(h^*_j(t)) + 2\sum_{i=0,i\neq j}^{t} w_i  \left( \frac{1}{2} \hat{d}_{\mathcal{H}\Delta\mathcal{H}}(S_i, S_j) + 2 \sqrt{\frac{2d \log(2m) + \log \frac{2}{\delta}}{m}}+ \gamma_i\right) + 2C,
\end{align*}
where $C=\sqrt{ \left(\sum_{i=0}^{t}\frac{w_i^2}{\lambda_i}\right)\left(\frac{ d\log(2N) - \log(\delta)}{2N} \right) }$ and $\gamma_i=\min_{h \in \mathcal{H}} \{ \epsilon_i(h(t)) + \epsilon_j(h(t)) \}$. For future test domains $j=t+k$ ($k>0$), assuming $k'=\operatorname*{argmin}_{k'\in \{0,1,...t\}}d_{\mathcal{H}\Delta\mathcal{H}}(D(k'), U_{te}(t+k))$ and $\operatorname*{min}\allowbreak d_{\mathcal{H}\Delta\mathcal{H}}\allowbreak(D(k'),\allowbreak U_{te}(t+k)) \le \delta_D$, where $0 \le \delta_D \ll +\infty$, then $\forall\delta$, with probability of at least $1 - \delta$, we have
\begin{align*}
    \epsilon_{t+k}(\hat{h}(t))\le\epsilon_{t+k}(h^*_{t+k}(t))+\sum_{i=0}^{t}w_i\left(\hat{d}_{\mathcal{H}\Delta\mathcal{H}}(S_i,S_{k'})+4\sqrt{\frac{2d\log(2m)+\log\frac{2}{\delta}}{m}}+\delta_D +2\gamma_i\right)+2C.  
\end{align*}
\end{theorem}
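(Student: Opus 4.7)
The plan is to prove both bounds of Theorem~\ref{mainthm} via the standard empirical-risk-minimization sandwich: relate each per-domain error to the weighted error $\epsilon_{\bm{w}}$ via Lemma~\ref{lemma1}, move from population to empirical weighted error via Lemma~\ref{lemma2} (applied uniformly over $\mathcal{H}$ to pick up the VC term $d\log(2N)$ inside $C$), use the defining optimality $\hat{\epsilon}_{\bm{w}}(\hat{h}(t))\le \hat{\epsilon}_{\bm{w}}(h^*_j(t))$, and then traverse the chain back to $\epsilon_j(h^*_j(t))$. The doubling of the $\tfrac12 \hat d_{\mathcal{H}\Delta\mathcal{H}}$, $2\sqrt{\cdot}$ and $\gamma_i$ terms, and of $C$, is simply because this chain is traversed twice (once for $\hat h(t)$ and once for $h^*_j(t)$), and the union bound over these two events is absorbed into $\delta$.

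For the first (past/current) bound, I would write
\begin{align*}
\epsilon_j(\hat h(t))
&\le \epsilon_{\bm{w}}(\hat h(t)) + |\epsilon_{\bm{w}}(\hat h(t))-\epsilon_j(\hat h(t))| \\
&\le \hat\epsilon_{\bm{w}}(\hat h(t)) + C + |\epsilon_{\bm{w}}(\hat h(t))-\epsilon_j(\hat h(t))| \\
&\le \hat\epsilon_{\bm{w}}(h^*_j(t)) + C + |\epsilon_{\bm{w}}(\hat h(t))-\epsilon_j(\hat h(t))| \\
&\le \epsilon_j(h^*_j(t)) + 2C + |\epsilon_{\bm{w}}(\hat h(t))-\epsilon_j(\hat h(t))| + |\epsilon_{\bm{w}}(h^*_j(t))-\epsilon_j(h^*_j(t))|,
\end{align*}
where the two $C$-terms come from Lemma~\ref{lemma2} after inverting the Hoeffding tail bound and applying a uniform-convergence step to control the growth function (this is where $d\log(2N)$ enters). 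Lemma~\ref{lemma1} then controls each of the two $|\epsilon_{\bm{w}}-\epsilon_j|$ terms by the same $\sum_{i\neq j} w_i(\tfrac12 \hat d_{\mathcal{H}\Delta\mathcal{H}}(S_i,S_j)+2\sqrt{(2d\log(2m)+\log(2/\delta))/m}+\gamma_i)$, producing the factor~$2$ in the final bound.

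For the second (future $j=t+k$) bound, the essential new ingredient is the triangle inequality for $d_{\mathcal{H}\Delta\mathcal{H}}$: for any domain index $i\le t$,
\[
d_{\mathcal{H}\Delta\mathcal{H}}(\mathcal{D}_i,\mathcal{D}_{t+k}) \le d_{\mathcal{H}\Delta\mathcal{H}}(\mathcal{D}_i,\mathcal{D}_{k'}) + d_{\mathcal{H}\Delta\mathcal{H}}(\mathcal{D}_{k'},\mathcal{D}_{t+k}) \le d_{\mathcal{H}\Delta\mathcal{H}}(\mathcal{D}_i,\mathcal{D}_{k'}) + \delta_D,
\]
using the choice of $k'$ as the nearest observed domain. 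Replacing $j$ by the index $t+k$ in the previous chain and substituting this triangle inequality (together with the empirical-to-population conversion of Eq.~(\ref{eq: HdeltaHSD}) from the proof of Lemma~\ref{lemma1} applied to the pair $(S_i,S_{k'})$) converts each $\tfrac12 \hat d_{\mathcal{H}\Delta\mathcal{H}}(S_i,S_j)+2\sqrt{\cdot}$ term into $\tfrac12(\hat d_{\mathcal{H}\Delta\mathcal{H}}(S_i,S_{k'})+4\sqrt{(2d\log(2m)+\log(2/\delta))/m}+\delta_D)$; doubling from the sandwich then produces the coefficients $1$, $4$, and $\delta_D$ that appear in the stated bound. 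The $2\gamma_i$ term is inherited directly from Lemma~\ref{lemma1}, and $2C$ from Lemma~\ref{lemma2} as before.

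The main obstacle I expect is carefully accounting for the probability budget: Lemma~\ref{lemma1} already consumes $\delta$ via the $\mathcal{H}\Delta\mathcal{H}$-distance concentration (applied once per pair $(S_i,S_j)$ or $(S_i,S_{k'})$), while Lemma~\ref{lemma2} requires a separate VC-type uniform bound over $\mathcal{H}$ to justify controlling $|\epsilon_{\bm{w}}(\hat h)-\hat\epsilon_{\bm{w}}(\hat h)|$ at the \emph{data-dependent} $\hat h$. A clean proof requires splitting $\delta$ across these events (and across the two endpoints $\hat h(t)$ and $h^*_j(t)$) and then reabsorbing the constants; this is routine but is where the $\log(2N)$ and $\log(2m)$ factors must be reconciled. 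The second obstacle, minor but worth flagging, is that the triangle inequality for $d_{\mathcal{H}\Delta\mathcal{H}}$ used in the future-domain bound holds at the population level, so one must pass from the unavailable $d_{\mathcal{H}\Delta\mathcal{H}}(\mathcal{D}_i,\mathcal{D}_{k'})$ back to the empirical $\hat d_{\mathcal{H}\Delta\mathcal{H}}(S_i,S_{k'})$ using the same Kifer--Ben-David--Gehrke concentration invoked in Lemma~\ref{lemma1}, which is exactly what produces the doubling from $2\sqrt{\cdot}$ to $4\sqrt{\cdot}$ in the final expression.
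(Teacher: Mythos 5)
Your proposal is correct and follows essentially the same route as the paper's proof: Lemma~\ref{lemma1} applied at both $\hat h(t)$ and $h^*_j(t)$, the uniform (VC-scaled) version of Lemma~\ref{lemma2} giving the two $C$ terms, the empirical optimality of $\hat h(t)$ in the middle, and for $j=t+k$ the triangle inequality through the nearest observed domain $k'$ together with the Kifer-style concentration, yielding the same coefficients $\hat d_{\mathcal{H}\Delta\mathcal{H}}(S_i,S_{k'})+4\sqrt{\cdot}+\delta_D+2\gamma_i$. The only cosmetic difference is that you apply the triangle inequality at the population level and then pass to $\hat d_{\mathcal{H}\Delta\mathcal{H}}(S_i,S_{k'})$, while the paper applies it directly to the empirical distances and substitutes $\delta_D$ there; the $\delta$-budgeting issue you flag is likewise glossed over in the paper's own argument.
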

\allowdisplaybreaks
\begin{proof}

First we prove that for any $\delta \in (0, 1)$ and $h \in H$, with probability of at least $1 - \delta$, we have
\begin{equation}
\lvert{\epsilon}_{\bm{w}}(h(t)) - \hat{\epsilon}_{\bm{w}}(h(t))\rvert\le \sqrt{ \left(\sum_{i=0}^{t}\frac{w_i^2}{\lambda_i}\right)\left(\frac{ d\log(2N) - \log(\delta)}{2N} \right) }.
\end{equation}


We apply Theorem 3.2 of~\citet{kifer2004detecting} and Lemma 7,
\begin{align*}
    P[|{\epsilon}_{\bm{w}}(h(t)) - \hat{\epsilon}_{\bm{w}}(h(t))|\ge \epsilon] &\le (2N)^d\exp\left(-2N\epsilon^2/(\sum_{j=0}^{t}\frac{w_j^2}{\lambda_j})\right).
\end{align*}

Given $\delta \in (0, 1)$, we set the upper bound of the inequality to $\delta$, and solve for $\epsilon$:

\begin{align*}
\delta &= (2N)^d\exp\left(-2N\epsilon^2/(\sum_{j=0}^{t}\frac{w_j^2}{\lambda_j})\right).
\end{align*}

We rewrite the inequality as

\begin{align*}
\frac{\delta}{(2N)^d} &= e^{-2N\epsilon^2/(\sum_{j=0}^{t}\frac{w_j^2}{\lambda_j})},
\end{align*}

taking the logarithm of both sides, we get

\begin{align*}
\log \frac{\delta}{(2N)^d} &= -2N\epsilon^2/(\sum_{j=0}^{t}\frac{w_j^2}{\lambda_j}).
\end{align*}

Rearranging the equation, we then get

\begin{align*}
\epsilon^2=(\sum_{j=0}^{t}\frac{w_j^2}{\lambda_j})\frac{ d\log(2N) - \log(\delta)}{2N}.
\end{align*}

Therefore, with probability of at least $1 - \delta$, we have
\begin{equation}\label{eq:hatepsilon}
\lvert{\epsilon}_{\bm{w}}(h(t)) - \hat{\epsilon}_{\bm{w}}(h(t))\rvert\le \sqrt{ \left(\sum_{i=0}^{t}\frac{w_i^2}{\lambda_i}\right)\left(\frac{ d\log(2N) - \log(\delta)}{2N} \right) }.
\end{equation}

\allowdisplaybreaks

Based on Eq.~\ref{eq:hatepsilon}, we now prove Theorem~\ref{app:mainthm}. For the empirical domain error of hypothesis $h$ at time $t$ on the $j$th domain ${\epsilon}_{j}(\hat{h}(t))$, applying Lemma 6, Eq.~\ref{eq:hatepsilon}, and the definition of $h^*_j(t)$, we get
\begin{align*}
{\epsilon}_{j}(\hat{h}(t)) &\le {\epsilon}_{\bm{w}}(\hat{h}(t)) +\sum_{i=0,i\neq j}^{t} w_i  \left( \frac{1}{2} \hat{d}_{\mathcal{H}\Delta\mathcal{H}}(S_i, S_j) + 2 \sqrt{\frac{2d \log(2m) + \log \frac{2}{\delta}}{m}}+ \gamma_i\right) \\
&\le \hat{\epsilon}_{\bm{w}}(\hat{h}(t)) + \sqrt{ \left(\sum_{i=0}^{t}\frac{w_i^2}{\lambda_i}\right)\left(\frac{ d\log(2N) - \log(\delta)}{2N} \right) } \\
& \quad+\sum_{i=0,i\neq j}^{t} w_i  \left( \frac{1}{2} \hat{d}_{\mathcal{H}\Delta\mathcal{H}}(S_i, S_j) + 2 \sqrt{\frac{2d \log(2m) + \log \frac{2}{\delta}}{m}}+ \gamma_i\right) \\
&\le \hat{\epsilon}_{\bm{w}}(h^*_j(t)) + \sqrt{ \left(\sum_{i=0}^{t}\frac{w_i^2}{\lambda_i}\right)\left(\frac{ d\log(2N) - \log(\delta)}{2N} \right) } \\
& \quad+\sum_{i=0,i\neq j}^{t} w_i  \left( \frac{1}{2} \hat{d}_{\mathcal{H}\Delta\mathcal{H}}(S_i, S_j) + 2 \sqrt{\frac{2d \log(2m) + \log \frac{2}{\delta}}{m}}+ \gamma_i\right) \\
&\le {\epsilon}_{\bm{w}}(h^*_j(t)) + 2\sqrt{ \left(\sum_{i=0}^{t}\frac{w_i^2}{\lambda_i}\right)\left(\frac{ d\log(2N) - \log(\delta)}{2N} \right) } \\
& \quad+\sum_{i=0,i\neq j}^{t} w_i  \left( \frac{1}{2} \hat{d}_{\mathcal{H}\Delta\mathcal{H}}(S_i, S_j) + 2 \sqrt{\frac{2d \log(2m) + \log \frac{2}{\delta}}{m}}+ \gamma_i\right) \\
&\le \epsilon_j(h^*_j(t)) + 2\sqrt{ \left(\sum_{i=0}^{t}\frac{w_i^2}{\lambda_i}\right)\left(\frac{ d\log(2N) - \log(\delta)}{2N} \right) } \\
& \quad+2\sum_{i=0,i\neq j}^{t} w_i  \left( \frac{1}{2} \hat{d}_{\mathcal{H}\Delta\mathcal{H}}(S_i, S_j) + 2 \sqrt{\frac{2d \log(2m) + \log \frac{2}{\delta}}{m}}+ \gamma_i\right) \\
&= \epsilon_j(h^*_j(t)) + 2\sum_{i=0,i\neq j}^{t} w_i  \left( \frac{1}{2} \hat{d}_{\mathcal{H}\Delta\mathcal{H}}(S_i, S_j) + 2 \sqrt{\frac{2d \log(2m) + \log \frac{2}{\delta}}{m}}+ \gamma_i\right) + 2C
\end{align*}

with probability of at least $1 - \delta$, where $C=\sqrt{ \left(\sum_{i=0}^{t}\frac{w_i^2}{\lambda_i}\right)\left(\frac{ d\log(2N) - \log(\delta)}{2N} \right) }$ and $\gamma_i=\min_{h \in \mathcal{H}} \{ \epsilon_i(h(t)) + \epsilon_j(h(t)) \}$.

For future test domains $j=t+k$ where $k>0$, we have the assumption that $k'=\operatorname*{argmin}_{k'\in \{0,1,...t\}}d_{\mathcal{H}\Delta\mathcal{H}}(D(k'), U_{te}(t+k))$ and $\operatorname*{min}\allowbreak d_{\mathcal{H}\Delta\mathcal{H}}\allowbreak(D(k'),\allowbreak U_{te}(t+k)) \le \delta_D$. Here, we slightly abuse the notation $D(k')$ to represent $D_s$ if $k'=0$ and $U_{te}(k')$ if $k'>0$. Then we get
\begin{align*}
\epsilon_{t+k}(\hat{h}(t)) &\le {\epsilon}_{\bm{w}}(\hat{h}(t)) +\sum_{i=0}^{t} w_i  \left( \frac{1}{2} \hat{d}_{\mathcal{H}\Delta\mathcal{H}}(S_i, S_{t+k}) + 2 \sqrt{\frac{2d \log(2m) + \log \frac{2}{\delta}}{m}}+ \gamma_i\right) \\
&\le {\epsilon}_{\bm{w}}(\hat{h}(t)) +\sum_{i=0}^{t} w_i  \left( \frac{1}{2} (\hat{d}_{\mathcal{H}\Delta\mathcal{H}}(S_i, S_{k'})+\hat{d}_{\mathcal{H}\Delta\mathcal{H}}(S_{k'}, S_{t+k})) + 2 \sqrt{\frac{2d \log(2m) + \log \frac{2}{\delta}}{m}}+ \gamma_i\right) \\
&\le {\epsilon}_{\bm{w}}(\hat{h}(t)) +\sum_{i=0}^{t} w_i  \left( \frac{1}{2} \hat{d}_{\mathcal{H}\Delta\mathcal{H}}(S_i, S_{k'})+ \frac{1}{2}\delta_D + 2 \sqrt{\frac{2d \log(2m) + \log \frac{2}{\delta}}{m}}+ \gamma_i\right) \\
&\le \hat{\epsilon}_{\bm{w}}(\hat{h}(t)) + \sqrt{ \left(\sum_{i=0}^{t}\frac{w_i^2}{\lambda_i}\right)\left(\frac{ d\log(2N) - \log(\delta)}{2N} \right) } \\
& \quad+\sum_{i=0}^{t} w_i  \left( \frac{1}{2} \hat{d}_{\mathcal{H}\Delta\mathcal{H}}(S_i, S_{k'})+ \frac{1}{2}\delta_D + 2 \sqrt{\frac{2d \log(2m) + \log \frac{2}{\delta}}{m}}+ \gamma_i\right) \\
&\le \hat{\epsilon}_{\bm{w}}(h^*_{t+k}(t)) + \sqrt{ \left(\sum_{i=0}^{t}\frac{w_i^2}{\lambda_i}\right)\left(\frac{ d\log(2N) - \log(\delta)}{2N} \right) } \\
& \quad+\sum_{i=0}^{t} w_i  \left( \frac{1}{2} \hat{d}_{\mathcal{H}\Delta\mathcal{H}}(S_i, S_{k'})+ \frac{1}{2}\delta_D + 2 \sqrt{\frac{2d \log(2m) + \log \frac{2}{\delta}}{m}}+ \gamma_i\right) \\
&\le {\epsilon}_{\bm{w}}(h^*_{t+k}(t)) + 2\sqrt{ \left(\sum_{i=0}^{t}\frac{w_i^2}{\lambda_i}\right)\left(\frac{ d\log(2N) - \log(\delta)}{2N} \right) } \\
& \quad+\sum_{i=0}^{t} w_i  \left( \frac{1}{2} \hat{d}_{\mathcal{H}\Delta\mathcal{H}}(S_i, S_{k'})+ \frac{1}{2}\delta_D + 2 \sqrt{\frac{2d \log(2m) + \log \frac{2}{\delta}}{m}}+ \gamma_i\right) \\
&\le \epsilon_{t+k}(h^*_{t+k}(t)) + 2\sqrt{ \left(\sum_{i=0}^{t}\frac{w_i^2}{\lambda_i}\right)\left(\frac{ d\log(2N) - \log(\delta)}{2N} \right) } \\
& \quad+2\sum_{i=0}^{t} w_i  \left( \frac{1}{2} \hat{d}_{\mathcal{H}\Delta\mathcal{H}}(S_i, S_{k'})+ \frac{1}{2}\delta_D + 2 \sqrt{\frac{2d \log(2m) + \log \frac{2}{\delta}}{m}}+ \gamma_i\right) \\
&= \epsilon_{t+k}(h^*_{t+k}(t))+\sum_{i=0}^{t}w_i\left(\hat{d}_{\mathcal{H}\Delta\mathcal{H}}(S_i,S_{k'})+4\sqrt{\frac{2d\log(2m)+\log\frac{2}{\delta}}{m}}+\delta_D +2\gamma_i\right)+2C.
\end{align*}

with probability of at least $1 - \delta$, where $C=\sqrt{ \left(\sum_{i=0}^{t}\frac{w_i^2}{\lambda_i}\right)\left(\frac{ d\log(2N) - \log(\delta)}{2N} \right) }$, $\gamma_i=\min_{h \in \mathcal{H}} \{ \epsilon_i(h(t)) + \epsilon_{t+k}(h(t)) \}$, and $0 \le \delta_D \ll +\infty$.

This completes the proof.
\end{proof}

\begin{theorem}\label{app:thm2} Let $H$ be a hypothesis class of VC-dimension $d$.
For ATTA data domains ${D}_S,U_{te}(1),\allowbreak U_{te}(2),\allowbreak\cdots, U_{te}(t)$, considering the test-time data as a single test domain $\bigcup_{i=1}^{t}{U}_{te}(i)$, 
if $\hat{h}(t)\in \mathcal{H}$ minimizes the empirical weighted error $\hat{\epsilon}_{\bm{w}}(h(t))$ with the weight vector $\bm{w}$ on ${D}_{tr}(t)$,
let the test error be upper-bounded with $\lvert{\epsilon}_T(\hat{h}(t))-\epsilon_T(h^*_T(t))\rvert\le EB_T(\bm{w},\bm{\lambda},N,t)$. 
Let $\bm{w}'$ and $\bm{\lambda}'$ be the weight and sample ratio vectors when no active learning is included, \ie, $\bm{w}'$ and $\bm{\lambda}'$ s.t. $w'_0=\lambda'_0=1$ and $w'_i=\lambda'_i=0$ for $i\ge 1$, then for any $\bm{\lambda}\neq \bm{\lambda}'$, there exists $\bm{w}$ s.t. 
\begin{equation}
EB_T(\bm{w},\bm{\lambda},N,t)<EB_T(\bm{w'},\bm{\lambda'},N,t).
\end{equation}
\end{theorem}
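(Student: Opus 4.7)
The plan is to work with the explicit form of the test-error bound already derived in Eq.~(\ref{appeq: 4.3}), namely
\[
EB_T(\bm{w},\bm{\lambda},N,t) \;=\; w_0 A \;+\; \sqrt{\frac{w_0^2}{\lambda_0} + \frac{(1-w_0)^2}{1-\lambda_0}}\, B,
\]
where $A = \hat{d}_{\mathcal{H}\Delta\mathcal{H}}(S_0,S_T) + 4\sqrt{(2d\log(2m)+\log(2/\delta))/m} + 2\gamma$ and $B = 2\sqrt{(d\log(2N)-\log\delta)/(2N)}$, and reducing the problem to a one-parameter inequality. Both $A$ and $B$ are \emph{strictly} positive: $A$ because of the always-positive VC term $4\sqrt{(2d\log(2m)+\log(2/\delta))/m}$, and $B$ because $\delta\in(0,1)$.

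First I would evaluate the baseline $EB_T(\bm{w}',\bm{\lambda}',N,t)$. Plugging $w_0' = \lambda_0' = 1$ (with the convention that the degenerate $0/0$ contribution from $(1-w_0')^2/(1-\lambda_0')$ is taken as its limit $0$, matching the fact that when no test data enters training the Hoeffding-type bound of Lemma~\ref{app:lemma2} reduces to the source-only variance) yields $EB_T(\bm{w}',\bm{\lambda}',N,t) = A + B$. Since $\bm{\lambda}\neq\bm{\lambda}'$ and $\sum_i \lambda_i = 1$, we must have $\lambda_0 < 1$ strictly: this is the structural fact I will leverage.

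Next I would exhibit the explicit witness $\bm{w} = \bm{\lambda}$, i.e., match the weights to the sample ratios. Under this choice the square-root factor collapses by a one-line algebraic simplification,
\[
\sqrt{\frac{\lambda_0^2}{\lambda_0} + \frac{(1-\lambda_0)^2}{1-\lambda_0}} \;=\; \sqrt{\lambda_0 + (1-\lambda_0)} \;=\; 1,
\]
so $EB_T(\bm{\lambda},\bm{\lambda},N,t) = \lambda_0 A + B$. Combining with $\lambda_0 < 1$ and $A > 0$ gives the strict inequality $\lambda_0 A + B < A + B = EB_T(\bm{w}',\bm{\lambda}',N,t)$, which is exactly the claim.

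The main (and essentially only) obstacle I foresee is not the algebra but the justification of the baseline evaluation $EB_T(\bm{w}',\bm{\lambda}') = A + B$, because the second summand under the square root has the indeterminate form $0/0$ at $(w_0,\lambda_0) = (1,1)$. I would address this by appealing directly to the proof of Lemma~\ref{app:lemma2}: when $\lambda_i = 0$ for $i\geq 1$, the corresponding domains contribute no terms to the Hoeffding sum, and the $w_j^2/\lambda_j$ summands associated with empty components must be interpreted as $0$ rather than divergent. With this convention fixed at the outset, the witness $\bm{w}=\bm{\lambda}$ yields the desired strict improvement for every admissible $\bm{\lambda}\neq\bm{\lambda}'$, so a single explicit choice suffices---no optimization over $\bm{w}$ (e.g., the sharper $w_0^*$ of Eq.~(\ref{Eq:optimal_w})) is needed to establish the existential claim.
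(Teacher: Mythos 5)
Your proposal is correct and follows essentially the same route as the paper's proof: both evaluate the baseline at $w_0'=\lambda_0'=1$ as $A+B$ and exhibit the witness $\bm{w}=\bm{\lambda}$ (with $\lambda_0<1$ forced by $\bm{\lambda}\neq\bm{\lambda}'$), using the identity $\sqrt{w_0^2/\lambda_0+(1-w_0)^2/(1-\lambda_0)}=\sqrt{(w_0-\lambda_0)^2/(\lambda_0(1-\lambda_0))+1}\ge 1$ to collapse the square-root factor and conclude $\lambda_0 A+B<A+B$. Your explicit treatment of the $0/0$ convention in the baseline is a small point of added care that the paper's proof leaves implicit, but it does not change the argument.
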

\begin{proof}
From Theorem~\ref{app:mainthm}, we can derive the bound for the test error where the test-time data are considered as a single test domain:
\begin{align*}
    \lvert{\epsilon}_T(\hat{h}(t))-\epsilon_T(h^*_T(t))\rvert &\le EB_T(\bm{w},\bm{\lambda},N,t) \\
    &= w_0 (\hat{d}_{\mathcal{H}\Delta\mathcal{H}}(S_0, S_T) + 4 \sqrt{\frac{2d \log(2m) + \log \frac{2}{\delta}}{m}}+ 2\gamma) \\
    &+ 2\sqrt{ \frac{w_0^2}{\lambda_0}+\frac{(1-w_0)^2}{1-\lambda_0}} \sqrt{ \frac{ d\log(2N) - \log(\delta)}{2N} };
\end{align*}

and we simplify the above equation as 
\begin{equation}
\lvert{\epsilon}_T(\hat{h}(t))-\epsilon_T(h^*_T(t))\rvert \le  w_0 A + \sqrt{ \frac{w_0^2}{\lambda_0}+\frac{(1-w_0)^2}{1-\lambda_0}}B,
\end{equation}
where the distribution divergence term $A=\hat{d}_{\mathcal{H}\Delta\mathcal{H}}(S_0, S_T) + 4 \sqrt{\frac{2d \log(2m) + \log \frac{2}{\delta}}{m}}+ 2\gamma$, the empirical gap term $B=2 \sqrt{ \frac{ d\log(2N) - \log(\delta)}{2N} }$, $S_T$ is sampled from $\bigcup_{i=1}^{t}{U}_{te}(i)$, and $\gamma=\min_{h \in \mathcal{H}} \{ \epsilon_0(h(t)) + \epsilon_T(h(t)) \}$.

Since we have
\begin{equation}\label{eq: w0lamda0}
     \sqrt{ \frac{w_0^2}{\lambda_0}+\frac{(1-w_0)^2}{1-\lambda_0}}=\sqrt{\frac{(w_0-\lambda_0)^2}{\lambda_0(1-\lambda_0)}+1} \ge 1,
\end{equation}
where Formula~\ref{eq: w0lamda0} obtains the minimum value if and only if $w_0=\lambda_0$;
when enabling ATTA with any $\lambda_0\neq1$, we can get 
\begin{equation}\label{eq: enableATTA}
EB_T(\bm{w},\bm{\lambda},N,t)=w_0 A + \sqrt{ \frac{w_0^2}{\lambda_0}+\frac{(1-w_0)^2}{1-\lambda_0}}B\ge w_0A+B,
\end{equation}
where the minimum value $EB_T(\bm{w},\bm{\lambda},N,t)_{min}=w_0A+B$ can be obtained with condition $w_0=\lambda_0\neq1$.
When no active learning is included, \ie, for weight and sample ratio vectors $\bm{w}'$ and $\bm{\lambda}'$, $w'_0=\lambda'_0=1$ and $w'_i=\lambda'_i=0$ for $i\ge 1$, we have
\begin{equation}
EB_T(\bm{w'},\bm{\lambda'},N,t)=w_0' A + \sqrt{ \frac{w_0'^2}{\lambda_0'}+\frac{(1-w_0')^2}{1-\lambda_0'}}B=A+B.
\end{equation}
Since for $EB_T(\bm{w},\bm{\lambda},N,t)_{min}=w_0A+B$, $w_0<1$ and $A,B>0$ hold, we derive
\begin{equation}
EB_T(\bm{w},\bm{\lambda},N,t)_{min}=w_0A+B < A+B=EB_T(\bm{w'},\bm{\lambda'},N,t).
\end{equation}

This completes the proof.
\end{proof}

\begin{corollary}\label{appcor: SEM}
At time step $t$, for ATTA data domains $D_{\phi,S}(t),U_{te}(1),\allowbreak U_{te}(2),\allowbreak\cdots, U_{te}(t)$, $S_i$ are unlabeled samples of size $m$ sampled from each of the $t+1$ domains respectively, and $S_S$ is unlabeled samples of size $m$ sampled from ${D}_{S}$.
If $\hat{h}(t)\in \mathcal{H}$ minimizes $\hat{\epsilon}_{\bm{w}}'(h(t))$ while other conditions remain identical to Theorem~\ref{mainthm}, then
\begin{align*}
    {\epsilon}_S(\hat{h}(t)) &\le \epsilon_S(h^*_S(t)) + \sum_{i=0}^{t} w_i  \left( \hat{d}_{\mathcal{H}\Delta\mathcal{H}}(S_i, S_S) + 4 \sqrt{\frac{2d \log(2m) + \log \frac{2}{\delta}}{m}}+ 2\gamma_i\right) + 2C,
\end{align*}
with probability at least $1 - \delta$,
where $C$ follows Theorem~\ref{mainthm} and $\gamma_i=\min_{h \in \mathcal{H}} \{ \epsilon_i(h(t)) + \epsilon_S(h(t)) \}$.
\end{corollary}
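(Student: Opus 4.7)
The plan is to mirror the template of Theorem~\ref{mainthm}, but with two key modifications: the reference/target domain is now the true source $D_S$ (which is \emph{not} among the training components), and the 0-th training component is the pseudo-labeled source-like set $D_{\phi,S}(t)$, whose effective ``labeling function'' is the frozen network $f(\cdot;\phi)$. Everything else (sample sizes $m$, total training size $N$, weights $\bm w$, ratios $\bm \lambda$) is unchanged.

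First, I would prove an analogue of Lemma~\ref{app:lemma1} for the target $j=S$. Since $S$ is not itself one of the training indices, no term drops out and the sum runs over all $i\in\{0,\dots,t\}$. Using the triangle inequality for classification error through an auxiliary optimal hypothesis $h^*_i(t)$ on each training component, and then applying the $\mathcal{H}\Delta\mathcal{H}$ machinery together with Eq.~(\ref{eq: HdeltaHSD}), I would obtain
\begin{equation*}
|\epsilon_{\bm w}(h(t))-\epsilon_S(h(t))|\le \sum_{i=0}^{t} w_i\left(\tfrac{1}{2}\hat d_{\mathcal{H}\Delta\mathcal{H}}(S_i,S_S)+2\sqrt{\tfrac{2d\log(2m)+\log(2/\delta)}{m}}+\gamma_i\right),
\end{equation*}
with $\gamma_i=\min_{h\in\mathcal{H}}\{\epsilon_i(h(t))+\epsilon_S(h(t))\}$, where on the 0-th component $\epsilon_0$ is computed against the pseudo-labeling function $f(\cdot;\phi)$.

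Second, I would check that the uniform concentration bound in Eq.~(\ref{eq:hatepsilon}) carries over verbatim to $\hat\epsilon'_{\bm w}$. The Hoeffding-plus-growth-function argument behind Lemma~\ref{app:lemma2} only uses that each summand is bounded in $[0,w_j/\lambda_j]$, which still holds once $f(\cdot;\phi)$ is frozen; hence with probability at least $1-\delta$, $|\hat\epsilon'_{\bm w}(h(t))-\epsilon_{\bm w}(h(t))|\le C$ uniformly over $h\in\mathcal{H}$, with $C$ exactly as in Theorem~\ref{mainthm}.

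Third, I would chain the two bounds by the minimizer property of $\hat h(t)$, exactly as in the proof of Theorem~\ref{app:mainthm}:
\begin{align*}
\epsilon_S(\hat h(t)) &\le \epsilon_{\bm w}(\hat h(t))+\Delta \\
&\le \hat\epsilon'_{\bm w}(\hat h(t))+C+\Delta \\
&\le \hat\epsilon'_{\bm w}(h^*_S(t))+C+\Delta \\
&\le \epsilon_{\bm w}(h^*_S(t))+2C+\Delta \\
&\le \epsilon_S(h^*_S(t))+2C+2\Delta,
\end{align*}
where $\Delta$ denotes the divergence bound from the first step. Rearranging recovers the stated inequality. The main obstacle I expect is bookkeeping in the first step: Lemma~\ref{app:lemma1} is written for a single true labeling function $g$, whereas here the 0-th component is implicitly labeled by $f(\cdot;\phi)$ while the others carry the real labels. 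Handling this requires reinterpreting each training component as a (distribution, labeling function) pair and invoking the $\mathcal{H}\Delta\mathcal{H}$ inequality in its labeling-function-agnostic form (it upper-bounds disagreement between any two hypotheses on any two distributions). Once this is made precise, the remaining algebra is essentially routine and identical to the proof of Theorem~\ref{mainthm}.
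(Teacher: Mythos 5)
Your proposal is correct and follows essentially the same route as the paper's own proof: the target-domain analogue of Lemma~\ref{app:lemma1} with the sum running over all $i=0,\dots,t$ (since $D_S$ is not a training component), the uniform concentration bound of Eq.~(\ref{eq:hatepsilon}) applied to the weighted error, and the standard chain through the empirical minimizer $\hat\epsilon'_{\bm w}(\hat h(t))\le\hat\epsilon'_{\bm w}(h^*_S(t))$, yielding $2\Delta+2C$. Your explicit bookkeeping of the 0-th component being labeled by $f(\cdot;\phi)$ is a point the paper passes over silently, but it does not change the argument.
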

\begin{proof}
For the empirical source error on $D_S$ of hypothesis $h$ at time $t$, similar to Theorem~\ref{app:mainthm}, we apply Lemma~\ref{app:lemma1}, Eq.~\ref{eq:hatepsilon} to get
\begin{align*}
{\epsilon}_S(\hat{h}(t)) &\le {\epsilon}_{\bm{w}}(\hat{h}(t)) +\sum_{i=0}^{t} w_i  \left( \frac{1}{2} \hat{d}_{\mathcal{H}\Delta\mathcal{H}}(S_i, S_S) + 2 \sqrt{\frac{2d \log(2m) + \log \frac{2}{\delta}}{m}}+ \gamma_i\right) \\
&\le \hat{\epsilon}_{\bm{w}}(\hat{h}(t)) + \sqrt{ \left(\sum_{i=0}^{t}\frac{w_i^2}{\lambda_i}\right)\left(\frac{ d\log(2N) - \log(\delta)}{2N} \right) } \\
& \quad+\sum_{i=0}^{t} w_i  \left( \frac{1}{2} \hat{d}_{\mathcal{H}\Delta\mathcal{H}}(S_i, S_S) + 2 \sqrt{\frac{2d \log(2m) + \log \frac{2}{\delta}}{m}}+ \gamma_i\right) \\
&\le \hat{\epsilon}_{\bm{w}}(h^*_S(t)) + \sqrt{ \left(\sum_{i=0}^{t}\frac{w_i^2}{\lambda_i}\right)\left(\frac{ d\log(2N) - \log(\delta)}{2N} \right) } \\
& \quad+\sum_{i=0}^{t} w_i  \left( \frac{1}{2} \hat{d}_{\mathcal{H}\Delta\mathcal{H}}(S_i, S_S) + 2 \sqrt{\frac{2d \log(2m) + \log \frac{2}{\delta}}{m}}+ \gamma_i\right) \\
&\le {\epsilon}_{\bm{w}}(h^*_S(t)) + 2\sqrt{ \left(\sum_{i=0}^{t}\frac{w_i^2}{\lambda_i}\right)\left(\frac{ d\log(2N) - \log(\delta)}{2N} \right) } \\
& \quad+\sum_{i=0}^{t} w_i  \left( \frac{1}{2} \hat{d}_{\mathcal{H}\Delta\mathcal{H}}(S_i, S_S) + 2 \sqrt{\frac{2d \log(2m) + \log \frac{2}{\delta}}{m}}+ \gamma_i\right) \\
&\le \epsilon_S(h^*_S(t)) + 2\sqrt{ \left(\sum_{i=0}^{t}\frac{w_i^2}{\lambda_i}\right)\left(\frac{ d\log(2N) - \log(\delta)}{2N} \right) } \\
& \quad+2\sum_{i=0}^{t} w_i  \left( \frac{1}{2} \hat{d}_{\mathcal{H}\Delta\mathcal{H}}(S_i, S_S) + 2 \sqrt{\frac{2d \log(2m) + \log \frac{2}{\delta}}{m}}+ \gamma_i\right) \\
&= \epsilon_S(h^*_S(t)) + \sum_{i=0}^{t} w_i  \left( \hat{d}_{\mathcal{H}\Delta\mathcal{H}}(S_i, S_S) + 4 \sqrt{\frac{2d \log(2m) + \log \frac{2}{\delta}}{m}}+ 2\gamma_i\right) + 2C
\end{align*}

with probability of at least $1 - \delta$, where $C=\sqrt{ \left(\sum_{i=0}^{t}\frac{w_i^2}{\lambda_i}\right)\left(\frac{ d\log(2N) - \log(\delta)}{2N} \right) }$ and $\gamma_i=\min_{h \in \mathcal{H}} \{ \epsilon_i(h(t)) + \epsilon_S(h(t)) \}$.

This completes the proof.
\end{proof}

\begin{corollary}\label{appcor: cata}
At time step $t$, for ATTA data domains $D_{\phi,S}(t),U_{te}(1),\allowbreak U_{te}(2),\allowbreak\cdots, U_{te}(t)$,
suppose that $\hat{h}(t)\in \mathcal{H}$ minimizes $\hat{\epsilon}{\bm{w}}'(h(t))$ under identical conditions to Theorem~\ref{thm2}.
Let's denote the source error upper bound with $\lvert{\epsilon}_S(\hat{h}(t))-\epsilon_S(h^*_S(t))\rvert\le EB_S(\bm{w},\bm{\lambda},N,t)$. 
Let $\bm{w}'$ and $\bm{\lambda}'$ be the weight and sample ratio vectors when $D_{\phi,S}(t)$ is not included, \ie, $\bm{w}'$ and $\bm{\lambda}'$ s.t. $w'_0=\lambda'_0=0$. If $\hat{d}_{\mathcal{H}\Delta\mathcal{H}}(D_S, D_{\phi,S}(t))<\hat{d}_{\mathcal{H}\Delta\mathcal{H}}(D_S, \bigcup_{i=1}^{t}{U}_{te}(i))$, then for any $\bm{\lambda}\neq\bm{\lambda}'$, there exists $\bm{w}$ s.t. 
\begin{equation}
EB_S(\bm{w},\bm{\lambda},N,t)<EB_S(\bm{w'},\bm{\lambda'},N,t).
\end{equation}
\end{corollary}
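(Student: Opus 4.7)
The plan is to parallel the proof of Theorem~\ref{app:thm2}, using Corollary~\ref{appcor: SEM} in place of Theorem~\ref{app:mainthm}. First, I would collapse the sum in Corollary~\ref{appcor: SEM} to the two-component setting obtained by treating $\bigcup_{i=1}^{t}{U}_{te}(i)$ as a single test domain $T$, exactly the reduction used to derive Eq.~(\ref{appeq: 4.3}) from Theorem~\ref{app:mainthm}. Writing $A_0 := \hat{d}_{\mathcal{H}\Delta\mathcal{H}}(S_{\phi,S},S_S) + 4\sqrt{(2d\log(2m)+\log(2/\delta))/m} + 2\gamma_{\phi,S}$, $A_T := \hat{d}_{\mathcal{H}\Delta\mathcal{H}}(S_T,S_S) + 4\sqrt{(2d\log(2m)+\log(2/\delta))/m} + 2\gamma_T$, and $B := 2\sqrt{(d\log(2N)-\log\delta)/(2N)}$, the source error bound takes the form
\begin{equation*}
EB_S(\bm{w},\bm{\lambda},N,t) \;=\; w_0\,A_0 \;+\; (1-w_0)\,A_T \;+\; \sqrt{\frac{w_0^2}{\lambda_0}+\frac{(1-w_0)^2}{1-\lambda_0}}\;B.
\end{equation*}

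Next I would evaluate both sides of the target inequality. At the baseline $w_0' = \lambda_0' = 0$, only the test component contributes and the $\sqrt{\cdot}$ multiplier equals $1$, so $EB_S(\bm{w}',\bm{\lambda}',N,t) = A_T + B$. For the candidate weighting, I would pick $w_0 = \lambda_0$, the same minimizing choice used in the proof of Theorem~\ref{app:thm2} (cf.\ Eq.~(\ref{eq: w0lamda0})), which sends $\sqrt{w_0^2/\lambda_0 + (1-w_0)^2/(1-\lambda_0)}$ to $1$. This yields $EB_S(\bm{w},\bm{\lambda},N,t) = \lambda_0\,A_0 + (1-\lambda_0)\,A_T + B$, so the desired strict inequality reduces to $\lambda_0(A_T - A_0) > 0$, which holds as soon as $A_0 < A_T$ and $\lambda_0 > 0$ (forced by $\bm{\lambda}\neq\bm{\lambda}'$).

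Finally I would invoke the hypothesis $\hat{d}_{\mathcal{H}\Delta\mathcal{H}}(D_S, D_{\phi,S}(t)) < \hat{d}_{\mathcal{H}\Delta\mathcal{H}}(D_S, \bigcup_{i=1}^{t}{U}_{te}(i))$ to conclude $A_0 < A_T$. The main subtlety, which I expect to be the only non-routine step, is that $A_0$ and $A_T$ differ not only in their $\hat{d}_{\mathcal{H}\Delta\mathcal{H}}$ terms but also in the ideal joint-hypothesis errors $\gamma_{\phi,S}$ versus $\gamma_T$; the $4\sqrt{\cdot}$ sample-complexity term is identical on the two sides and cancels. Under the working assumption (already implicit in Sec.~\ref{sec:theorycata} and validated empirically in Tab.~\ref{tab:entropy-domain}) that the source-pretrained model produces high-quality labels on the low-entropy selections, $D_{\phi,S}(t)$ is close to $D_S$ and hence $\gamma_{\phi,S} \le \gamma_T$, so the stated distance inequality is indeed sufficient for $A_0 < A_T$. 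I would either fold this $\gamma$ comparison explicitly into the hypothesis of Corollary~\ref{appcor: cata} or state it as a standing assumption; the remainder of the argument is then the convex-combination step above.
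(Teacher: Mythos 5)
Your proposal is correct and follows essentially the same route as the paper's proof: the same two-component reduction of the source bound, the same choice $w_0=\lambda_0$ exploiting $\sqrt{w_0^2/\lambda_0+(1-w_0)^2/(1-\lambda_0)}\ge 1$, and the same use of the divergence hypothesis to compare the two bounds. Your explicit flagging of the $\gamma_{\phi,S}$ versus $\gamma_T$ mismatch is in fact handled the same way in the paper, which simply assumes $\gamma\approx\gamma'$ on the grounds that the adaptation task is solvable, so your suggestion to make that assumption explicit is if anything slightly more careful than the original.
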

\begin{proof}
From Theorem~\ref{app:mainthm}, considering the test-time data as a single test domain, we can derive the bound for the source error on $D_S$:
\begin{align*}
    \lvert{\epsilon}_S(\hat{h}(t))-\epsilon_S(h^*_S(t))\rvert &\le EB_S(\bm{w},\bm{\lambda},N,t) \\
    &= w_0 (\hat{d}_{\mathcal{H}\Delta\mathcal{H}}(S_0, S_S) + 4 \sqrt{\frac{2d \log(2m) + \log \frac{2}{\delta}}{m}}+ 2\gamma) \\
    &+ (1-w_0) (\hat{d}_{\mathcal{H}\Delta\mathcal{H}}(S_S, S_T) + 4 \sqrt{\frac{2d \log(2m) + \log \frac{2}{\delta}}{m}}+ 2\gamma') \\
    &+ 2\sqrt{ \frac{w_0^2}{\lambda_0}+\frac{(1-w_0)^2}{1-\lambda_0}} \sqrt{ \frac{ d\log(2N) - \log(\delta)}{2N} },
\end{align*}
where $S_T$ is sampled from $\bigcup_{i=1}^{t}{U}_{te}(i)$, $\gamma=\min_{h \in \mathcal{H}} \{ \epsilon_0(h(t)) + \epsilon_S(h(t)) \}$, and $\gamma'=\min_{h \in \mathcal{H}} \{ \epsilon_T(h(t)) + \epsilon_S(h(t)) \}$.
We have
\begin{equation}
     \sqrt{ \frac{w_0^2}{\lambda_0}+\frac{(1-w_0)^2}{1-\lambda_0}}=\sqrt{\frac{(w_0-\lambda_0)^2}{\lambda_0(1-\lambda_0)}+1} \ge 1,
\end{equation}
where the equality and the minimum value are obtained if and only if $w_0=\lambda_0$.

When $D_{\phi,S}(t)$ is not included, \ie, with the weight and sample ratio vectors $\bm{w}'$ and $\bm{\lambda}'$ s.t. $w'_0=\lambda'_0=0$, using the empirical gap term $B=2 \sqrt{ \frac{ d\log(2N) - \log(\delta)}{2N} }$, we have
\begin{align*}
     EB_S(\bm{w'},\bm{\lambda'},N,t)&= \hat{d}_{\mathcal{H}\Delta\mathcal{H}}(S_S, S_T) + 4 \sqrt{\frac{2d \log(2m) + \log \frac{2}{\delta}}{m}}+ 2\gamma' +\sqrt{ \frac{w_0^2}{\lambda_0}+\frac{(1-w_0)^2}{1-\lambda_0}}B\\
     &=\hat{d}_{\mathcal{H}\Delta\mathcal{H}}(S_S, S_T) + 4 \sqrt{\frac{2d \log(2m) + \log \frac{2}{\delta}}{m}}+ 2\gamma' +B.
\end{align*}

When $D_{\phi,S}(t)$ is included with $\lambda_0\neq0$,
\begin{align*}
     EB_S(\bm{w},\bm{\lambda},N,t) 
    &= w_0 (\hat{d}_{\mathcal{H}\Delta\mathcal{H}}(S_0, S_S) + 4 \sqrt{\frac{2d \log(2m) + \log \frac{2}{\delta}}{m}}+ 2\gamma) \\
    &+ (1-w_0) (\hat{d}_{\mathcal{H}\Delta\mathcal{H}}(S_S, S_T) + 4 \sqrt{\frac{2d \log(2m) + \log \frac{2}{\delta}}{m}}+ 2\gamma') \\
    &+\sqrt{ \frac{w_0^2}{\lambda_0}+\frac{(1-w_0)^2}{1-\lambda_0}}B \\
    & \le w_0 (\hat{d}_{\mathcal{H}\Delta\mathcal{H}}(S_0, S_S) + 4 \sqrt{\frac{2d \log(2m) + \log \frac{2}{\delta}}{m}}+ 2\gamma) \\
    &+ (1-w_0) (\hat{d}_{\mathcal{H}\Delta\mathcal{H}}(S_S, S_T) + 4 \sqrt{\frac{2d \log(2m) + \log \frac{2}{\delta}}{m}}+ 2\gamma') +B,
\end{align*}
where the minimum value can be obtained with condition $w_0=\lambda_0\neq0$.

In practical learning scenarios, we generally assume adaptation tasks are solvable; therefore, there should be a prediction function that performs well on two distinct domains. In this case, $\gamma$ and $\gamma'$ should be relatively small, so we can assume $\gamma\approx\gamma'$. If $\hat{d}_{\mathcal{H}\Delta\mathcal{H}}(S_0, S_S)<\hat{d}_{\mathcal{H}\Delta\mathcal{H}}(S_S, S_T)$, then we have
\begin{align*}
     EB_S(\bm{w},\bm{\lambda},N,t)_{min}
    & = w_0 (\hat{d}_{\mathcal{H}\Delta\mathcal{H}}(S_0, S_S) + 4 \sqrt{\frac{2d \log(2m) + \log \frac{2}{\delta}}{m}}+ 2\gamma) \\
    &+ (1-w_0) (\hat{d}_{\mathcal{H}\Delta\mathcal{H}}(S_S, S_T) + 4 \sqrt{\frac{2d \log(2m) + \log \frac{2}{\delta}}{m}}+ 2\gamma') +B \\
    &< \hat{d}_{\mathcal{H}\Delta\mathcal{H}}(S_S, S_T) + 4 \sqrt{\frac{2d \log(2m) + \log \frac{2}{\delta}}{m}}+ 2\gamma' +B \\
    &=EB_S(\bm{w'},\bm{\lambda'},N,t).
\end{align*}
Therefore, we derive
\begin{equation}
EB_S(\bm{w},\bm{\lambda},N,t)_{min} < EB_S(\bm{w'},\bm{\lambda'},N,t).
\end{equation}

This completes the proof.
\end{proof}

\section{Incremental Clustering}

\subsection{Algorithm Details}

\begin{algorithm}[!t]
    \caption{ \textsc{Incremental Clustering (IC)}}
    \label{al:2}
    \begin{algorithmic}[1]
        \Require Given previously selected anchors, new unlabeled samples, and the cluster budget as $D_\text{anc}$, $U_\text{new}$, and $NC$. Global anchor weights $\mathbf{w}^\text{anc}=(w^\text{anc}_1, \ldots, w^\text{anc}_{|D_\text{anc}|})^\top$.
        \State For simplicity, we consider anchor weights $\mathbf{w}^\text{anc}$ as a global vector.
        \Function{IC}{$D_\text{anc}$, $U_\text{new}$, $NC$}
            \State $\mathbf{w}^\text{sp} \gets \mbox{Concat}(\mathbf{w}^\text{anc}, \mathbf{1}^\top_{|U_\text{new}|})$ \Comment{Assign all new samples with weight 1.}
            \State $\Phi \gets $ Extract the features from the penultimate layer of model $f$ on $x \in D_\text{anc} \cup U_\text{new}$ in order.
            \State $\mbox{clusters} \gets \mbox{Weighted-K-Means}(\Phi, \mathbf{w}^\text{sp}, NC)$
            \State $\mbox{new\_clusters} \gets  \{\mbox{cluster}_i\ |\ \forall \mbox{cluster}_i \in \mbox{clusters}, \forall x \in D_\text{anc}, x \notin \mbox{clusters}_i\}$
            \State $X_\text{new\_anchors} \gets \{\mbox{the closest sample }x\mbox{ to the centroid of }\mbox{cluster}_i\ |\ \forall \mbox{cluster}_i \in \mbox{new\_clusters}\}$
            \State $X_\text{anchors} \gets \{x \in D_\text{anc}\} \cup X_\text{new\_anchors}$
            \State $\mathbf{w}^\text{anc} \gets \mbox{Concat}(\mathbf{w}^\text{anc}, \mathbf{0}_{|X_\text{new\_anchors}|}^\top)$ \Comment{Initialize new anchor weights.}
            \State \textbf{for} $w_i^\text{anc} \in \mathbf{w}^\text{anc},$ $w_i^\text{anc} \gets w_i^\text{anc} + \frac{\mbox{\# sample of } \mbox{cluster}_j}{\mbox{\# anchor in } \mbox{cluster}_j}, w_i^\text{anc} \in \mbox{cluster}_j$ \Comment{Weight accumulation.}
            \State \textbf{Return } $X_\text{anchors}$
        \EndFunction
    \end{algorithmic}
\end{algorithm}

We provide the detailed algorithm for incremental clustering as Alg.~\ref{al:2}.

\subsection{Visualization}

To better illustrate the incremental clustering algorithm, we provide visualization results on PACS to demonstrate the process. As shown in Fig.~\ref{fig:IC0}, the initial step of IC is a normal K-Means clustering step, and ten anchors denoted as "X" are selected. The weights of all samples in a clusters is aggregated into the corresponding anchor's weight. Therefore, these ten samples (anchors) are given larger sizes visually (\ie, larger weights) than that of other new test samples in the first IC step (Fig.~\ref{fig:IC1}). During the first IC step, several distributions are far away from the existed anchors and form clusters 1,7,9 and 10, which leads to 4 new selected anchors. While the number of cluster centroid is only increased by 1, 4 of the existing anchors are clustered into the same cluster 8 (purple). Thus IC produces 4 new anchors instead of 1. Similarly, in the second IC step (Fig.~\ref{fig:IC2}), the new streaming-in test samples introduce a new distribution; IC produces 3 new clusters (4, 8, and 11) and the corresponding number of anchors to cover them. The number of centroid is only increased by 1, which implies that there are two original-cluster-merging events. More IC step visualization results are provided in Fig.~\ref{fig:IC3} and \ref{fig:IC4}.

\begin{figure}[!htbp]
    \centering
    \resizebox{1\linewidth}{!}{\includegraphics{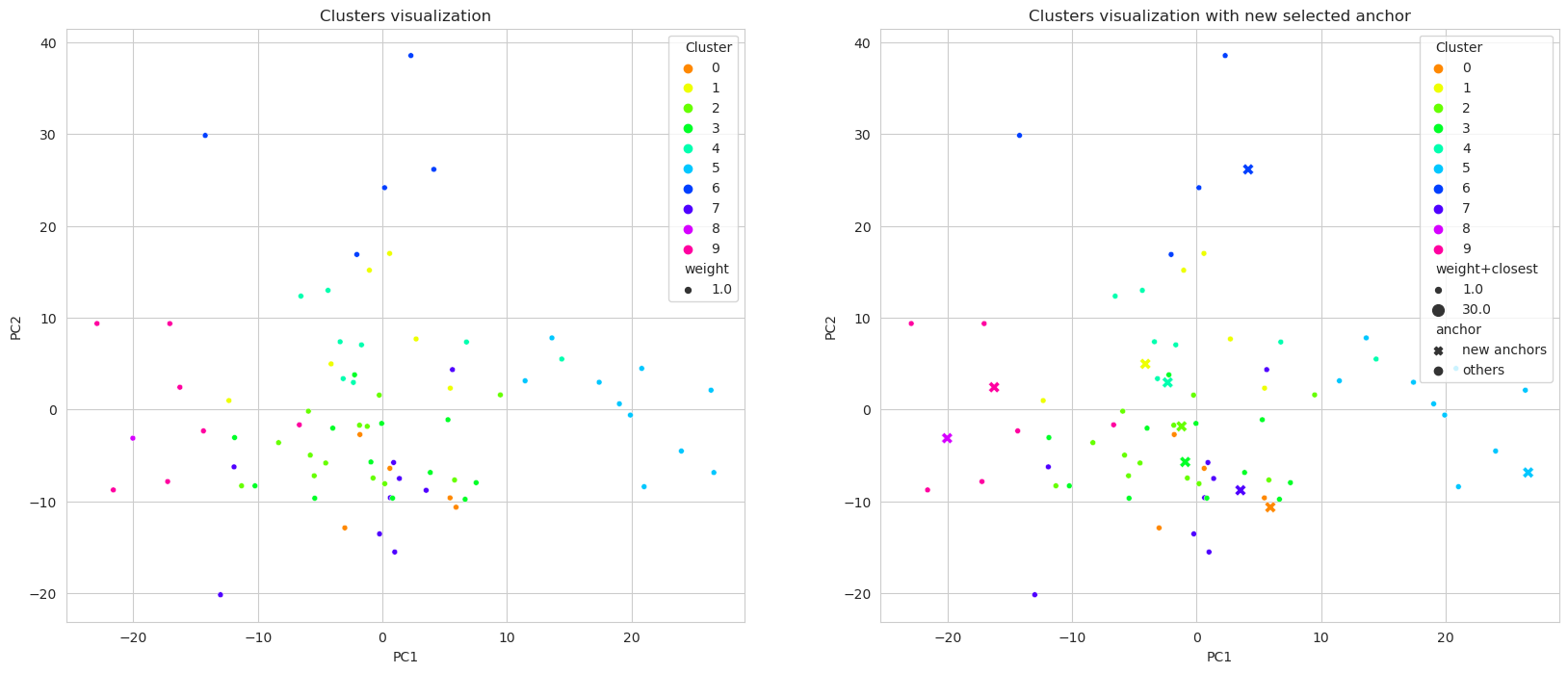}}
    \caption{\textbf{Initial IC step: normal clustering.} Left: Clustering results. Right: Selecting new anchors.}
    \label{fig:IC0}
\end{figure}

\begin{figure}[!htbp]
    \centering
    \resizebox{1\linewidth}{!}{\includegraphics{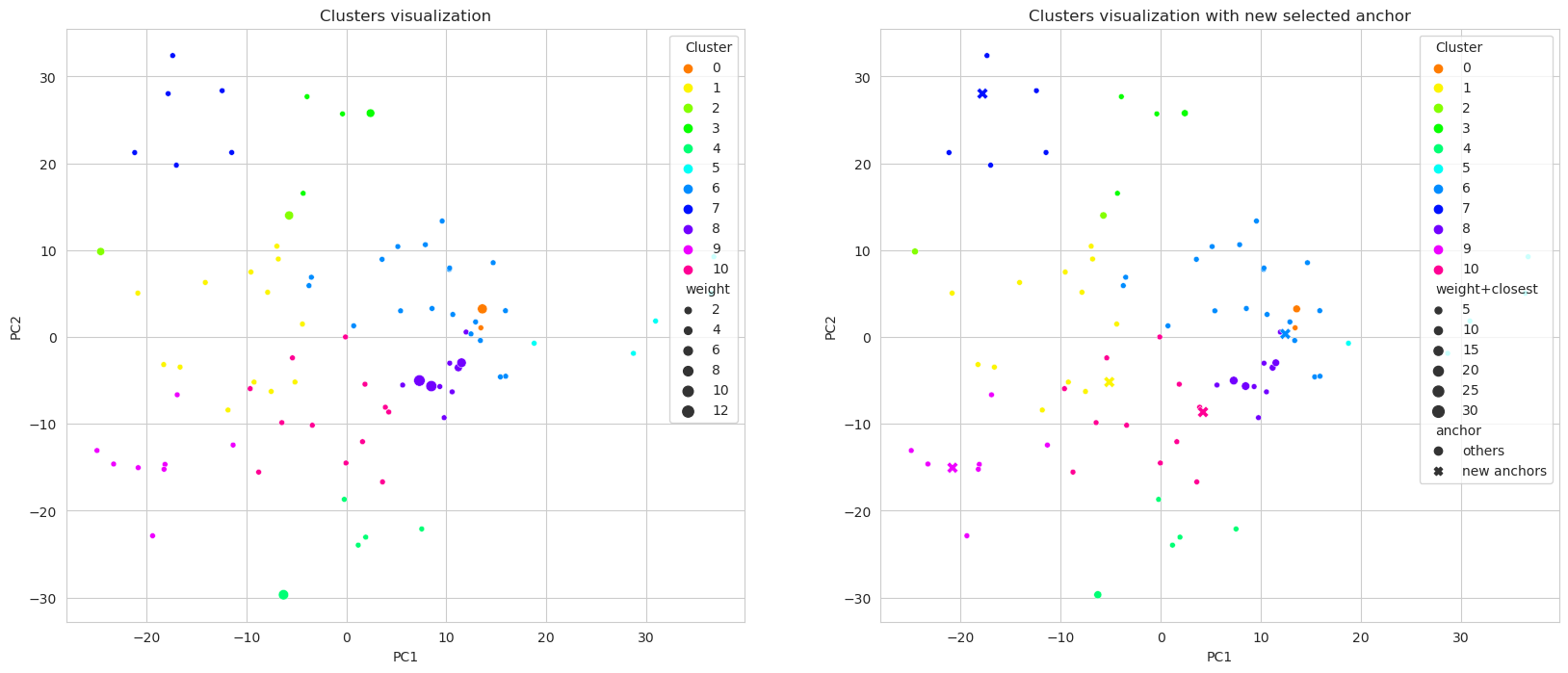}}
    \caption{\textbf{The first IC step.} Left: Weighted clustering results. Right: Selecting new anchors.}
    \label{fig:IC1}
\end{figure}

\begin{figure}[!htbp]
    \centering
    \resizebox{1\linewidth}{!}{\includegraphics{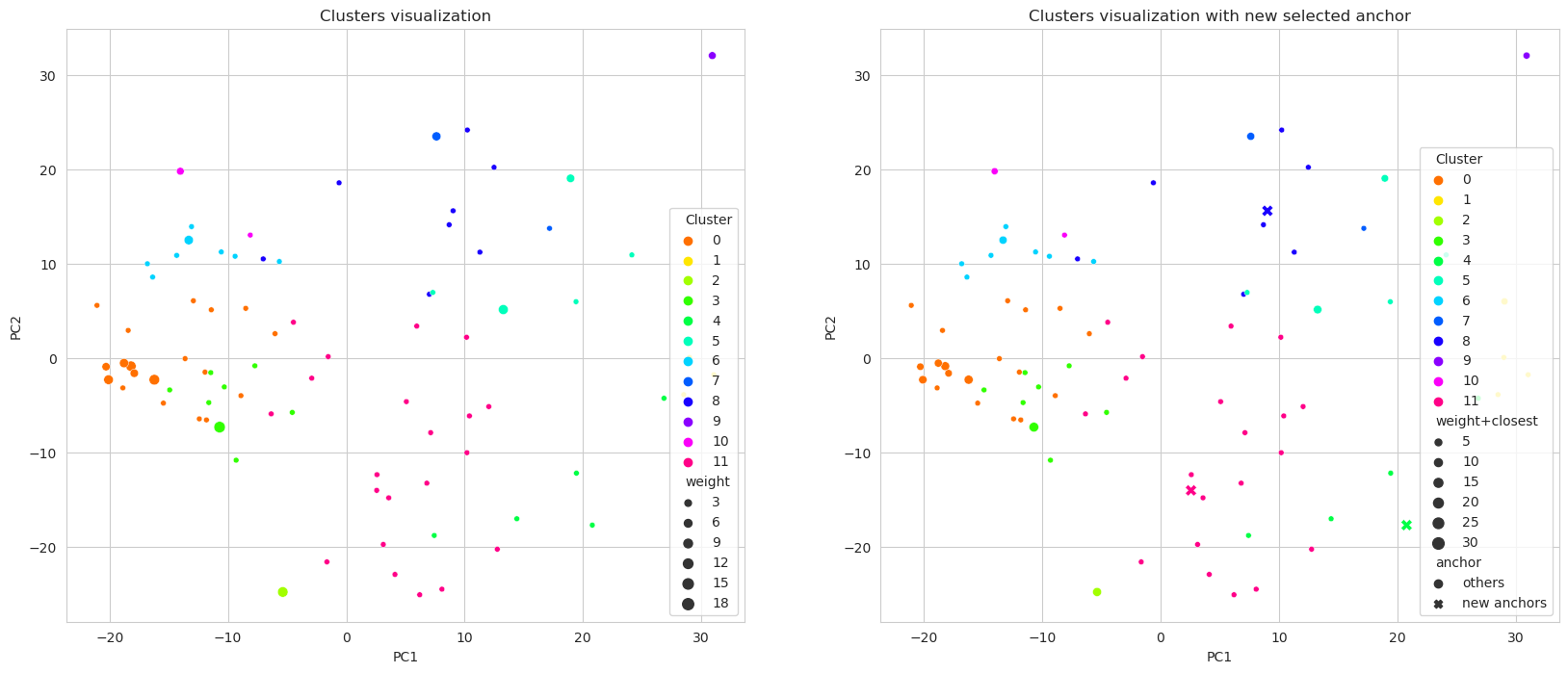}}
    \caption{\textbf{The second IC step.} Left: Weighted clustering results. Right: Selecting new anchors.}
    \label{fig:IC2}
\end{figure}

\begin{figure}[!htbp]
    \centering
    \resizebox{1\linewidth}{!}{\includegraphics{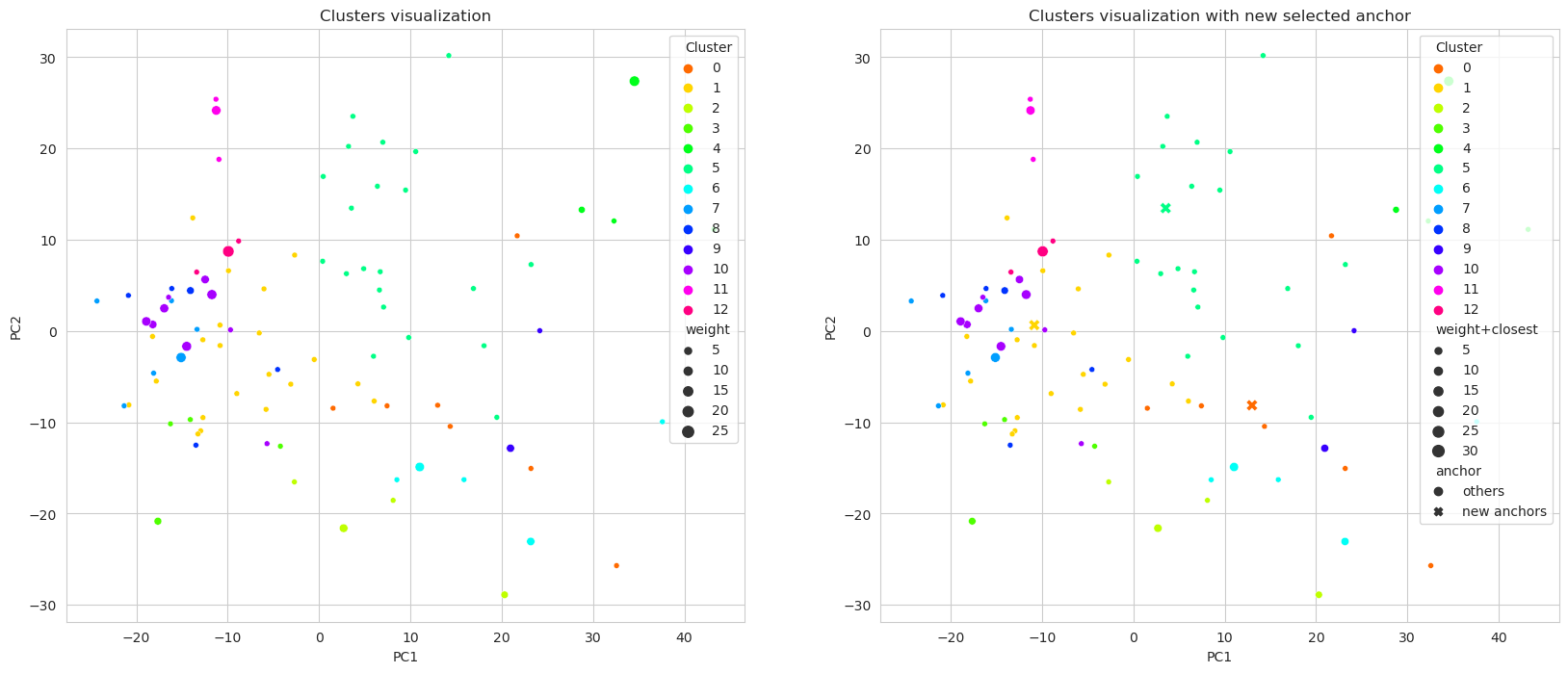}}
    \caption{\textbf{The third IC step.} Left: Weighted clustering results. Right: Selecting new anchors.}
    \label{fig:IC3}
\end{figure}

\begin{figure}[!htbp]
    \centering
    \resizebox{1\linewidth}{!}{\includegraphics{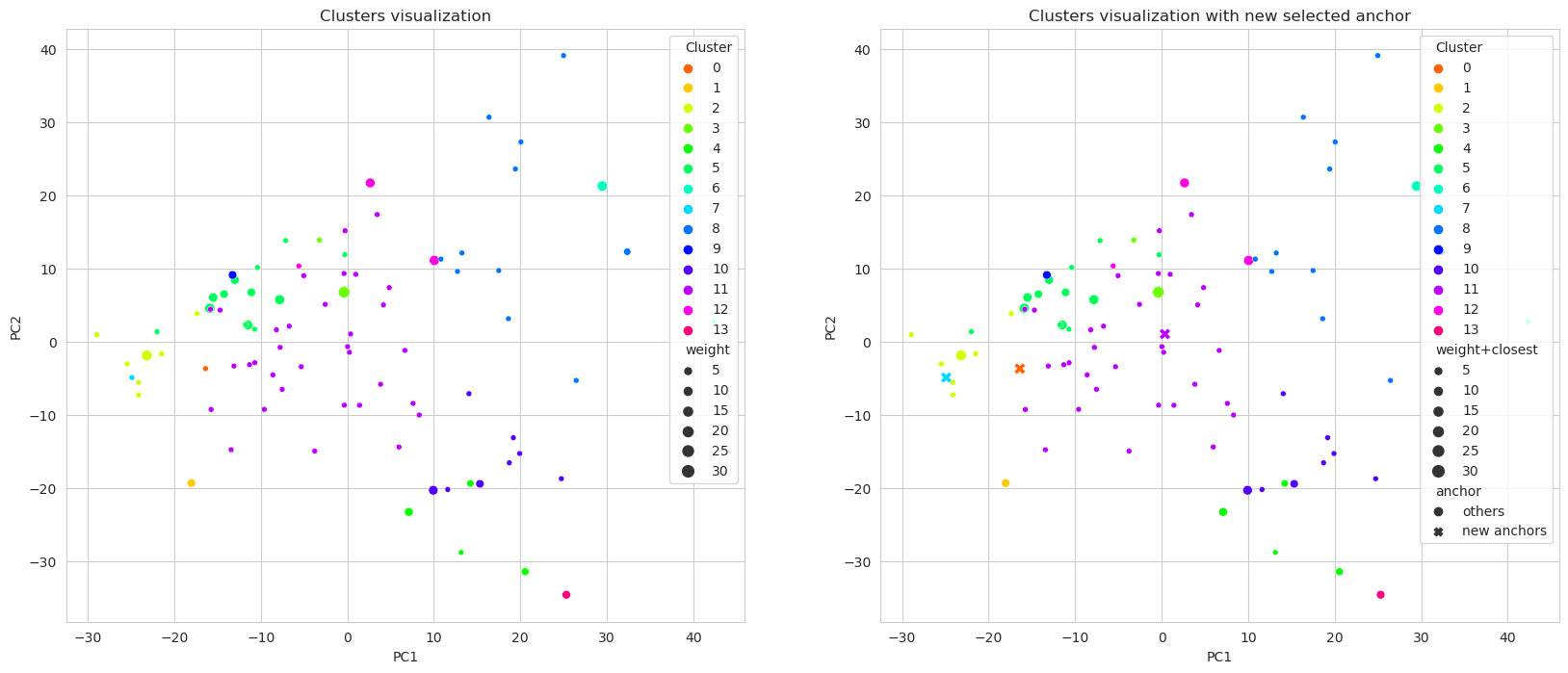}}
    \caption{\textbf{The fourth IC step.} Left: Weighted clustering results. Right: Selecting new anchors.}
    \label{fig:IC4}
\end{figure}

\section{Experiment Details}\label{app:experiment-details}

In this section, we provide more experimental details including the details of the datasets and training settings.

\subsection{Details about the Datasets}\label{app:datasets}

We adopt datasets PACS, VLCS, and Office-Home from DomainBed~\citep{gulrajani2020search} with the same domain splits. All available licenses are mentioned below.

\begin{itemize}
    \item \textbf{PACS~\citep{li2017deeper}} includes four domains: art, cartoons, photos, and sketches. PACS is a 7-class classification dataset with 9,991 images of dimension (3, 224, 224).
    \item \textbf{VLCS~\citep{fang2013unbiased}} contains photographic domains: Caltech101, LabelMe, SUN09, and VOC2007. This dataset includes 10,729 images of dimension (3, 224, 224) with 5 classes.
    \item  \textbf{Office-Home~\citep{venkateswara2017deep}} is a 65-class dataset, including domains: art, clipart, product, and real. VLCS includes 10,729 images of dimension (3, 224, 244). (\href{https://www.hemanthdv.org/officeHomeDataset.html}{License})
    \item \textbf{Tiny-ImageNet-C} is a 200-class dataset, including 15 corrupt types. Tiny-ImageNet-C includes 150,000 images of dimension (3, 224, 244). Since the class number 200 is less than ImageNet (1000), the model's last layer classifier needs to be adapted. In this work, we use the brightness corruption domain to adapt.
\end{itemize}

In the source pretraining phase, we adopt the most ImageNet-like domain as our source domain. For PACS and Office-Home, we use domains "photos" and "real" as the source domains, respectively, while for VLCS, Caltech101 is assigned to apply the source pretraining. We freeze the random seeds to generate the sample indices order for the two test data streams, namely, the domain-wise data stream and the random data stream.

For PACS, the domain-wise data stream inputs samples from domain art, cartoons, to sketches, while we shuffle all samples from these three domains in the random data stream. For VLCS, we stream the domains in the order: LabelMe, SUN09, and VOC2007, as the domain-wise data stream. For Office-Home, the domain-wise data stream order becomes art, clipart, and product.

\subsection{Training and Optimization Settings}\label{app:training-optimization-settings}

In this section, we extensively discuss the model architectures, optimization settings, and method settings.

\subsubsection{Architectures}

\textbf{PACS \& VLCS.} We adopt ResNet-18 as our model encoder followed by a linear classifier. The initial parameters of ResNet-18 are ImageNet pre-trained weights. In our experiment, we remove the Dropout layer since we empirically found that using the Dropout layer might degrade the optimization process when the sample number is small. The specific implementation of the network is closely aligned with the implementation in DomainBed~\citep{gulrajani2020search}.

\textbf{Office-Home.} We employ ResNet-50 as our model encoder for Office-Home. Except for the architecture, the other model settings are aligned with the ResNet-18.

\textbf{Tiny-ImageNet-C} ResNet-18 is adapted from ImageNet to Tiny-ImageNet-C by training the last linear layer.

\subsubsection{Training \& Optimization}

In this section, we describe the training configurations for both the source domain pre-training and test-time adaptation procedures.

\noindent\textbf{Source domain pre-training.} For the PACS and VLCS datasets, models are fine-tuned on the selected source domains for 3,000 iterations. The Adam optimizer is utilized with a learning rate of $10^{-4}$. In contrast, for the Office-Home dataset, the model is fine-tuned for a longer duration of 10,000 iterations with a slightly adjusted learning rate of $5\times 10^{-5}$.

\noindent\textbf{Test-time adaptation.} For test-time adaptation across PACS and VLCS, the pre-trained source model is further fine-tuned using the SGD optimizer with a learning rate of $10^{-3}$. While on Office-Home and Tiny-ImageNet-C, a learning rate of $10^{-4}$ is adopted. For all TTA baselines, barring specific exceptions, we faithfully adhere to the original implementation settings. A noteworthy exception is the EATA method, which requires a cosine similarity threshold. The default threshold of the original EATA implementation was not suitable for the three datasets used in our study, necessitating an adjustment. We empirically set this threshold to $0.5$ for training. Unlike Tent and SAR, which only require the optimization of batch normalization layers~\citep{santurkar2018does}, SimATTA allows the training of all parameters in the networks. In experiments, we use a tolerance count (tol) to control the training process. SimATTA will stop updating once the loss does not descrease for more than 5 steps. However, for Tiny-ImageNet-C, SimATTA uses `steps=10` for time comparisons since other methods apply at most 10 steps.

\subsubsection{Method settings}

\noindent\textbf{Tent.} In our experiments, we apply the official implementation of Tent\footnote{\url{https://github.com/DequanWang/tent}}. Specifically, we evaluate Tent with 1 test-time training step and 10 steps, respectively.

\noindent\textbf{EATA.} Our EATA implementation follows its official code\footnote{\url{https://github.com/mr-eggplant/EATA}}. In our experiments, EATA has 2000 fisher training samples, $E_0=0.4 \times \mbox{log} \mbox{(\# class)}$, $\epsilon < 0.5$.

\noindent\textbf{CoTTA.} For CoTTA, we strictly follow all the code and settings from its official implementation\footnote{\url{https://github.com/qinenergy/cotta}}.

\noindent\textbf{SAR.} With SAR's official implementation\footnote{\url{https://github.com/mr-eggplant/SAR}}, we set $E_0=0.4 \times \mbox{log} \mbox{(\# class)}$ and $e_0=0.1$ in our experiments.

\noindent\textbf{ADA baselines.} For ADA baselines, we follow the architecture of the official implementation of CLUE~\citep{prabhu2021active}\footnote{\url{https://github.com/virajprabhu/CLUE}}.

\noindent\textbf{SimATTA Implementation.} Our implementation largely involves straightforward hyperparameter settings. The higher entropy bound $e_h=10^{-2}$ should exceed the lower entropy bound $e_l$, but equal values are acceptable. Empirically, the lower entropy bound $e_l$ can be set to $10^{-3}$ for VLCS and Office-Home, or $10^{-4}$ for PACS. The choice of $e_l$ is largely dependent on the number of source-like samples obtained. A lower $e_l$ may yield higher-accuracy low-entropy samples, but this could lead to unstable training due to sample scarcity.

Though experimentation with different hyperparameters is encouraged, our findings suggest that maintaining a non-trivial number of low-entropy samples and setting an appropriate $\lambda_0$ are of primary importance. If $\lambda_0 < 0.5$, CF may ensue, which may negate any potential improvement.

Regarding the management of budgets, numerous strategies can be adopted. In our experiments, we utilized a simple hyperparameter $k$, varying from 1 to 3, to regulate the increasing rate of budget consumption. This strategy is fairly elementary and can be substituted by any adaptive techniques.

\subsection{Software and Hardware}\label{app:software-hardware}

We conduct our experiments with PyTorch~\citep{paszke2019pytorch} and scikit-learn~\citep{pedregosa2011scikit} on Ubuntu 20.04. The Ubuntu server includes 112 Intel(R) Xeon(R) Gold 6258R CPU @2.70GHz, 1.47TB memory, and NVIDIA A100 80GB PCIe graphics cards. The training process costs graphics memory less than 10GB, and it requires CPU computational resources for scikit-learn K-Means clustering calculations. Our implementation also includes a GPU-based PyTorch K-Means method for transferring calculation loads from CPUs to GPUs. However, for consistency, the results of our experiments are obtained with the original scikit-learn K-Means implementation.

\section{Empirical Validations for Theoretical Analysis}\label{app:empirical-validations-for-theory}
\setlength{\floatsep}{5pt}

\begin{figure}
    \centering
    \resizebox{1\textwidth}{!}{\includegraphics{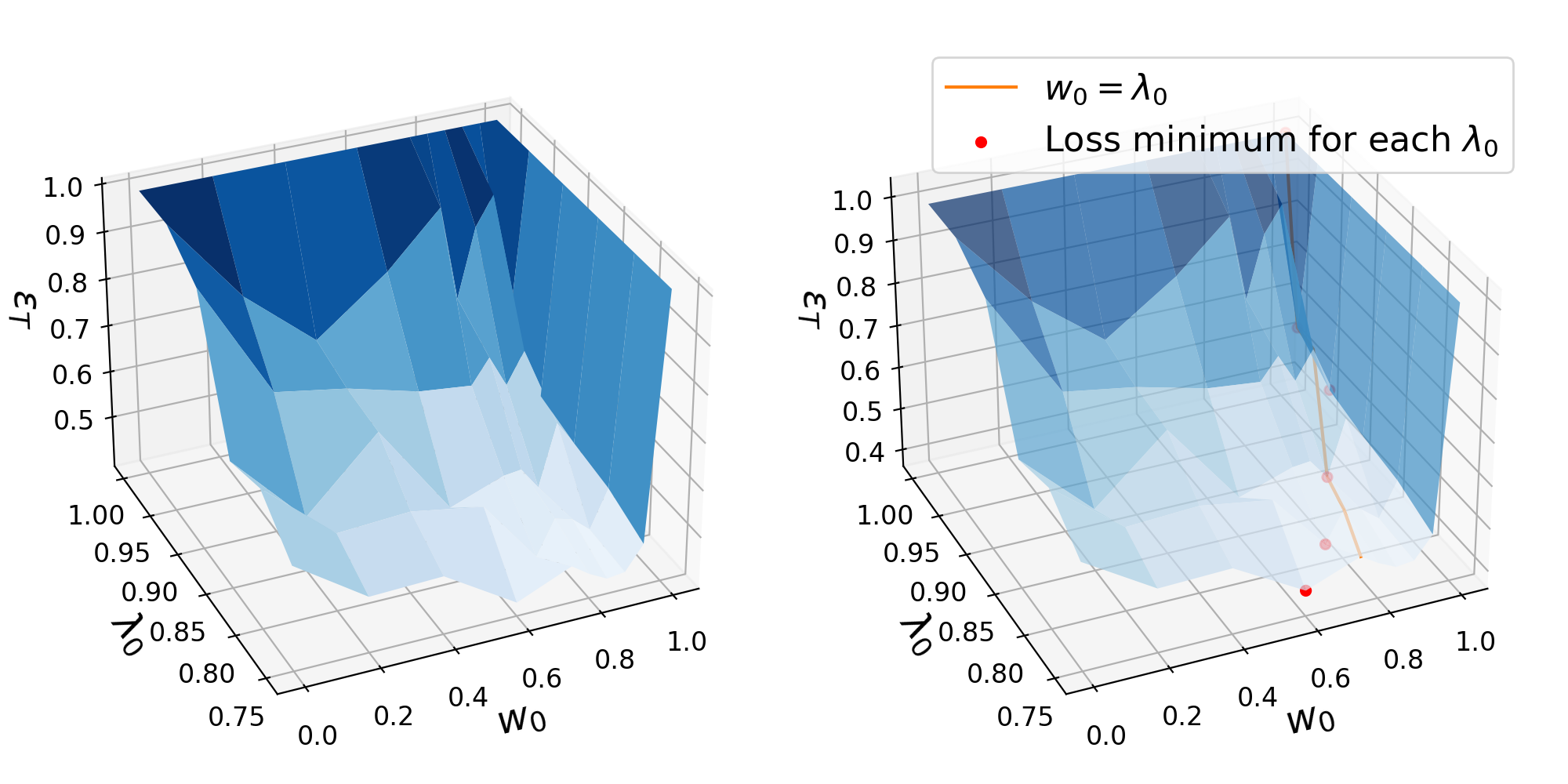}}
    \caption{\textbf{Target loss surface on 2000 samples without source pre-training.} The red points denote the loss minimum for a fixed $\lambda_0$. The orange line denote the place where $w_0=\lambda_0$.}
    \label{fig:app-target-2000-noS}
\end{figure}

\begin{figure}
    \centering
    \resizebox{1\textwidth}{!}{\includegraphics{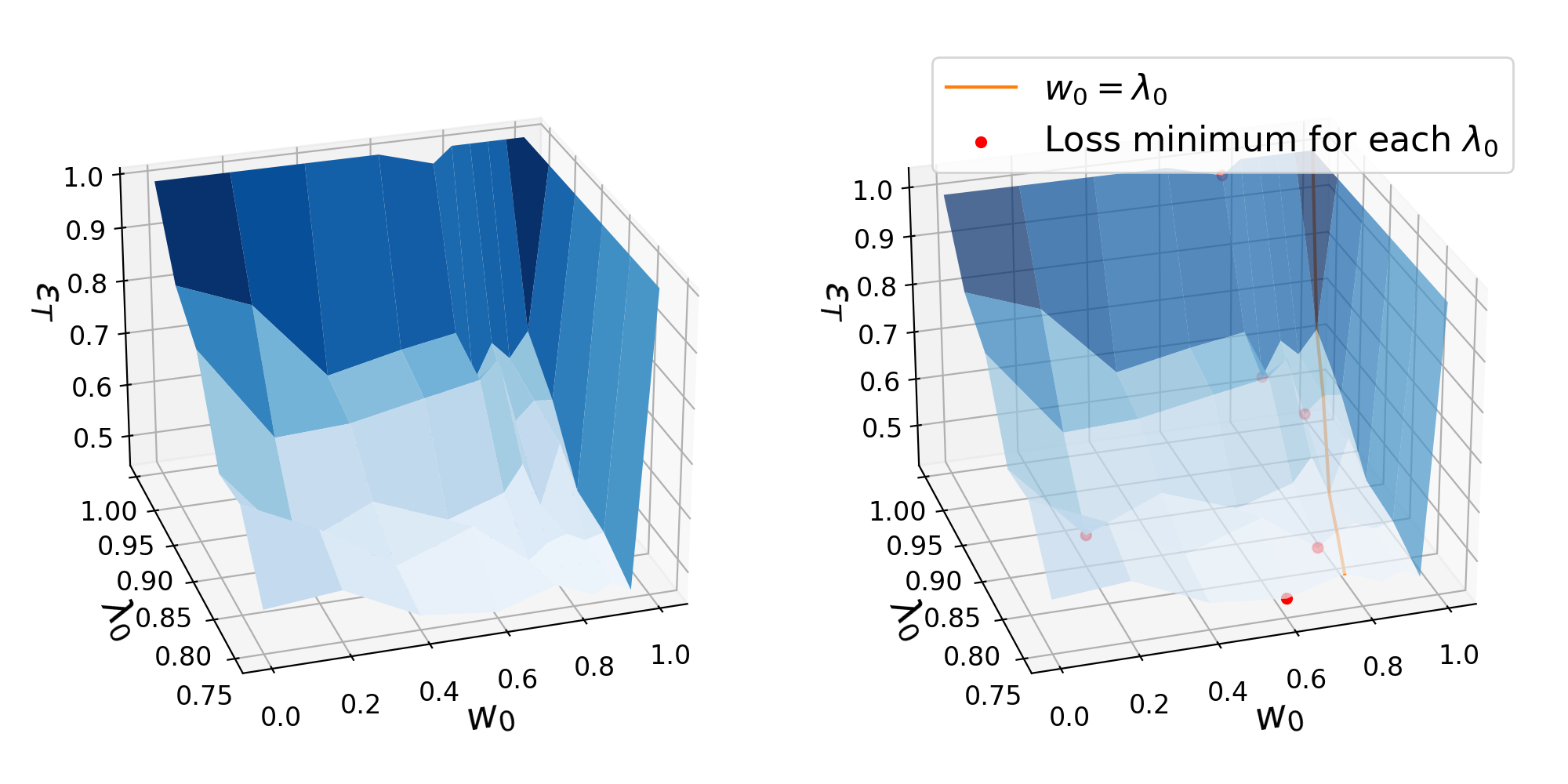}}
    \caption{\textbf{Target loss surface on 2000 samples with source pre-training.}}
    \label{fig:app-target-2000-S}
\end{figure}

\begin{figure}
    \centering
    \resizebox{1\textwidth}{!}{\includegraphics{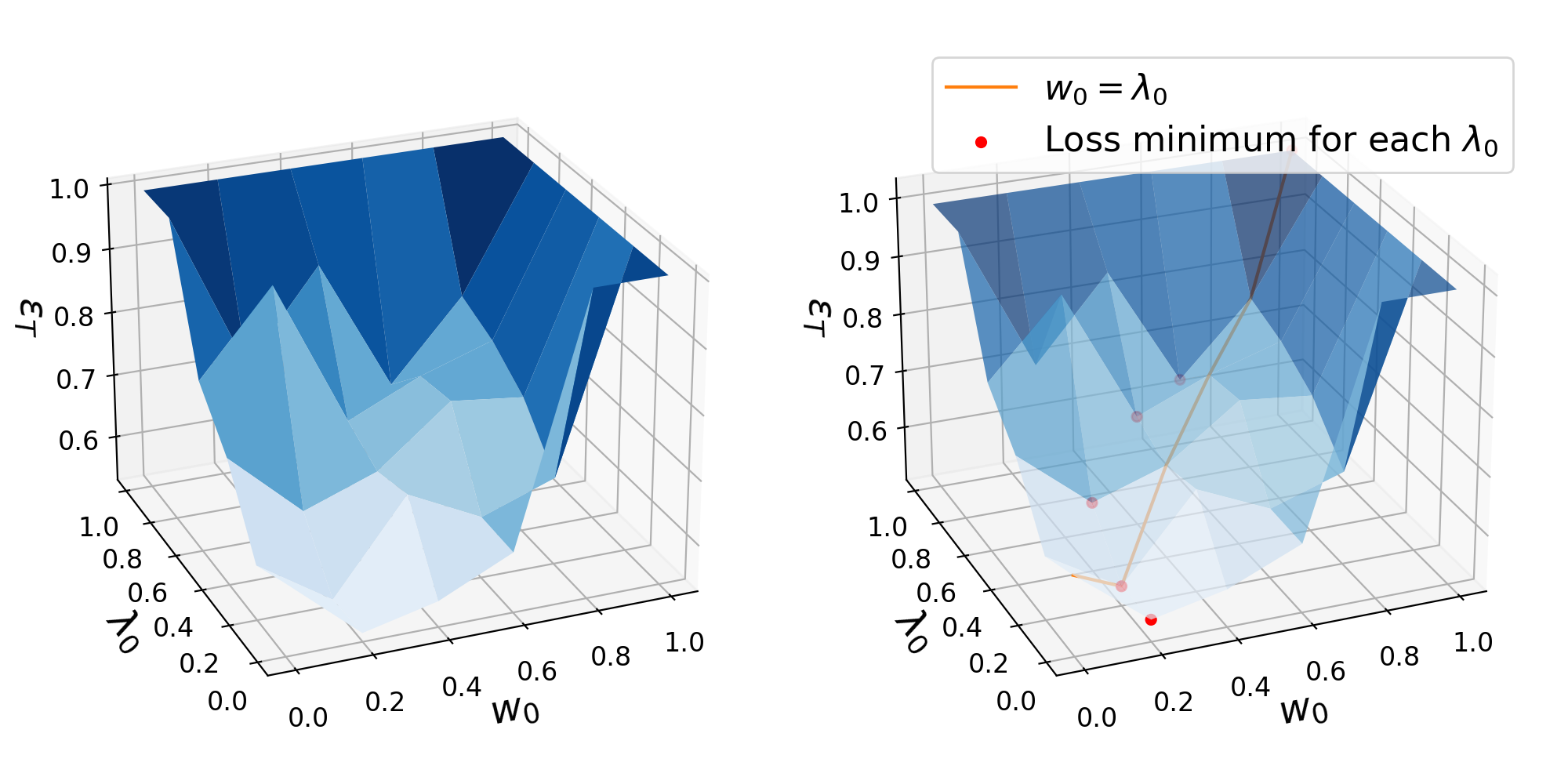}}
    \caption{\textbf{Target loss surface on 500 samples with source pre-training.}}
    \label{fig:app-target-500-S}
\end{figure}

\begin{figure}
    \centering
    \resizebox{1\textwidth}{!}{\includegraphics{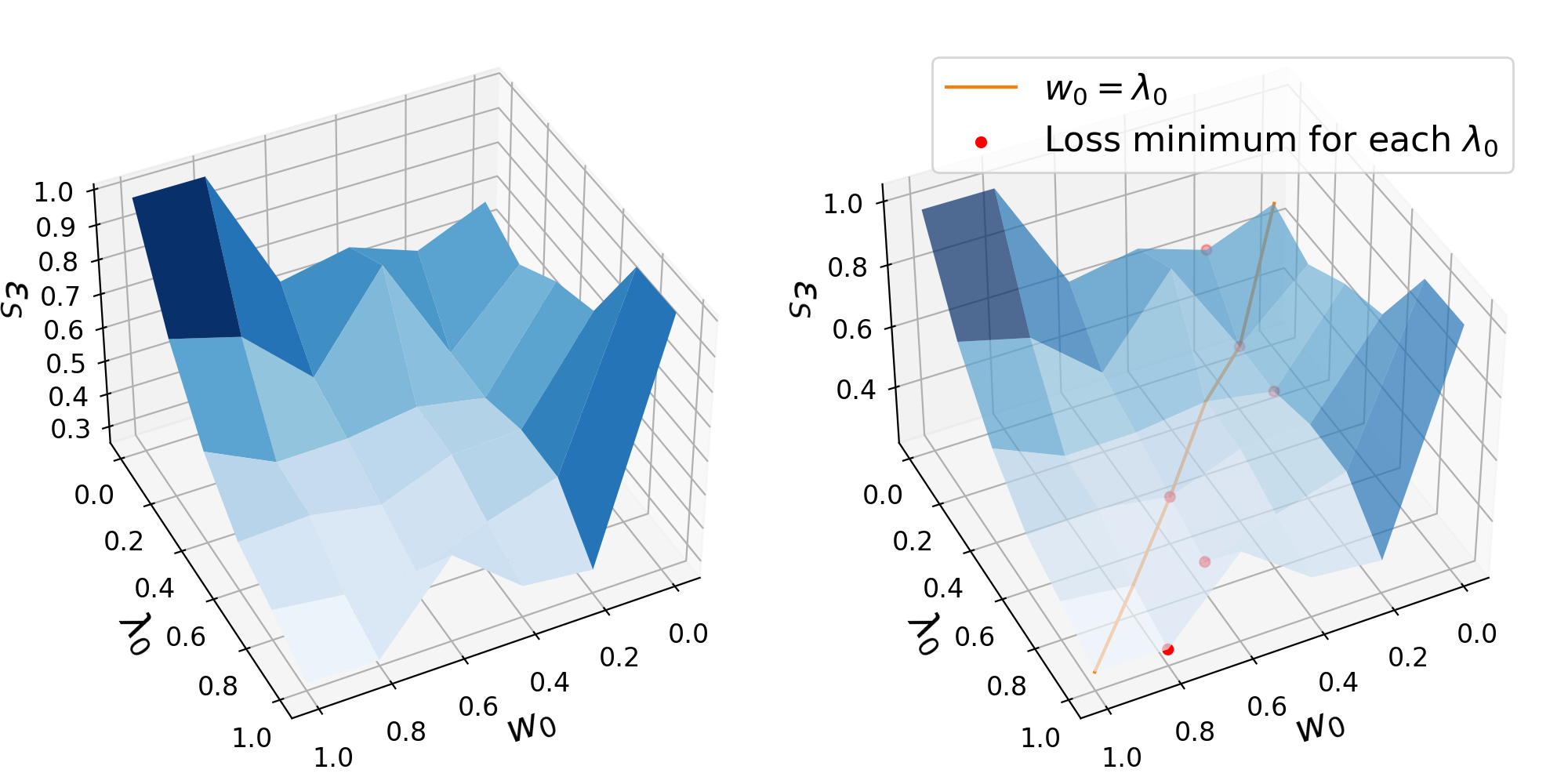}}
    \caption{\textbf{Source loss surface on 500 samples with source pre-training.}}
    \label{fig:app-source-500-S}
\end{figure}

\begin{figure}
    \centering
    \resizebox{1\textwidth}{!}{\includegraphics{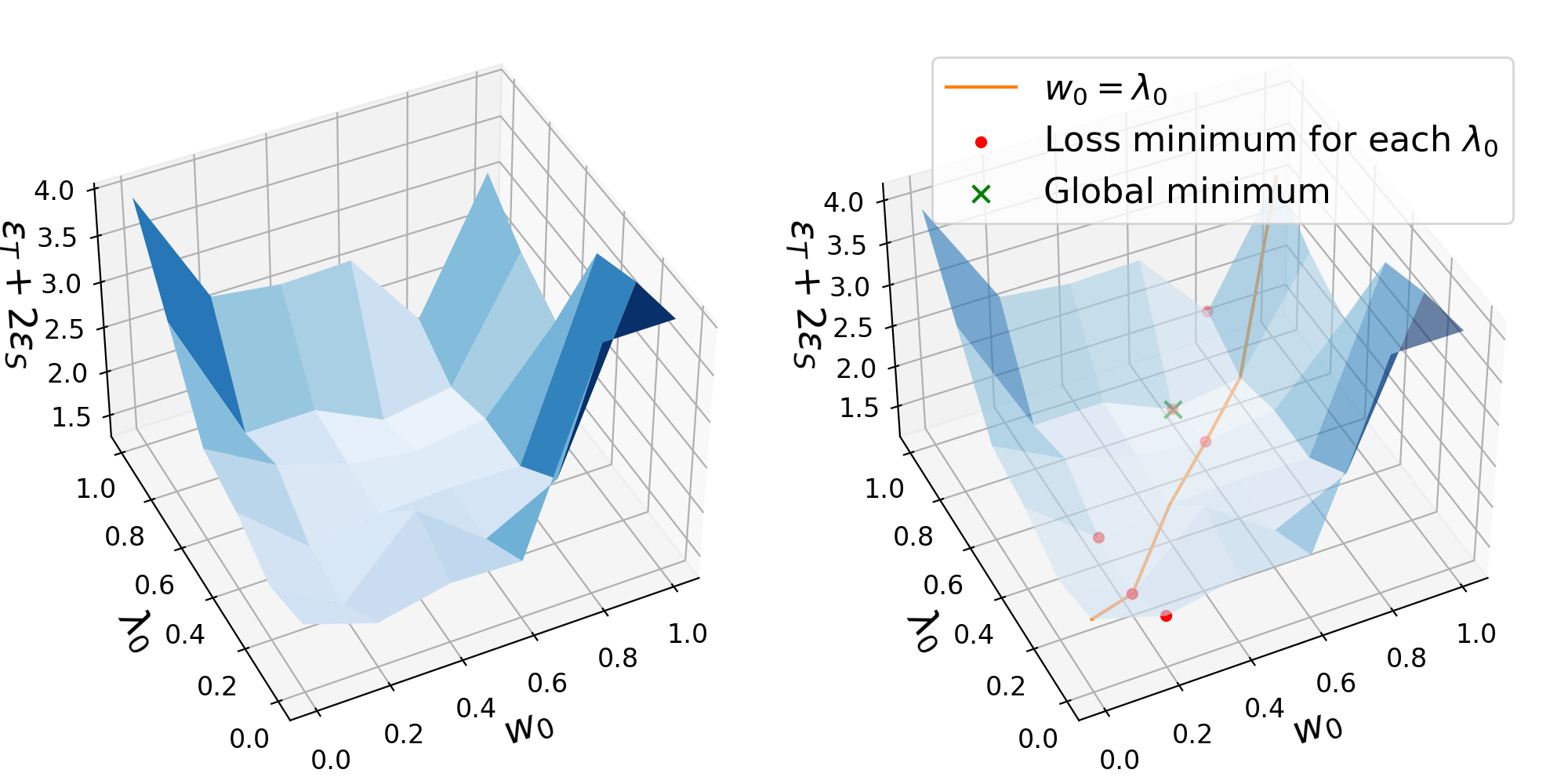}}
    \caption{\textbf{Target and source loss surface on 500 samples with source pre-training.}}
    \label{fig:app-target-source-500-S}
\end{figure}

In this section, we undertake empirical validation of our learning theory, which encompasses multiple facets awaiting verification. In contemporary computer vision fields, pre-trained models play a pivotal role, and performance would significantly decline without the use of pre-trained features. The learning theory suggests that given the vast VC-dimension of complete ResNets, without substantial data samples, the training error cannot be theoretically tight-bounded. However, we show empirically in the following experiments that fine-tuning pre-trained models is behaviorally akin to training a model with a low VC-dimension.

\noindent\textbf{Training on 2000 Samples Without Source Domain Pre-training.} For an ImageNet pre-trained ResNet-18 model, we trained it using 2000 samples from the PACS dataset. To ascertain the optimal value $w_0^*$ in Equation~\ref{Eq:optimal_w}, we trained multiple models for different $w_0$ and $\lambda_0$ pairings. For each pair, we derived the target domain loss (from art, cartoons, and sketches) post-training and plotted this loss on the z-axis. With $w_0$ and $\lambda_0$ serving as the xy-axes, we drafted the target domain loss $\epsilon_T$ surface in Figure~\ref{fig:app-target-2000-noS}. As the results show, given a $\lambda_0$, the optimal $w_0^*$ typically aligns with the line $\lambda_0=w_0$, with a slight downward shift, which aligns with Equation~\ref{Eq:optimal_w}.

\noindent\textbf{Training on 2000 Samples with Source Domain Pre-training.} To further assess the effects of source pre-training, we repeated the same experiment on a source pre-trained ResNet-18. The results are depicted in Figure~\ref{fig:app-target-2000-S}. This experiment provides empirical guidance on selecting $w_0$ in source domain pre-trained situations. The findings suggest that the optimal $w_0^*$ non-trivially shifts away from the line $\lambda_0=w_0$ towards lower-value regions. Considering the source pre-training process as using a greater quantity of source domain samples, it implies that when the number of source samples greatly exceeds target samples, a lower $w_0$ can enhance target domain results.

\noindent\textbf{Training on 500 Samples with Source Domain Pre-training.} We proceed to fine-tune the source domain pre-trained ResNet-18 using only 500 samples, thereby simulating active TTA settings. We train models with various $w_0$ and $\lambda_0$ pairings, then graph the target domain losses, source domain losses, and the combined losses. As shown in Figure~\ref{fig:app-target-500-S}, the target losses still comply with our theoretical deductions where the local minima are close to the line $\lambda_0=w_0$ and marginally shift towards lower values. Considering the challenge of CF, the source domain results in Figure~\ref{fig:app-source-500-S} suggest a reverse trend compared to the target domain, where lower $\lambda_0$ and $w_0$ values yield superior target domain results but inferior source domain results. Thus, to curb CF, the primary strategy is to maintain a relatively higher $\lambda_0$. When considering both target and source domains, a balance emerges as depicted in Figure~\ref{fig:app-target-source-500-S}. The global minimum is located in the middle region, demonstrating the trade-off between the target domain and source domain performance.

\begin{table}[!tp]\centering
\caption{\textbf{TTA comparisons on Office-Home.} This table includes the two data stream settings mentioned in the dataset setup and reports performances in accuracy. Results that outperform all TTA baselines are highlighted in \textbf{bold} font. N/A denotes the adaptations are not applied on the source domain.}\label{tab:tta-office-home}
\resizebox{1\textwidth}{!}{
\begin{tabular}{lccccccccccccccccc}
\toprule
\multirow{2}{*}{Office-Home}&\multicolumn{4}{c}{Domain-wise data stream} &\multicolumn{4}{c}{Post-adaptation} &\multicolumn{4}{c}{Random data stream} &\multicolumn{4}{c}{Post-adaptation} \\\cmidrule(r){2-5} \cmidrule(r){6-9} \cmidrule(r){10-13} \cmidrule(r){14-17}
&R &$\rightarrow$A$\rightarrow$ &$\rightarrow$C$\rightarrow$ &$\rightarrow$P &R &A &C &P &1 &2 &3 &4 &R &A &C &P \\\midrule
BN w/o adapt &93.78 &42.93 &37.62 &59.90 &93.78 &42.93 &37.62 &59.90 &46.82 &46.82 &46.82 &46.82 &93.78 &42.93 &37.62 &59.90 \\
BN w/ adapt &92.38 &49.69 &39.43 &63.53 &92.38 &49.69 &39.43 &63.53 &50.88 &50.88 &50.88 &50.88 &92.38 &49.69 &39.43 &63.53 \\
\midrule
Tent (steps=1) &N/A &49.61 &39.31 &63.87 &92.47 &49.57 &39.89 &63.89 &49.95 &50.27 &50.23 &52.06 &92.40 &49.24 &39.68 &63.98 \\
Tent (steps=10) &N/A &49.61 &39.04 &61.41 &87.08 &44.79 &38.37 &60.49 &50.05 &49.31 &48.74 &47.79 &85.31 &42.85 &37.89 &58.71 \\
EATA &N/A &49.65 &39.04 &63.53 &91.60 &49.61 &38.65 &63.48 &49.73 &50.27 &49.45 &51.07 &91.05 &49.11 &38.26 &62.99 \\
CoTTA &N/A &49.61 &38.76 &61.84 &87.81 &44.95 &35.92 &59.04 &49.84 &49.84 &48.95 &50.43 &86.99 &43.68 &34.73 &57.56 \\
SAR (steps=1) &N/A &49.65 &39.24 &63.53 &92.45 &49.73 &39.36 &63.69 &49.84 &50.05 &49.91 &51.67 &92.38 &49.57 &39.50 &63.87 \\
SAR (steps=10) &N/A &49.53 &38.81 &61.50 &88.94 &46.15 &37.04 &59.41 &50.09 &50.30 &49.77 &49.22 &89.14 &46.23 &36.31 &59.45 \\
\midrule
SimATTA ($\mathcal{B} \le 300$) &N/A &\textbf{56.20} &\textbf{48.38} &\textbf{71.66} &\textbf{95.75} &\textbf{60.07} &\textbf{52.62} &\textbf{74.70} &\textbf{58.57} &\textbf{60.88} &\textbf{62.91} &\textbf{63.67} &\textbf{95.89} &\textbf{62.01} &\textbf{54.98} &\textbf{74.70} \\
SimATTA ($\mathcal{B} \le 500$) &N/A &\textbf{58.71} &\textbf{51.11} &\textbf{74.36} &\textbf{96.03} &\textbf{62.05} &\textbf{57.41} &\textbf{76.98} &\textbf{58.85} &\textbf{62.63} &\textbf{63.41} &\textbf{64.31} &\textbf{95.91} &\textbf{63.78} &\textbf{57.87} &\textbf{77.09} \\
\bottomrule
\end{tabular}}\vspace{-0.1cm}
\end{table}

\begin{table}[!htp]\centering
\caption{\textbf{Comparisons to ADA baselines on Office-Home.} The source domain is denoted as "(S)" in the table. Results are average accuracies with standard deviations).}\label{tab:ada-office-home}
\begin{tabular}{lrrrrr}\toprule
Office-Home &R (S) &A &C &P \\\midrule
Random ($\mathcal{B} = 300$) &95.04 (0.20) &57.54 (1.16) &53.43 (1.17) &73.46 (0.97) \\
Entropy ($\mathcal{B} = 300$) &94.39 (0.49) &61.21 (0.71) &56.53 (0.71) &72.31 (0.28) \\
Kmeans ($\mathcal{B} = 300$) &95.09 (0.14) &57.37 (0.90) &51.74 (1.34) &71.81 (0.39) \\
CLUE ($\mathcal{B} = 300$) &95.20 (0.23) &60.18 (0.98) &58.05 (0.43) &73.72 (0.70) \\
Ours ($\mathcal{B} \le 300$) &95.82 (0.07) &61.04 (0.97) &53.80 (1.18) &74.70 (0.00) \\
\bottomrule
\end{tabular}
\end{table}

\section{Additional Experiment Results}\label{app:additional-experiment-results}

In this section, we provide additional experiment results. The Office-Home results and ablation studies will be presented in a similar way as the main paper. In the full results Sec.~\ref{app:full-experiment-results}, we will post more detailed experimental results with specific budget numbers and intermediate performance during the test-time adaptation.

\subsection{Results on Office-Home}\label{app:office-home-results}

We conduct experiments on Office-Home and get the test-time performances and post-adaptation performances for two data streams. As shown in Tab.~\ref{tab:tta-office-home}, SimATTA can outperform all TTA baselines with huge margins. Compared to ADA baselines under the source-free settings, as shown in Tab.~\ref{tab:ada-office-home}, SimATTA obtains comparable results.

\subsection{Ablation Studies}\label{app:ablation-studies}

\begin{figure}[htb!]
    \centering
    \resizebox{1\linewidth}{!}{
    \includegraphics{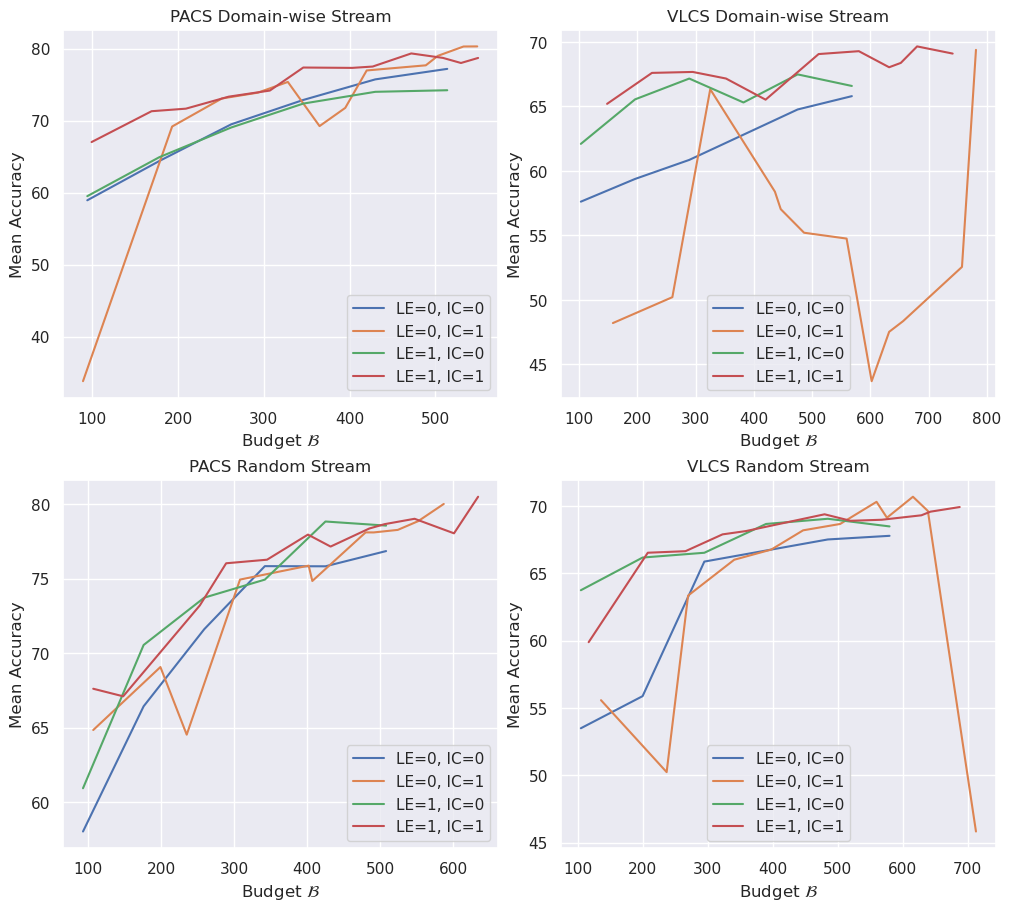}}
    \caption{\textbf{Ablation study on PACS and VLCS.} "IC=0" denotes removing incremental clustering (IC) selection. "LE=0" denotes removing the low-entropy (LE) sample training. Domain-wise stream and random stream are applied on first and second rows, respectively. The accuracy values are averaged across all splits/domains.}
    \label{fig:ablation}
\end{figure}


In this section, we explore three variations of our method to examine the individual impacts of its components. The first variant replaces the incremental clustering selection with entropy selection, where only the samples with the highest entropy are chosen. The second variant eliminates low-entropy sample training. The third variation combines the first and second variants. We perform this ablation study on the PACS and VLCS as outlined in Fig.~\ref{fig:ablation}. We denote the use of incremental clustering (IC) and low-entropy training (LE) respectively as IC=1 and LE=1.

The experiments essentially reveals the effectiveness of incremental clustering and low-entropy-sample training. As we have detailed in Sec.~\ref{sec:theorycata}, these techniques are designed to to select informative samples, increase distribution coverage, and mitigate catastrophic forgetting. These designs appositely serve the ATTA setting where the oracle has costs and the budget is limited. Therefore, their effectiveness is prominent particularly when the budget is small. As the results show, when the budget $\mathcal{B}\leq 100$ or $\mathcal{B}\leq 300$, removing the components observably impairs performances. When $\mathcal{B}$ gets large, more active samples cover a larger distribution; thus the performance gap from random selection and informative selection gets smaller. In the extreme case where $\mathcal{B}\rightarrow \infty$, all samples are selected and thus the superiority of our meticulously-designed techniques are not manifested.

Specifically, our analysis yields several insights. First, SimATTA (LE=1, IC=1) comprehensively outperforms other variants on both datasets, different streams, and different budgets. Second, variants without low-entropy training (LE=0, IC=0/1) easily fail to produce stable results (e.g., domain-wise stream in VLCS). Third, SimATTA's performance surpasses this variant on PACS's domain-wise stream clearly especially when the budgets are low. This indicates these variants fail to retrieve the most informative style shift (PACS's shifts) samples, which implies the advantage of incremental clustering when the budget is tight.

In addition, these results show that IC has its unique advantage on \textit{domain-wise streams} where distributions change abruptly instead of \textit{random streams}. Therefore, compared to PACS's domain-wise stream results, the reason for the smaller performance improvement of SimATTA over the variant (LE=1, IC=0) on VLCS's domain-wise stream is that images in VLCS are all photos that do not include those severe style shifts in PACS (\ie, art, cartoons, and sketches). That is, when the shift is not severe, we don't need IC to cover very different distributions, and selecting samples using entropy can produce good results. In brief, IC is extraordinary for severe distribution shifts and quick adaptation.

It is worth mentioning that low budget comparison is essential to show the informative sample retrieval ability, since as the budget increases, all AL techniques will tend to perform closely.

\subsection{Complete Experiment Results}\label{app:full-experiment-results}

We provide complete experimental results in this section. As shown in Tab.~\ref{tab:pacs-tent}, we present the full results for two data streams. The test-time adaptation accuracies are shown in the "Current domain" row, while the "Budgets" row denotes the used budget by the end of the domain. The rest four rows denote the four domain test results by the end of the  real-time adaptation of the current domain, where the first column results are the test accuracy before the test-time adaptation phase. N/A represents "do not apply".

\begin{table}[!htp]\centering
\caption{\textbf{Tent (steps=1) on PACS.}}\label{tab:pacs-tent}
\begin{tabular}{lccccccccc}\toprule
\multirow{2}{*}{Tent (steps=1)} &\multicolumn{4}{c}{Domain-wise data stream} &\multicolumn{4}{c}{Random data stream} \\\cmidrule{2-9}
&P &$\rightarrow$A$\rightarrow$ &$\rightarrow$C$\rightarrow$ &$\rightarrow$S &1$\rightarrow$ &$\rightarrow$2$\rightarrow$ &$\rightarrow$3$\rightarrow$ &$\rightarrow$4$\rightarrow$ \\\midrule
Current domain &N/A &67.29 &64.59 &44.67 &56.35 &54.09 &51.83 &48.58 \\
Budgets &N/A &N/A &N/A &N/A &N/A &N/A &N/A &N/A \\
P &99.70 &98.68 &98.38 &97.60 &98.56 &98.08 &97.72 &97.19 \\
A &59.38 &69.09 &68.95 &66.85 &68.07 &67.33 &65.58 &63.53 \\
C &28.03 &64.04 &65.19 &64.08 &64.85 &65.19 &62.97 &60.75 \\
S &42.91 &53.65 &47.39 &42.58 &54.57 &49.83 &44.13 &41.56 \\
\bottomrule
\end{tabular}
\end{table}

\begin{table}[!htp]\centering
\caption{\textbf{Tent (steps=10) on PACS.}}
\begin{tabular}{lccccccccc}\toprule
\multirow{2}{*}{Tent (steps=10)} &\multicolumn{4}{c}{Domain-wise data stream} &\multicolumn{4}{c}{Random data stream} \\\cmidrule{2-9}
&P &$\rightarrow$A$\rightarrow$ &$\rightarrow$C$\rightarrow$ &$\rightarrow$S &1$\rightarrow$ &$\rightarrow$2$\rightarrow$ &$\rightarrow$3$\rightarrow$ &$\rightarrow$4$\rightarrow$ \\\midrule
Current domain &N/A &67.38 &57.85 &20.23 &47.36 &31.01 &22.84 &20.33 \\
Budgets &N/A &N/A &N/A &N/A &N/A &N/A &N/A &N/A \\
P &99.70 &95.45 &87.43 &62.63 &93.83 &81.32 &65.39 &50.78 \\
A &59.38 &64.94 &55.03 &34.52 &55.32 &40.28 &28.27 &23.68 \\
C &28.03 &55.89 &56.70 &40.57 &54.52 &39.68 &27.22 &20.95 \\
S &42.91 &36.96 &26.27 &13.59 &32.25 &23.16 &20.95 &19.62 \\
\bottomrule
\end{tabular}
\end{table}

\begin{table}[!htp]\centering
\caption{\textbf{EATA on PACS.}}
\begin{tabular}{lccccccccc}\toprule
\multirow{2}{*}{EATA} &\multicolumn{4}{c}{Domain-wise data stream} &\multicolumn{4}{c}{Random data stream} \\\cmidrule{2-9}
&P &$\rightarrow$A$\rightarrow$ &$\rightarrow$C$\rightarrow$ &$\rightarrow$S &1$\rightarrow$ &$\rightarrow$2$\rightarrow$ &$\rightarrow$3$\rightarrow$ &$\rightarrow$4$\rightarrow$ \\\midrule
Current domain &N/A &67.04 &64.72 &50.27 &57.31 &56.06 &58.17 &59.78 \\
Budgets &N/A &N/A &N/A &N/A &N/A &N/A &N/A &N/A \\
P &99.70 &98.62 &98.50 &98.62 &98.68 &98.62 &98.50 &98.62 \\
A &59.38 &68.90 &68.16 &66.50 &68.65 &68.95 &69.34 &69.63 \\
C &28.03 &63.74 &65.36 &62.46 &65.19 &66.00 &65.57 &65.70 \\
S &42.91 &54.01 &52.89 &48.18 &55.71 &55.64 &54.09 &54.26 \\
\bottomrule
\end{tabular}
\end{table}

\begin{table}[!htp]\centering
\caption{\textbf{CoTTA on PACS.}}
\begin{tabular}{lccccccccc}\toprule
\multirow{2}{*}{CoTTA} &\multicolumn{4}{c}{Domain-wise data stream} &\multicolumn{4}{c}{Random data stream} \\\cmidrule{2-9}
&P &$\rightarrow$A$\rightarrow$ &$\rightarrow$C$\rightarrow$ &$\rightarrow$S &1$\rightarrow$ &$\rightarrow$2$\rightarrow$ &$\rightarrow$3$\rightarrow$ &$\rightarrow$4$\rightarrow$ \\\midrule
Current domain &N/A &65.48 &62.12 &53.17 &56.06 &54.33 &57.16 &57.42 \\
Budgets &N/A &N/A &N/A &N/A &N/A &N/A &N/A &N/A \\
P &99.70 &98.68 &98.62 &98.62 &98.62 &98.62 &98.56 &98.62 \\
A &59.38 &65.82 &65.87 &65.48 &66.02 &65.87 &66.31 &65.97 \\
C &28.03 &62.63 &63.05 &63.10 &63.01 &62.88 &63.01 &62.97 \\
S &42.91 &53.88 &54.03 &53.78 &54.67 &55.31 &55.10 &54.62 \\
\bottomrule
\end{tabular}
\end{table}

\begin{table}[!htp]\centering
\caption{\textbf{SAR (steps=1) on PACS.}}
\begin{tabular}{lccccccccc}\toprule
\multirow{2}{*}{SAR (steps=1)} &\multicolumn{4}{c}{Domain-wise data stream} &\multicolumn{4}{c}{Random data stream} \\\cmidrule{2-9}
&P &$\rightarrow$A$\rightarrow$ &$\rightarrow$C$\rightarrow$ &$\rightarrow$S &1$\rightarrow$ &$\rightarrow$2$\rightarrow$ &$\rightarrow$3$\rightarrow$ &$\rightarrow$4$\rightarrow$ \\\midrule
Current domain &N/A &66.75 &63.82 &49.58 &56.78 &56.35 &56.68 &56.70 \\
Budgets &N/A &N/A &N/A &N/A &N/A &N/A &N/A &N/A \\
P &99.70 &98.68 &98.50 &98.32 &98.74 &98.56 &98.50 &98.44 \\
A &59.38 &68.02 &68.07 &66.94 &67.87 &68.65 &68.55 &68.16 \\
C &28.03 &62.84 &64.97 &62.93 &63.82 &64.89 &64.46 &64.38 \\
S &42.91 &53.47 &52.07 &45.74 &54.92 &55.46 &53.68 &52.53 \\
\bottomrule
\end{tabular}
\end{table}

\begin{table}[!htp]\centering
\caption{\textbf{SAR (steps=10) on PACS.}}
\begin{tabular}{lccccccccc}\toprule
\multirow{2}{*}{SAR (steps=10)} &\multicolumn{4}{c}{Domain-wise data stream} &\multicolumn{4}{c}{Random data stream} \\\cmidrule{2-9}
&P &$\rightarrow$A$\rightarrow$ &$\rightarrow$C$\rightarrow$ &$\rightarrow$S &1$\rightarrow$ &$\rightarrow$2$\rightarrow$ &$\rightarrow$3$\rightarrow$ &$\rightarrow$4$\rightarrow$ \\\midrule
Current domain &N/A &69.38 &68.26 &49.02 &53.51 &51.15 &51.78 &45.60 \\
Budgets &N/A &N/A &N/A &N/A &N/A &N/A &N/A &N/A \\
P &99.70 &98.20 &95.39 &96.47 &97.13 &97.78 &97.72 &94.13 \\
A &59.38 &72.36 &66.60 &62.16 &62.74 &64.94 &66.11 &56.64 \\
C &28.03 &63.44 &68.30 &56.19 &59.77 &61.73 &62.03 &56.02 \\
S &42.91 &53.37 &44.59 &54.62 &41.00 &49.66 &48.79 &36.37 \\
\bottomrule
\end{tabular}
\end{table}

\begin{table}[!htp]\centering
\caption{\textbf{SimATTA ($\mathcal{B} \le 300$) on PACS.}}
\begin{tabular}{lccccccccc}\toprule
\multirow{2}{*}{SimATTA ($\mathcal{B} \le 300$)} &\multicolumn{4}{c}{Domain-wise data stream} &\multicolumn{4}{c}{Random data stream} \\\cmidrule{2-9}
&P &$\rightarrow$A$\rightarrow$ &$\rightarrow$C$\rightarrow$ &$\rightarrow$S &1$\rightarrow$ &$\rightarrow$2$\rightarrow$ &$\rightarrow$3$\rightarrow$ &$\rightarrow$4$\rightarrow$ \\\midrule
Current domain &N/A &76.86 &70.90 &75.39 &69.47 &76.49 &82.45 &82.22 \\
Budgets &N/A &75 &145 &223 &66 &142 &203 &267 \\
P &99.70 &98.44 &98.86 &98.80 &97.96 &98.68 &99.04 &98.98 \\
A &59.38 &80.71 &82.32 &84.47 &73.97 &80.52 &81.10 &84.91 \\
C &28.03 &48.12 &82.00 &82.25 &72.35 &81.06 &83.36 &83.92 \\
S &42.91 &32.78 &56.25 &81.52 &79.49 &83.10 &84.78 &86.00 \\
\bottomrule
\end{tabular}
\end{table}

\begin{table}[!htp]\centering
\caption{\textbf{SimATTA ($\mathcal{B} \le 500$) on PACS.}}
\begin{tabular}{lccccccccc}\toprule
\multirow{2}{*}{SimATTA ($\mathcal{B} \le 500$)} &\multicolumn{4}{c}{Domain-wise data stream} &\multicolumn{4}{c}{Random data stream} \\\cmidrule{2-9}
&P &$\rightarrow$A$\rightarrow$ &$\rightarrow$C$\rightarrow$ &$\rightarrow$S &1$\rightarrow$ &$\rightarrow$2$\rightarrow$ &$\rightarrow$3$\rightarrow$ &$\rightarrow$4$\rightarrow$ \\\midrule
Current domain &N/A &77.93 &76.02 &76.30 &68.46 &78.22 &80.91 &85.49 \\
Budgets &N/A &121 &230 &358 &102 &221 &343 &425 \\
P &99.70 &98.92 &98.86 &98.62 &98.20 &99.46 &99.10 &99.16 \\
A &59.38 &87.01 &87.60 &88.33 &73.39 &79.20 &84.91 &86.67 \\
C &28.03 &54.78 &83.96 &83.49 &68.43 &74.40 &84.22 &84.77 \\
S &42.91 &46.37 &63.53 &83.74 &81.34 &81.04 &86.66 &87.71 \\
\bottomrule
\end{tabular}
\end{table}

\begin{table}[!htp]\centering
\caption{\textbf{Tent (steps=1) on VLCS.}}
\begin{tabular}{lccccccccc}\toprule
\multirow{2}{*}{Tent (steps=1)} &\multicolumn{4}{c}{Domain-wise data stream} &\multicolumn{4}{c}{Random data stream} \\\cmidrule{2-9}
&C &$\rightarrow$L$\rightarrow$ &$\rightarrow$S$\rightarrow$ &$\rightarrow$V &1$\rightarrow$ &$\rightarrow$2$\rightarrow$ &$\rightarrow$3$\rightarrow$ &$\rightarrow$4$\rightarrow$ \\\midrule
Current domain &N/A &38.55 &34.40 &53.88 &44.85 &44.29 &47.38 &44.98 \\
Budgets &N/A &N/A &N/A &N/A &N/A &N/A &N/A &N/A \\
C &100.00 &84.81 &85.44 &84.73 &84.95 &85.16 &85.80 &85.30 \\
L &33.55 &40.02 &43.11 &43.86 &39.68 &41.98 &43.11 &43.49 \\
S &41.10 &33.39 &35.41 &33.61 &36.29 &37.90 &38.27 &37.81 \\
V &49.08 &53.20 &54.06 &53.11 &53.76 &54.18 &53.76 &53.35 \\
\bottomrule
\end{tabular}
\end{table}

\begin{table}[!htp]\centering
\caption{\textbf{Tent (steps=10) on VLCS.}}
\begin{tabular}{lccccccccc}\toprule
\multirow{2}{*}{Tent (steps=10)} &\multicolumn{4}{c}{Domain-wise data stream} &\multicolumn{4}{c}{Random data stream} \\\cmidrule{2-9}
&C &$\rightarrow$L$\rightarrow$ &$\rightarrow$S$\rightarrow$ &$\rightarrow$V &1$\rightarrow$ &$\rightarrow$2$\rightarrow$ &$\rightarrow$3$\rightarrow$ &$\rightarrow$4$\rightarrow$ \\\midrule
Current domain &N/A &45.41 &31.44 &32.32 &46.13 &42.31 &43.51 &39.48 \\
Budgets &N/A &N/A &N/A &N/A &N/A &N/A &N/A &N/A \\
C &100.00 &73.07 &48.34 &42.54 &74.13 &62.19 &56.54 &52.01 \\
L &33.55 &46.61 &38.44 &37.65 &44.88 &45.93 &43.41 &40.32 \\
S &41.10 &31.75 &28.82 &27.79 &35.37 &36.14 &35.28 &33.64 \\
V &49.08 &48.05 &40.14 &33.12 &50.50 &44.49 &42.48 &40.37 \\
\bottomrule
\end{tabular}
\end{table}

\begin{table}[!htp]\centering
\caption{\textbf{EATA on VLCS.}}
\begin{tabular}{lccccccccc}\toprule
\multirow{2}{*}{EATA} &\multicolumn{4}{c}{Domain-wise data stream} &\multicolumn{4}{c}{Random data stream} \\\cmidrule{2-9}
&C &$\rightarrow$L$\rightarrow$ &$\rightarrow$S$\rightarrow$ &$\rightarrow$V &1$\rightarrow$ &$\rightarrow$2$\rightarrow$ &$\rightarrow$3$\rightarrow$ &$\rightarrow$4$\rightarrow$ \\\midrule
Current domain &N/A &37.24 &33.15 &52.58 &43.77 &42.48 &43.34 &41.55 \\
Budgets &N/A &N/A &N/A &N/A &N/A &N/A &N/A &N/A \\
C &100.00 &85.16 &85.02 &84.10 &84.73 &84.52 &84.10 &83.32 \\
L &33.55 &37.16 &37.24 &37.69 &37.09 &36.78 &36.90 &36.67 \\
S &41.10 &33.39 &33.49 &32.39 &33.33 &32.54 &31.84 &31.47 \\
V &49.08 &51.87 &52.16 &52.49 &52.07 &52.43 &52.64 &52.55 \\
\bottomrule
\end{tabular}
\end{table}

\begin{table}[!htp]\centering
\caption{\textbf{CoTTA on VLCS.}}
\begin{tabular}{lccccccccc}\toprule
\multirow{2}{*}{CoTTA} &\multicolumn{4}{c}{Domain-wise data stream} &\multicolumn{4}{c}{Random data stream} \\\cmidrule{2-9}
&C &$\rightarrow$L$\rightarrow$ &$\rightarrow$S$\rightarrow$ &$\rightarrow$V &1$\rightarrow$ &$\rightarrow$2$\rightarrow$ &$\rightarrow$3$\rightarrow$ &$\rightarrow$4$\rightarrow$ \\\midrule
Current domain &N/A &37.39 &32.54 &52.25 &43.69 &42.14 &43.21 &42.32 \\
Budgets &N/A &N/A &N/A &N/A &N/A &N/A &N/A &N/A \\
C &100.00 &81.55 &81.98 &82.12 &82.61 &82.47 &82.12 &81.98 \\
L &33.55 &37.20 &37.91 &37.65 &38.48 &38.22 &38.40 &37.99 \\
S &41.10 &30.71 &32.78 &33.12 &34.00 &33.70 &33.97 &33.52 \\
V &49.08 &52.01 &52.64 &52.90 &53.64 &53.14 &53.08 &53.23 \\
\bottomrule
\end{tabular}
\end{table}

\begin{table}[!htp]\centering
\caption{\textbf{SAR (steps=1) on VLCS.}}
\begin{tabular}{lccccccccc}\toprule
\multirow{2}{*}{SAR (steps=1)} &\multicolumn{4}{c}{Domain-wise data stream} &\multicolumn{4}{c}{Random data stream} \\\cmidrule{2-9}
&C &$\rightarrow$L$\rightarrow$ &$\rightarrow$S$\rightarrow$ &$\rightarrow$V &1$\rightarrow$ &$\rightarrow$2$\rightarrow$ &$\rightarrow$3$\rightarrow$ &$\rightarrow$4$\rightarrow$ \\\midrule
Current domain &N/A &36.18 &34.43 &52.46 &43.64 &43.04 &44.20 &41.93 \\
Budgets &N/A &N/A &N/A &N/A &N/A &N/A &N/A &N/A \\
C &100.00 &84.31 &84.17 &83.96 &85.09 &85.23 &85.23 &85.09 \\
L &33.55 &35.62 &38.29 &39.72 &38.55 &39.34 &40.21 &40.70 \\
S &41.10 &33.24 &36.41 &36.53 &34.37 &35.62 &36.29 &36.44 \\
V &49.08 &51.75 &52.61 &52.37 &52.90 &52.75 &53.05 &53.02 \\
\bottomrule
\end{tabular}
\end{table}

\begin{table}[!htp]\centering
\caption{\textbf{SAR (steps=10) on VLCS.}}
\begin{tabular}{lccccccccc}\toprule
\multirow{2}{*}{SAR (steps=10)} &\multicolumn{4}{c}{Domain-wise data stream} &\multicolumn{4}{c}{Random data stream} \\\cmidrule{2-9}
&C &$\rightarrow$L$\rightarrow$ &$\rightarrow$S$\rightarrow$ &$\rightarrow$V &1$\rightarrow$ &$\rightarrow$2$\rightarrow$ &$\rightarrow$3$\rightarrow$ &$\rightarrow$4$\rightarrow$ \\\midrule
Current domain &N/A &35.32 &34.10 &51.66 &43.56 &42.05 &42.53 &41.16 \\
Budgets &N/A &N/A &N/A &N/A &N/A &N/A &N/A &N/A \\
C &100.00 &83.96 &83.04 &82.12 &84.03 &84.24 &85.23 &85.09 \\
L &33.55 &34.07 &35.92 &41.49 &39.53 &38.37 &37.65 &37.58 \\
S &41.10 &31.93 &34.89 &33.94 &35.19 &32.94 &33.88 &33.12 \\
V &49.08 &51.33 &51.51 &53.08 &52.78 &52.34 &51.78 &52.01 \\
\bottomrule
\end{tabular}
\end{table}

\begin{table}[!htp]\centering
\caption{\textbf{SimATTA ($\mathcal{B} \le 300$) on VLCS.}}
\begin{tabular}{lccccccccc}\toprule
\multirow{2}{*}{SimATTA ($\mathcal{B} \le 300$)} &\multicolumn{4}{c}{Domain-wise data stream} &\multicolumn{4}{c}{Random data stream} \\\cmidrule{2-9}
&C &$\rightarrow$L$\rightarrow$ &$\rightarrow$S$\rightarrow$ &$\rightarrow$V &1$\rightarrow$ &$\rightarrow$2$\rightarrow$ &$\rightarrow$3$\rightarrow$ &$\rightarrow$4$\rightarrow$ \\\midrule
Current domain &N/A &62.61 &65.08 &74.38 &62.33 &69.33 &73.20 &71.93 \\
Budgets &N/A &79 &175 &272 &71 &135 &208 &262 \\
C &100.00 &99.51 &98.52 &99.93 &99.86 &99.79 &100.00 &99.93 \\
L &33.55 &68.11 &69.92 &69.50 &62.61 &66.64 &68.45 &69.43 \\
S &41.10 &55.24 &68.89 &66.67 &65.54 &69.29 &71.79 &72.46 \\
V &49.08 &66.08 &70.94 &77.34 &73.79 &76.87 &78.82 &80.39 \\
\bottomrule
\end{tabular}
\end{table}

\begin{table}[!htp]\centering
\caption{\textbf{SimATTA ($\mathcal{B} \le 500$) on VLCS.}}
\begin{tabular}{lccccccccc}\toprule
\multirow{2}{*}{SimATTA ($\mathcal{B} \le 500$)} &\multicolumn{4}{c}{Domain-wise data stream} &\multicolumn{4}{c}{Random data stream} \\\cmidrule{2-9}
&C &$\rightarrow$L$\rightarrow$ &$\rightarrow$S$\rightarrow$ &$\rightarrow$V &1$\rightarrow$ &$\rightarrow$2$\rightarrow$ &$\rightarrow$3$\rightarrow$ &$\rightarrow$4$\rightarrow$ \\\midrule
Current domain &N/A &63.52 &68.01 &76.13 &62.29 &70.45 &73.50 &72.02 \\
Budgets &N/A &113 &266 &446 &107 &203 &283 &356 \\
C &100.00 &99.29 &98.59 &99.51 &99.93 &99.86 &99.86 &99.43 \\
L &33.55 &62.95 &70.63 &70.56 &66.57 &67.09 &67.24 &70.29 \\
S &41.10 &51.31 &73.83 &73.10 &65.33 &71.79 &72.91 &72.55 \\
V &49.08 &59.36 &71.65 &78.35 &73.58 &77.84 &80.01 &80.18 \\
\bottomrule
\end{tabular}
\end{table}

\begin{table}[!htp]\centering
\caption{\textbf{Tent (steps=1) on Office-Home.}}
\begin{tabular}{lccccccccc}\toprule
\multirow{2}{*}{Tent (steps=1)} &\multicolumn{4}{c}{Domain-wise data stream} &\multicolumn{4}{c}{Random data stream} \\\cmidrule{2-9}
&R &$\rightarrow$A$\rightarrow$ &$\rightarrow$C$\rightarrow$ &$\rightarrow$P &1$\rightarrow$ &$\rightarrow$2$\rightarrow$ &$\rightarrow$3$\rightarrow$ &$\rightarrow$4$\rightarrow$ \\\midrule
Current domain &N/A &49.61 &39.31 &63.87 &49.95 &50.27 &50.23 &52.06 \\
Budgets &N/A &N/A &N/A &N/A &N/A &N/A &N/A &N/A \\
R &96.44 &92.33 &92.36 &92.47 &92.38 &92.45 &92.45 &92.40 \\
A &57.07 &49.73 &49.73 &49.57 &49.69 &49.73 &49.57 &49.24 \\
C &44.97 &39.27 &39.54 &39.89 &39.45 &39.68 &39.73 &39.68 \\
P &73.15 &63.60 &63.66 &63.89 &63.60 &63.82 &63.93 &63.98 \\
\bottomrule
\end{tabular}
\end{table}

\begin{table}[!htp]\centering
\caption{\textbf{Tent (steps=10) on Office-Home.}}
\begin{tabular}{lccccccccc}\toprule
\multirow{2}{*}{Tent (steps=10)} &\multicolumn{4}{c}{Domain-wise data stream} &\multicolumn{4}{c}{Random data stream} \\\cmidrule{2-9}
&R &$\rightarrow$A$\rightarrow$ &$\rightarrow$C$\rightarrow$ &$\rightarrow$P &1$\rightarrow$ &$\rightarrow$2$\rightarrow$ &$\rightarrow$3$\rightarrow$ &$\rightarrow$4$\rightarrow$ \\\midrule
Current domain &N/A &49.61 &39.04 &61.41 &50.05 &49.31 &48.74 &47.79 \\
Budgets &N/A &N/A &N/A &N/A &N/A &N/A &N/A &N/A \\
R &96.44 &91.99 &89.14 &87.08 &92.08 &90.80 &88.59 &85.31 \\
A &57.07 &49.94 &46.77 &44.79 &49.44 &48.21 &45.69 &42.85 \\
C &44.97 &38.58 &39.11 &38.37 &40.18 &40.02 &38.63 &37.89 \\
P &73.15 &63.28 &61.03 &60.49 &64.36 &63.64 &61.12 &58.71 \\
\bottomrule
\end{tabular}
\end{table}

\begin{table}[!htp]\centering
\caption{\textbf{EATA on Office-Home.}}
\begin{tabular}{lccccccccc}\toprule
\multirow{2}{*}{EATA} &\multicolumn{4}{c}{Domain-wise data stream} &\multicolumn{4}{c}{Random data stream} \\\cmidrule{2-9}
&R &$\rightarrow$A$\rightarrow$ &$\rightarrow$C$\rightarrow$ &$\rightarrow$P &1$\rightarrow$ &$\rightarrow$2$\rightarrow$ &$\rightarrow$3$\rightarrow$ &$\rightarrow$4$\rightarrow$ \\\midrule
Current domain &N/A &49.65 &39.04 &63.53 &49.73 &50.27 &49.45 &51.07 \\
Budgets &N/A &N/A &N/A &N/A &N/A &N/A &N/A &N/A \\
R &96.44 &92.36 &92.17 &91.60 &92.38 &92.22 &91.71 &91.05 \\
A &57.07 &49.57 &49.53 &49.61 &49.69 &49.40 &49.36 &49.11 \\
C &44.97 &39.08 &39.01 &38.65 &39.27 &39.01 &38.42 &38.26 \\
P &73.15 &63.42 &63.42 &63.48 &63.51 &63.37 &63.33 &62.99 \\
\bottomrule
\end{tabular}
\end{table}

\begin{table}[!htp]\centering
\caption{\textbf{CoTTA on Office-Home.}}
\begin{tabular}{lccccccccc}\toprule
\multirow{2}{*}{CoTTA} &\multicolumn{4}{c}{Domain-wise data stream} &\multicolumn{4}{c}{Random data stream} \\\cmidrule{2-9}
&R &$\rightarrow$A$\rightarrow$ &$\rightarrow$C$\rightarrow$ &$\rightarrow$P &1$\rightarrow$ &$\rightarrow$2$\rightarrow$ &$\rightarrow$3$\rightarrow$ &$\rightarrow$4$\rightarrow$ \\\midrule
Current domain &N/A &49.61 &38.76 &61.84 &49.84 &49.84 &48.95 &50.43 \\
Budgets &N/A &N/A &N/A &N/A &N/A &N/A &N/A &N/A \\
R &96.44 &90.38 &88.02 &87.81 &90.48 &89.37 &88.00 &86.99 \\
A &57.07 &48.58 &45.53 &44.95 &47.34 &46.35 &44.62 &43.68 \\
C &44.97 &36.66 &35.58 &35.92 &37.55 &36.40 &35.44 &34.73 \\
P &73.15 &60.40 &57.74 &59.04 &61.12 &59.63 &58.35 &57.56 \\
\bottomrule
\end{tabular}
\end{table}

\begin{table}[!htp]\centering
\caption{\textbf{SAR (steps=1) on Office-Home.}}
\begin{tabular}{lccccccccc}\toprule
\multirow{2}{*}{SAR (steps=1)} &\multicolumn{4}{c}{Domain-wise data stream} &\multicolumn{4}{c}{Random data stream} \\\cmidrule{2-9}
&R &$\rightarrow$A$\rightarrow$ &$\rightarrow$C$\rightarrow$ &$\rightarrow$P &1$\rightarrow$ &$\rightarrow$2$\rightarrow$ &$\rightarrow$3$\rightarrow$ &$\rightarrow$4$\rightarrow$ \\\midrule
Current domain &N/A &49.65 &39.24 &63.53 &49.84 &50.05 &49.91 &51.67 \\
Budgets &N/A &N/A &N/A &N/A &N/A &N/A &N/A &N/A \\
R &96.44 &92.38 &92.31 &92.45 &92.40 &92.36 &92.36 &92.38 \\
A &57.07 &49.65 &49.57 &49.73 &49.69 &49.61 &49.57 &49.57 \\
C &44.97 &39.34 &39.22 &39.36 &39.34 &39.56 &39.47 &39.50 \\
P &73.15 &63.51 &63.51 &63.69 &63.60 &63.71 &63.71 &63.87 \\
\bottomrule
\end{tabular}
\end{table}

\begin{table}[!htp]\centering
\caption{\textbf{SAR (steps=10) on Office-Home.}}
\begin{tabular}{lccccccccc}\toprule
\multirow{2}{*}{SAR (steps=10)} &\multicolumn{4}{c}{Domain-wise data stream} &\multicolumn{4}{c}{Random data stream} \\\cmidrule{2-9}
&R &$\rightarrow$A$\rightarrow$ &$\rightarrow$C$\rightarrow$ &$\rightarrow$P &1$\rightarrow$ &$\rightarrow$2$\rightarrow$ &$\rightarrow$3$\rightarrow$ &$\rightarrow$4$\rightarrow$ \\\midrule
Current domain &N/A &49.53 &38.81 &61.50 &50.09 &50.30 &49.77 &49.22 \\
Budgets &N/A &N/A &N/A &N/A &N/A &N/A &N/A &N/A \\
R &96.44 &92.20 &92.06 &88.94 &92.40 &92.47 &91.53 &89.14 \\
A &57.07 &49.40 &49.77 &46.15 &49.81 &50.02 &48.91 &46.23 \\
C &44.97 &39.20 &38.63 &37.04 &39.50 &39.29 &38.65 &36.31 \\
P &73.15 &63.53 &62.69 &59.41 &64.18 &64.18 &62.83 &59.45 \\
\bottomrule
\end{tabular}
\end{table}

\begin{table}[!htp]\centering
\caption{\textbf{SimATTA ($\mathcal{B} \le 300$) on Office-Home.}}
\begin{tabular}{lccccccccc}\toprule
\multirow{2}{*}{SimATTA ($\mathcal{B} \le 300$)} &\multicolumn{4}{c}{Domain-wise data stream} &\multicolumn{4}{c}{Random data stream} \\\cmidrule{2-9}
&R &$\rightarrow$A$\rightarrow$ &$\rightarrow$C$\rightarrow$ &$\rightarrow$P &1$\rightarrow$ &$\rightarrow$2$\rightarrow$ &$\rightarrow$3$\rightarrow$ &$\rightarrow$4$\rightarrow$ \\\midrule
Current domain &N/A &56.20 &48.38 &71.66 &58.57 &60.88 &62.91 &63.67 \\
Budgets &N/A &75 &187 &277 &79 &147 &216 &278 \\
R &96.44 &95.43 &95.43 &95.75 &95.91 &95.96 &96.01 &95.89 \\
A &57.07 &57.56 &59.50 &60.07 &58.34 &59.91 &61.15 &62.01 \\
C &44.97 &42.25 &52.46 &52.62 &51.66 &52.30 &54.75 &54.98 \\
P &73.15 &68.84 &70.13 &74.70 &72.45 &73.10 &74.50 &74.70 \\
\bottomrule
\end{tabular}
\end{table}

\begin{table}[!htp]\centering
\caption{\textbf{SimATTA ($\mathcal{B} \le 500$) on Office-Home.}}
\begin{tabular}{lccccccccc}\toprule
\multirow{2}{*}{SimATTA ($\mathcal{B} \le 500$)} &\multicolumn{4}{c}{Domain-wise data stream} &\multicolumn{4}{c}{Random data stream} \\\cmidrule{2-9}
&R &$\rightarrow$A$\rightarrow$ &$\rightarrow$C$\rightarrow$ &$\rightarrow$P &1$\rightarrow$ &$\rightarrow$2$\rightarrow$ &$\rightarrow$3$\rightarrow$ &$\rightarrow$4$\rightarrow$ \\\midrule
Current domain &N/A &58.71 &51.11 &74.36 &58.85 &62.63 &63.41 &64.31 \\
Budgets &N/A &107 &284 &440 &126 &248 &361 &467 \\
R &96.44 &95.69 &95.71 &96.03 &96.26 &96.19 &95.87 &95.91 \\
A &57.07 &61.43 &61.43 &62.05 &58.18 &61.15 &61.52 &63.78 \\
C &44.97 &46.41 &57.73 &57.41 &53.17 &55.14 &56.79 &57.87 \\
P &73.15 &70.74 &71.98 &76.98 &73.51 &74.18 &75.78 &77.09 \\
\bottomrule
\end{tabular}
\end{table}

\section{Challenges and Perspectives}\label{app:challenges-and-discussions}

Despite advancements, test-time adaptation continues to pose considerable challenges. As previously discussed, without supplementary information and assumptions, the ability to guarantee model generalization capabilities is limited. However, this is not unexpected given that recent progress in deep learning heavily relies on large-scale data. Consequently, two promising paths emerge: establishing credible assumptions and leveraging additional information.

Firstly, developing credible assumptions can lead to comprehensive comparisons across various studies. Given that theoretical guarantees highlight the inherent differences between methods primarily based on the application limits of their assumptions, comparing these assumptions becomes critical. Without such comparative studies, empirical evaluations may lack precise guidance and explanation.

Secondly, while we acknowledge the value of real-world data (observations), discussions surrounding the use of extra information remain pertinent. Considerations include the strategies to acquire this supplementary information and the nature of the additional data needed.

Despite the myriad of works on domain generalization, domain adaptation, and test-time adaptation, a comprehensive survey or benchmark encapsulating the aforementioned comparisons remains an unmet need. Moreover, potential future directions for out-of-distribution generalization extend beyond domain generalization and test-time adaptation. One promising avenue is bridging the gap between causal inference and deep learning, for instance, through causal representation learning.

In conclusion, our hope is that this work not only offers a novel practical setting and algorithm but also illuminates meaningful future directions and research methodologies that can benefit the broader scientific community.

\end{document}